\definecolor{electricpurple}{rgb}{0.75, 0.0, 1.0}
\definecolor{darkbrown}{rgb}{0.4, 0.26, 0.13}
\definecolor{cocoabrown}{rgb}{0.82, 0.41, 0.12}
\definecolor{copper}{rgb}{0.72, 0.45, 0.2}
\definecolor{chocolate}{rgb}{0.82, 0.41, 0.12}
\tikzstyle{decision}=[diamond,draw]
\tikzstyle{line}=[draw]
\tikzstyle{elli}=[draw,ellipse]
\tikzstyle{arrow} = [thick]
\newcommand{\I}{\mathcal{I}}
\newcommand{\Ifeat}{\mathcal{I}^{\text{f}}}
\newcommand{\Iconv}{\mathcal{I}^{\text{cv}}}
\newcommand{\din}{d_{\text{in}}}
\newcommand{\dnet}{d_{\text{net}}}
\newcommand{\Tdgn}{\Theta^{\text{DGN}}}
\newcommand{\Tv}{{\Theta}^{\text{v}}}
\newcommand{\Tf}{\Theta^{\textrm{f}}}
\newcommand{\dc}{d_{\text{cv}}}
\newcommand{\dconv}{d_{\text{cv}}}
\newcommand{\dfc}{d_{\text{fc}}}
\newcommand{\dblock}{d_{\text{blk}}}
\newcommand{\wconv}{w_{\text{cv}}}
\newcommand{\ifout}{i_{\text{fout}}}
\newcommand{\icin}{i_{\text{cv}}}
\newcommand{\iin}{i_{\text{in}}}
\newcommand{\iout}{i_{\text{out}}}
\newcommand{\xv}{x^{\text{v}}}
\newcommand{\xf}{x^{\text{f}}}
\newcommand{\Pres}{P^{\text{res}}}
\newcommand{\Pfc}{P^{\text{fc}}}
\newcommand{\Pcnn}{P^{\text{cnn}}}
\newcommand{\eqdef}{\stackrel{\Delta}{=}}
\newcommand{\cscale}{c_{\text{scale}}}
\newcommand{\sigfc}{\sigma_{\text{fc}}}
\newcommand{\sigcnn}{\sigma_{\text{cv}}}
\newcommand{\bfc}{\beta_{\text{fc}}}
\newcommand{\bcnn}{\beta_{\text{cv}}}
\newcommand{\bres}{\beta_{\text{res}}}
\newtheorem{theorem}{Theorem}[section]
\newtheorem{lemma}{Lemma}[section]
\newtheorem{proposition}{Proposition}[section]
\newtheorem{assumption}{Assumption}[section]
\newtheorem{definition}{Definition}[section]
\newtheorem{notation}{Notation}[section]
\newcommand{\J}{\mathcal{J}}
\newcommand{\ip}[1]{\langle #1\rangle}
\def\R{\mathbb{R}}
\newcounter{subequation}[equation]
\def\mathdisplay#1{%
  \ifmmode \@badmath
  \else
    $\def\@currenvir{#1}%
    \let\dspbrk@context\z@
    \let\tag\tag@in@display \SK@equationtrue 
    \global\let\df@label\@empty \global\let\df@tag\@empty
    \global\tag@false
    \let\mathdisplay@push\mathdisplay@@push
    \let\mathdisplay@pop\mathdisplay@@pop
    \if@fleqn
      \edef\restore@hfuzz{\hfuzz\the\hfuzz\relax}%
      \hfuzz\maxdimen
      \setbox\z@\hbox to\displaywidth\bgroup
        \let\split@warning\relax \restore@hfuzz
        \everymath\@emptytoks \m@th $\displaystyle
    \fi
}
\title{Disentangling deep neural networks with rectified linear units using duality}
\author{ Chandrashekar Lakshminarayanan\\
Indian Institute of Technology Madras\\
\texttt{chandrashekar@cse.iitm.ac.in}\\
\And
Amit Vikram Singh\\
\texttt{amitkvikram@gmail.com}\\
}
\begin{document}
\maketitle

\begin{abstract}
Despite their success deep neural networks (DNNs) are still largely considered as black boxes. The main issue is that the linear and non-linear operations are entangled in every layer, making it hard to interpret the hidden layer outputs. In this paper, we look at DNNs with rectified linear units (ReLUs), and focus on the gating property (`on/off' states) of the ReLUs. We extend the recently developed dual view in which the computation is broken path-wise to show that learning in the gates is more crucial, and learning the weights given the gates is characterised analytically via the so called \emph{neural path kernel} (NPK) which depends on inputs and gates. In this paper, we present novel results to show that convolution with global pooling and skip connection provide respectively rotational invariance and ensemble structure to the NPK. To address `black box'-ness, we propose a novel interpretable counterpart of DNNs with ReLUs namely deep linearly gated networks (DLGN): the pre-activations to the gates are generated by a deep linear network, and the gates are then applied as external masks to learn the weights in a different network. The DLGN is not an alternative architecture per se, but a disentanglement and an interpretable re-arrangement of the computations in a DNN with ReLUs. The DLGN disentangles the computations into two  `mathematically' interpretable linearities (i) the `primal' linearity between the input and the pre-activations in the gating network and (ii) the `dual' linearity in the path space in the weights network characterised by the NPK. We compare the performance of DNN, DGN and DLGN on CIFAR-10 and CIFAR-100 to show that, the DLGN recovers more than $83.5\%$ of the performance of state-of-the-art DNNs, i.e., while entanglement in the DNNs enable their improved performance,  the `disentangled and interpretable'  computations in the DLGN recovers most part of the performance. This brings us to an interesting question: `Is DLGN a universal spectral approximator?'
 \end{abstract}

\section{Introduction}\label{sec:intro}
Despite their success deep neural networks (DNNs) are still largely considered as black boxes.  The main issue is that in each layer of a DNN, the linear computation, i.e., multiplication by the weight matrix  and the non-linear activations are entangled. Such entanglement has its \emph{pros} and \emph{cons}. The commonly held view  is that such entanglement is the key to success of DNNs, in that, it allows DNNs to learn sophisticated structures in a layer-by-layer manner. However, in terms of interpretability, such entanglement has an adverse effect: only the final layer is linear and amenable to a feature/weight interpretation, and the hidden layers are non-interpretable due to the non-linearities. 

Prior works \citep{ntk,arora2019exact,cao2019generalization} showed that training an infinite width DNN with gradient descent is equivalent to a kernel method with the so called \emph{neural tangent kernel} matrix. As a pure kernel method, the NTK matrix performed better than other pure kernel methods. However, in relation to `black box'-ness, there are two issues with NTK theory: (i) \textbf{Issue I:} Infinite width NTK matrix does not explain fully the success of DNNs because it was observed that finite width DNNs outperform their infinite width NTK counterparts, and it was an open question to understand this performance gap \citep{arora2019exact}, and (ii) \textbf{Issue II:} Since the NTK is based on the gradients, it does not offer further insights about the inner workings of DNNs even for infinite width.

A dual view for DNNs with rectified linear units (ReLUs) was recently developed by \cite{npk} who exploited the gating property (i.e., `on/off' states) of the ReLUs. The dual view is essentially \emph{linearity in the path space}, i.e., the output is the summation of path contributions.  While the weights in a path are the same for each input, whether or not a path contributes to the output is entirely dictated by the gates in the path, which are `on/off' based on the input. To understand the role of the gates, a deep gated network (DGN) (see \Cref{fig:lgln}) was used to \emph{disentangle} the \emph{learning in the gates} from the \emph{learning in weights}. In a DGN,  the gates are generated (and learnt) in a `gating network' which is a DNN with ReLUs and are applied as external signals and the weights are learnt in a `weight network' consisting of \emph{gated linear units} (GaLUs) \citep{sss}. Each GaLU multiplies its pre-activation and the external gating signal. Using the DGN, two important insights were provided:  (i) learning in the gates is the most crucial for finite width networks to outperform infinite width NTK; this addresses \textbf{Issue I}, and (ii) in the limit of  infinite width, learning the weights with fixed gates, the NTK is equal to (but for a scalar) a so called neural path kernel (NPK) which is a kernel solely based on inputs and gates. This shifts \textbf{Issue II} on interpretability to that of interpretability of the gates as opposed to interpretability of the gradients.
\begin{figure}[!t]
\centering
\begin{minipage}{0.7\columnwidth}
\centering
\begin{minipage}{0.49\columnwidth}
\centering

\resizebox{0.8\columnwidth}{!}{
\begin{tikzpicture}

\node []  (fntext)at (5,1+0.25) {\tiny{Deep Gated Network}};
\node [draw,
	minimum width=2cm,
	minimum height=0.625cm,
]  (fnbox)at (5,0.375+0.25) {};
\node []  (fntext)at (5,0.5+0.25) {\tiny{Gating Network}};

\node []  (fntext)at (5,0.25+0.25) {\tiny{DNN with ReLUs}};

\node (fin) [left of=fnbox,node distance=1.25cm, coordinate] {};
\node[left=-1pt] at (fin.west){\tiny{Input}};
\draw[-stealth] (fin.center) -- (fnbox.west);


\node [draw,
	minimum width=2cm,
	minimum height=0.625cm,
]  (vnbox)at (5,-0.875) {};
\node []  (fntext)at (5,-0.75) {\tiny{Weight Network}};
\node []  (vntext)at (5,-1) {\tiny{DNN with GaLUs}};

\node (vin) [left of=vnbox,node distance=1.25cm, coordinate] {};
\node[left=-1pt] at (vin.west){\tiny{Input}};
\draw[-stealth] (vin.center) -- (vnbox.west);

\node (vout) [right of=vnbox,node distance=1.25cm, coordinate] {};
\node[right=-1pt] at (vout.west){\tiny{Output}};
\draw[-stealth]  (vnbox.east)--(vout.center);

\draw[-stealth]  (fnbox.south)--(vnbox.north);

\node []  (gates)at (5.75,-0.125) {\tiny{Gating Signal}};

\end{tikzpicture}
}
\end{minipage}
\begin{minipage}{0.49\columnwidth}
\centering

\resizebox{0.8\columnwidth}{!}{
\begin{tikzpicture}

\node []  (fntext)at (5,1+0.25) {\tiny{Deep Linearly Gated Network}};
\node [draw,
	minimum width=2cm,
	minimum height=0.625cm,
]  (fnbox)at (5,0.375+0.25) {};
\node []  (fntext)at (5,0.5+0.25) {\tiny{Gating Network}};

\node []  (fntext)at (5,0.25+0.25) {\tiny{\bf\color{red}{Deep Linear Network}}};

\node (fin) [left of=fnbox,node distance=1.25cm, coordinate] {};
\node[left=-1pt] at (fin.west){\tiny{Input}};
\draw[-stealth] (fin.center) -- (fnbox.west);


\node [draw,
	minimum width=2cm,
	minimum height=0.625cm,
]  (vnbox)at (5,-0.875) {};
\node []  (fntext)at (5,-0.75) {\tiny{Weight Network}};
\node []  (vntext)at (5,-1) {\tiny{DNN with GaLUs}};

\node (vin) [left of=vnbox,node distance=1.25cm, coordinate] {};
\node[left=-1pt] at (vin.west){\tiny{\color{red}{Input =$\mathbf{1}$}}};
\draw[-stealth] (vin.center) -- (vnbox.west);

\node (vout) [right of=vnbox,node distance=1.25cm, coordinate] {};
\node[right=-1pt] at (vout.west){\tiny{Output}};
\draw[-stealth]  (vnbox.east)--(vout.center);

\node []  (preacr)at (5.75,0.125) {\tiny{Pre-activations}};

\node []  (gating)at (5.75,-0.375) {\tiny{Gating Signal}};

\node []  (gates)at (5.0,-0.125) {\color{red}\tiny{Gates}};

\node [draw,
	minimum width=1cm,
	minimum height=0.1cm,
]  (gbox)at (5,-0.125) {};

\draw[-stealth]  (gbox.south)--(vnbox.north);
\draw[-stealth]  (fnbox.south)--(gbox.north);

\end{tikzpicture}
}
\end{minipage}
\end{minipage}
\caption{\small{DGN is a setup to understand the role of gating in DNNs with ReLUs. The DLGN setup completely disentangles and re-arranges the computations in an interpretable manner. The surprising fact that a constant $\mathbf{1}$ input is given to weight network of DLGN is justified by theory and experiments in \Cref{sec:analysis,sec:dlgn}.}}
\label{fig:lgln}
\end{figure}
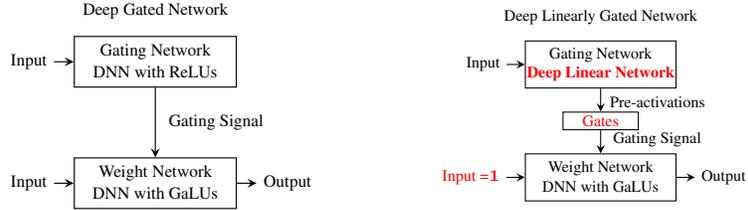

\textbf{Our Contribution.}  We \emph{extend the dual view} to address `black box'-ness by completely \emph{disentangling} the  `gating network' and the `weight network'. Our contributions are listed below.

$\bullet$ \textbf{Disentangling Gating Network.} For this, we propose a novel Deep \emph{Linearly} Gated Network (DLGN) as a \emph{mathematically interpretable} counterpart of a DNN with ReLUs (see \Cref{fig:lgln}).  In a DLGN, the gating network is a deep linear network, i.e., there is disentanglement because of the absence of non-linear activations. The gating network is mathematically interpretable, because, the transformations from input to the pre-activations are entirely linear; we call this \textbf{primal linearity}.

$\bullet$ \textbf{Dual View} (\Cref{sec:analysis}).  We present an unnoticed insight in prior work on fully connected networks that the \textbf{NPK is a product kernel} and is invariant to layer permutations. We present new results to show that (i) the \textbf{NPK is rotationally invariant} for convolutional networks with global average pooling, and (ii) the \textbf{NPK is an ensemble of many kernels} in the presence of skip connections. 

$\bullet$ \textbf{Disentangling Weight Network.} We then argue via theory and experiments that the  \emph{weight network is disentangled in the path space}, i.e., it learns path-by-path and not layer-by-layer. For this, in \Cref{sec:dlgn} we show via experiments that destroying the layer-by-layer structure by permuting the layers and providing a constant $\mathbf{1}$ as input (see DLGN in \Cref{fig:lgln}) do not  degrade performance. These counter intuitive results are difficult to reconcile using the commonly held `sophisticated structures are learnt in layers' interpretation. However, these experimental results follow from the theory in \Cref{sec:analysis}.  In other words, it is useful to think that the learning in the weight network happens path-by-path; we call this \textbf{dual linearity}, which (for infinite width) is interpreted via the NPK.

\textbf{Message.} The DLGN is not an alternative architecture per se, but a disentanglement and an interpretable re-arrangement of the computations in a DNN with ReLUs. The DLGN disentangles the computations into two  `mathematically' interpretable linearities (i) the `primal' linearity and (ii) the `dual' linearity interpreted via the NPK. Using the facts that the NPK is based on input and the gates, and in a DLGN, the pre-activations in the gating network are `primal' linear, we have complete disentanglement. We compare the performance of DNN, DGN and DLGN on CIFAR-10 and CIFAR-100 to show that, the \textbf{DLGN recovers more than $83.5\%$ of the performance of state-of-the-art DNNs}, i.e., while entanglement in the DNNs enable their improved performance,  the `disentangled and interpretable'  computations in the DLGN recovers most part of the performance. This brings us to an interesting question: `\textbf{Is DLGN a universal spectral approximator?}' (see \Cref{sec:dlgn}).

\subsection{Related Works.} 
We now compare our work with the related works.

$\bullet$ \textbf{Kernels.} Several works have examined theoretically as well as empirically two important kernels associated with a DNN namely its NTK based on the correlation of the gradients and the conjugate kernel based on the correlation of the outputs \citep{spectra,laplace,belkin,genntk,disentangling,ntk,arora2019exact,convgp,fcgp,lee2020finite}. In contrast, the NPK is based on the correlation of the gates. We do not build pure-kernel method with NPK, but use it as an aid to disentangle finite width DNN with ReLUs.  

$\bullet$ \textbf{ReLU, Gating, Dual Linearity.} A spline theory based on max-affine linearity was proposed in \citep{balestriero2018spline,balestriero2018hard} to show that a DNN with ReLUs performs hierarchical, greedy template matching. In contrast, the dual view exploits the gating property to simplify the NTK into the NPK. Gated linearity was studied in \citep{sss} for single layered networks, along with a non-gradient algorithm to tune the gates. In contrast, we look at networks of any depth, and the gates are tuned via standard optimisers. The main novelty in our work in contrast to the above is that in DLGN the feature generation is linear. The gating in this paper refers to the gating property of the ReLU itself and has no connection to \citep{highway}  where gating is a mechanism to regulate information flow. Also, the soft-gating used in our work and in \citep{npk} enables gradient flow via the gating network and is different from \emph{Swish} \citep{swish}, which is the multiplication of pre-activation and sigmoid.

$\bullet$ \textbf{Finite vs Infinite Width.} \cite{finitevsinfinite} perform an extensive comparison of finite versus infinite width DNNs. An aspect that is absent in their work, but present in the dual view is the disentanglement of gates and weights, and the fact that the learning in gates is crucial for finite width network to outperform infinite width DNNs. In our paper, we make use of theory developed for infinite width DNNs to provide empirical insights into inner workings of finite width networks.

$\bullet$ \textbf{Capacity.} Our experiments on destruction of layers, and providing constant $\mathbf{1}$ input are direct consequences of the insights from dual view theory. These are not explained by mere capacity based studies showing  DNNs are powerful to fit even random labelling of datasets \citep{ben}.

\section{Prior Work : Neural Tangent Kernel and Dual View}\label{sec:prelim}
In this section, we will focus on the dual view \citep{npk} and how the dual view helps to address the open question in the NTK theory. We begin with a brief summary of NTK.

\subsection{Infinite Width DNN  = Kernel Method With Neural Tangent Kernel}\label{sec:ntk}
An important kernel associated with a DNN is its \emph{neural tangent kernel} (NTK), which, for a pair of input examples $x,x'\in\R^{\din}$, and network weights $\Theta\in\R^{\dnet}$, is given by:

{\centering  $\text{NTK}(x,x')\quad = \quad \ip{\nabla_{\Theta}\hat{y}(x), \nabla_{\Theta}\hat{y}(x')}$,\quad\text{where}\par}
$\hat{y}_\Theta(\cdot)\in\R$ is the DNN output. Prior works \citep{ntk,arora2019exact,cao2019generalization} have shown that, as the width of the DNN goes to infinity, the NTK matrix converges to a limiting deterministic matrix $\text{NTK}_{\infty}$, and training an infinitely wide DNN is equivalent to a kernel method with $\text{NTK}_{\infty}$.  While, as a pure kernel $\text{NTK}_{\infty}$ performed better than prior kernels by more than $10\%$, \cite{arora2019exact} observed that on CIFAR-10:

{\centering CNTK-GAP: $\mathbf{77.43\%}\leq $ CNN-GAP: $\mathbf{83.30\%}$\par}

where, CNN-GAP is a convolutional neural network with global average pooling and CNTK-GAP is its corresponding $\text{NTK}_{\infty}$ matrix. Due to this performance gap of about $5-6\%$, they concluded that $\text{NTK}_{\infty}$ does not explain fully the success of DNNs, and explaining this gap was an \textbf{open question}.

\subsection{Dual View For DNNs with ReLUs: Characterising the role of gates}

In the dual view, the computations are broken down path-by-path. The input and the gates (in each path) are encoded in a neural path feature vector and the weights (in each path) are encoded in a neural path value vector, and the output is the inner product of these two vectors.  
The learning in the gates and the learning in the weights are separated in a deep gated network (DGN) setup, which leads to the two main results of dual view presented in \Cref{sec:fixedgates} and \Cref{sec:gatelearning}, wherein, the neural path kernel, the Gram matrix of the neural path features will play a key role. 

\subsubsection{Neural Path Feature, Neural Path Value and Neural Path Kernel}
Consider a fully connected DNN with `$d$' layers and `$w$' hidden units in each layer. Let the DNN accept input $x\in \R^{\din}$ and produce an output $\hat{y}_{\Theta}(x)\in\R$. 
\begin{definition}\label{def:npf-npv}
A path starts from an input node, passes through a weight and a hidden unit in each layer and ends at the output node. We define the following quantities for a path $p$:
\emph{
\begin{tabular}{lcl}
 Activity&:& $A_{\Theta}(x,p)$ is the product of the `$d-1$' gates in the path. \\
Value&:& $v_{\Theta}(p)$ is the product of the `$d$' weights in the path.\\
Feature&:&   $\phi_{\Theta}(x,p)$ is the product of the signal at the input node of the path and $A_{\Theta}(x,p)$.\\
\end{tabular}
}
The \emph{neural path feature} (NPF) given by $\phi_{\Theta}(x)=\left(\phi_{\Theta}(x,p),p=1,\ldots, \Pfc\right),\in\R^{\Pfc}$ and the \emph{neural path value} (NPV) given by $v_{\Theta}=\left(v_{\Theta}(p),p=1,\ldots,\Pfc\right),\in\R^{\Pfc}$, where $\Pfc=\din w^{(d-1)}$ is the total number of paths. 
\end{definition}
\begin{proposition}\label{prop:npf-npv}
The output of the DNN is then the inner product of the NPF and NPV: 
\begin{align}\label{eq:inner}
\hat{y}_{\Theta}(x)=\ip{\phi_{\Theta}(x),v_{\Theta}}=\sum_{p\in[P]}  \phi_{\Theta}(x,p) v_{\Theta}(p)
\end{align}
\end{proposition}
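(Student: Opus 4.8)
The plan is to expand the DNN's forward computation layer by layer and then recognize the resulting multilinear sum as an inner product in path space. First I would set up notation: write $z_0 = x \in \R^{\din}$ for the input, and for each layer $l = 1,\dots,d-1$ let the pre-activation of hidden unit $i$ be $q_l(x,i) = \sum_j \Theta(l,j,i)\, z_{l-1}(x,j)$, the gate be $G_l(x,i) = \mathbbm{1}\{q_l(x,i) > 0\}$ (for the ReLU), and the post-activation be $z_l(x,i) = q_l(x,i)\, G_l(x,i)$; the output is $\hat y_\Theta(x) = \sum_i \Theta(d,i,1)\, z_{d-1}(x,i)$. The key observation is that the gating value $G_l(x,i)$ depends on $x$ and $\Theta$, but once we treat it as a fixed multiplier, $z_l(x,i)$ becomes a linear (in fact bilinear in weights and input) function of the previous layer.

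The main step is an induction on depth to show that $z_l(x,i) = \sum_{(i_0,\dots,i_{l-1})} x(i_0) \left(\prod_{r=1}^{l} \Theta(r, i_{r-1}, i_r)\right)\left(\prod_{r=1}^{l} G_r(x, i_r)\right)$ with $i_l = i$, i.e.\ the post-activation at any node is the sum over all partial paths reaching that node of (input signal) $\times$ (product of weights along the partial path) $\times$ (product of gates along the partial path). The base case $l=1$ is immediate from the definitions of $q_1$ and $z_1$. For the inductive step, substitute the formula for $z_{l-1}(x,j)$ into $q_l(x,i) = \sum_j \Theta(l,j,i) z_{l-1}(x,j)$ and multiply by $G_l(x,i)$; the extra weight factor and extra gate factor extend each partial path by one more edge, which is exactly the claimed formula at level $l$. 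Applying this at $l = d-1$ and then contracting with the last layer weights $\Theta(d,i,1)$ gives
\begin{align}
\hat y_\Theta(x) = \sum_{(i_0,\dots,i_{d-1})} x(i_0)\left(\prod_{r=1}^{d-1} G_r(x,i_r)\right)\left(\prod_{r=1}^{d} \Theta(r,i_{r-1},i_r)\right),
\end{align}
where we identify $i_d = 1$ as the output node. Now each tuple $(i_0,\dots,i_{d-1})$ indexes exactly one path $p \in [\Pfc]$ with $\Pfc = \din\, w^{d-1}$; by \Cref{def:npf-npv}, the factor $\prod_{r=1}^{d-1} G_r(x,i_r)$ is the activity $A_\Theta(x,p)$, the factor $x(i_0) A_\Theta(x,p)$ is the feature $\phi_\Theta(x,p)$, and $\prod_{r=1}^{d}\Theta(r,i_{r-1},i_r)$ is the value $v_\Theta(p)$. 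Hence the sum is exactly $\sum_{p\in[P]} \phi_\Theta(x,p)\, v_\Theta(p) = \ip{\phi_\Theta(x), v_\Theta}$, which is \eqref{eq:inner}.

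The one thing to be careful about — and what I expect to be the only real subtlety — is bookkeeping the index sets consistently: making sure that "a path" is precisely a tuple of node choices $(i_0,i_1,\dots,i_{d-1})$, that the product of $d-1$ gates and $d$ weights lines up with the edges traversed, and that the last layer contributes a weight but no gate (since there is no ReLU at the output). Everything else is a routine distributivity argument: expanding the nested sums $\sum_{i_{d-1}} \Theta(d,i_{d-1},1)\sum_{i_{d-2}} \Theta(d-1,i_{d-2},i_{d-1}) G_{d-1}(\cdot)\cdots$ and pushing all the sums outward, which is valid since everything is a finite sum of real numbers. Note the argument is agnostic to how the gates $G_l$ are generated — they could come from the ReLU pre-activations themselves (DNN), or from a separate gating network (DGN/DLGN) — which is why the same identity underlies all three architectures in \Cref{fig:lgln}.
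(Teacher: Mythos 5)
Your proposal is correct. The paper states \Cref{prop:npf-npv} without giving a proof (it is quoted from the dual-view prior work), and your argument is exactly the standard one behind it: writing each hidden output as $z_l = q_l\cdot G_l$ (valid since $\max(0,q)=q\cdot\mathbbm{1}_{\{q>0\}}$), expanding by distributivity into a sum over partial paths by induction on depth, and contracting with the last layer so that each of the $\din w^{d-1}$ paths carries $d$ weights and $d-1$ gates — with the key bookkeeping points (no gate at the output node, gates treated as external multipliers so the identity holds verbatim for DNN, DGN and DLGN) handled correctly.
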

\textbf{Subnetwork Interpretation of DNNs with ReLUs.} A path is active only if all the gates in the path are active. This gives a subnetwork interpretation, i.e., for a given input $x\in\R^{\din}$, only a subset of the gates and consequently only a subset of the paths are active, and the input to output computation can be seen to be produced by this active subnetwork. The following matrix captures the correlation of the active subnetworks for a given pair of inputs $x,x'\in\R^{\din}$.
\begin{definition}[Overlap of active sub-networks]\label{def:overlap} 
The total number of `active' paths for both $x$ and $x'$ that pass through input node $i$ is defined to be:

{\centering{$\textbf{overlap}_{\Theta}(i,x,x') \eqdef {\left|\{p \colon  p\,\text{starts at node}\, i \,, A_{\Theta}(x,p)= A_{\Theta}(x',p)=1\} \right|}$}\par}
\end{definition}
\begin{lemma}[Neural Path Kernel (NPK)]\label{lm:npk}
Let $D\in\R^{\din}$ be a vector of non-negative entries  and for $u,u'\in\R^{\din}$ , let $\ip{u,u'}_{D}=\sum_{i=1}^{\din}D(i)u(i)u'(i)$. Then the neural path kernel (NPK) is given by: 
\begin{align*} 
\text{NPK}_{\Theta}(x,x')\eqdef \ip{\phi_{\Theta}(x),\phi_{\Theta}(x')}= \ip{x,x'}_{\textbf{overlap}_{\Theta}(\cdot,x,x')} 
\end{align*}
\end{lemma}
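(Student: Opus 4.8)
The plan is to unfold the neural path kernel directly from the definitions in \Cref{def:npf-npv} and \Cref{prop:npf-npv}, and then regroup the sum over paths according to the input node at which each path originates. First I would write
$\text{NPK}_{\Theta}(x,x') = \ip{\phi_{\Theta}(x),\phi_{\Theta}(x')} = \sum_{p=1}^{\Pfc}\phi_{\Theta}(x,p)\,\phi_{\Theta}(x',p)$,
and substitute $\phi_{\Theta}(x,p) = x(\I_0(p))\,A_{\Theta}(x,p)$, where $\I_0(p)$ denotes the input node at which path $p$ starts. Since both neural path features share the same path indexing and the same input-node map, the input coordinates factor out, giving $\sum_{p}x(\I_0(p))\,x'(\I_0(p))\,A_{\Theta}(x,p)\,A_{\Theta}(x',p)$.

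Next I would partition the path set $[\Pfc]$ by the originating input node: every path starts at exactly one node $i\in\{1,\dots,\din\}$, so $\sum_{p} = \sum_{i=1}^{\din}\sum_{p:\,\I_0(p)=i}$. For a fixed $i$ the factor $x(\I_0(p))\,x'(\I_0(p)) = x(i)\,x'(i)$ is constant over the inner sum and can be pulled outside, leaving
$\text{NPK}_{\Theta}(x,x') = \sum_{i=1}^{\din}x(i)\,x'(i)\sum_{p:\,\I_0(p)=i}A_{\Theta}(x,p)\,A_{\Theta}(x',p)$.

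The one genuine (though elementary) observation is that each activity $A_{\Theta}(x,p)$ is a product of the $d-1$ ReLU gates along the path, hence is itself $\{0,1\}$-valued; therefore $A_{\Theta}(x,p)\,A_{\Theta}(x',p) = 1$ precisely when $A_{\Theta}(x,p) = A_{\Theta}(x',p) = 1$ and equals $0$ otherwise, i.e. it is the indicator $\mathbbm{1}\{A_{\Theta}(x,p) = A_{\Theta}(x',p) = 1\}$. Summing this indicator over $\{p : \I_0(p) = i\}$ is exactly the count in \Cref{def:overlap}, so $\sum_{p:\,\I_0(p)=i}A_{\Theta}(x,p)\,A_{\Theta}(x',p) = \textbf{overlap}_{\Theta}(i,x,x')$. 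Substituting this and setting $D(i) \eqdef \textbf{overlap}_{\Theta}(i,x,x')$ yields $\text{NPK}_{\Theta}(x,x') = \sum_{i=1}^{\din}D(i)\,x(i)\,x'(i) = \ip{x,x'}_{D}$, which is the claim.

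I do not expect a real obstacle here; the argument is essentially bookkeeping once the sum over paths is reorganized by input node. The only point to state carefully is the identity $A_{\Theta}(x,p)\,A_{\Theta}(x',p) = \mathbbm{1}\{A_{\Theta}(x,p) = A_{\Theta}(x',p) = 1\}$, which relies on the activities being genuine $\{0,1\}$ indicators as in \Cref{def:npf-npv}; with the soft gating used elsewhere in \citep{npk} the corresponding statement would need the gate values reinterpreted, but under the definitions as stated the computation goes through verbatim.
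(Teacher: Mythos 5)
Your proof is correct and follows essentially the same route the paper uses for the analogous convolutional statement (\Cref{lm:cnnnpk} in the appendix): expand $\ip{\phi_{\Theta}(x),\phi_{\Theta}(x')}$ path-wise, group paths by their starting input node, and identify the sum of products of the $\{0,1\}$-valued activities with $\textbf{overlap}_{\Theta}(i,x,x')$. The only step worth stating explicitly, as you do, is that $A_{\Theta}(x,p)A_{\Theta}(x',p)$ is the indicator that the path is active for both inputs, which holds under the hard-gating definition in \Cref{def:npf-npv}.
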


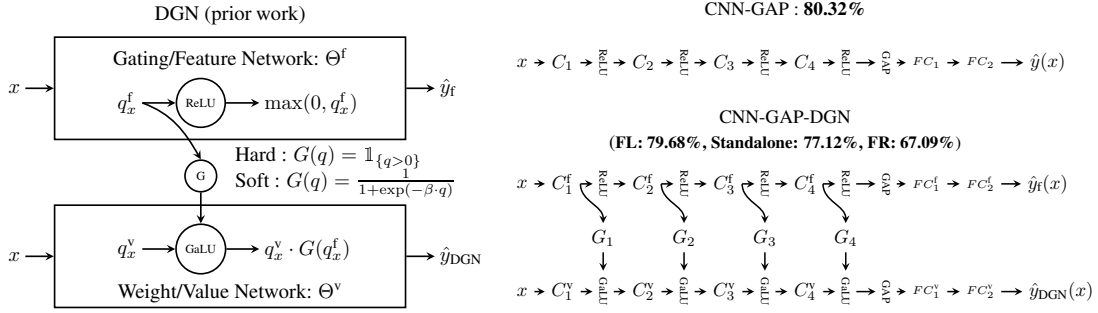
\begin{figure}
\centering
\begin{minipage}{0.9\columnwidth}
\centering
\begin{minipage}{0.45\columnwidth}
\resizebox{1.0\columnwidth}{!}{
\begin{tikzpicture}

\node []  (fntext)at (3.5,1.5) {DGN (prior work)};
\node [draw,
	minimum width=6cm,
	minimum height=1.75cm,
	thick
]  (fnbox)at (3.5,0.25) {};
\node []  (fntext)at (3.5,0.75) {Gating/Feature Network: $\Tf$};

\node (fin) [left of=fnbox,node distance=3.5cm, coordinate] {};
\node[left=-1pt] at (fin.west){$x$};
\draw[-stealth, thick] (fin.center) -- (fnbox.west);

\node (fout) [right of=fnbox,node distance=3.5cm, coordinate] {};
\node[right=-1pt] at (fout.west){$\hat{y}_{\text{f}}$};
\draw[-stealth, thick]  (fnbox.east)--(fout.center);

\node[draw,
	circle,
	minimum size=0.75cm,thick,
] (relu) at (3,0){\tiny{ReLU}};
\node (b) [left of=relu,node distance=1cm, coordinate] {};

\node[left=-1pt] at (b.center){$q^\text{f}_x$};
\draw[-stealth, thick] (b.east) -- (relu.west);

\node (c) [right of=relu,node distance=1cm, coordinate] {};
\node[right=-1pt] at (c.center){$\max(0,q^\text{f}_x)$};
\draw[-stealth, thick] (relu.east) -- (c.west);

\node[draw,
	circle,
	minimum size=0.0625cm,thick,
] (gating) at (3,-1.25){\tiny{G}};

\node[right=6pt] at (3.25,-0.925){Hard : $G(q)=\mathbbm{1}_{\{q>0\}} $};
\node[right=6pt] at (3.25,-1.375){Soft : $G(q)=\frac{1}{1+\exp(-\beta\cdot q)}$};


\node [draw,
	minimum width=6cm,
	minimum height=1.75cm,
	thick
]  (vnbox)at (3.5,-2.625) {};

\node []  (vntext)at (3.5,-3.25) {Weight/Value Network: $\Tv$};

\node (vin) [left of=vnbox,node distance=3.5cm, coordinate] {};
\node[left=-1pt] at (vin.west){$x$};
\draw[-stealth, thick] (vin.center) -- (vnbox.west);

\node (vout) [right of=vnbox,node distance=3.5cm, coordinate] {};
\node[right=-1pt] at (vout.west){$\hat{y}_{\text{DGN}}$};
\draw[-stealth, thick]  (vnbox.east)--(vout.center);

\node[draw,
	circle,
	minimum size=0.75cm,thick,
] (galu) at (3,-2.5){\tiny{GaLU}};

\draw [-stealth,thick]   (b) to[out=-20,in=120] (gating.north);
\draw [-stealth,thick]   (gating.south) -- (galu.north);

\node (d) [left of=galu,node distance=1cm, coordinate] {};
\node[left=-1pt] at (d.center){$q^\text{v}_x$};
\draw[-stealth, thick] (d.east) -- (galu.west);
\node (e) [right of=galu,node distance=1cm, coordinate] {};
\node[right=-1pt] at (e.center){$q^\text{v}_x\cdot G(q^\text{f}_x)$};
\draw[-stealth, thick] (galu.east) -- (e.west);
	
\end{tikzpicture}
}
\end{minipage}
\begin{minipage}{0.54\columnwidth}
\resizebox{1.0\columnwidth}{!}{
\begin{tikzpicture}
\node []  (dnn-text)at (5.375,2) {CNN-GAP : \textbf{80.32\%}};

\node []  (dnn-output) at (10.25,1) {$\hat{y}(x)$};
\node []  (dnn-fc2) at (9.0,1) {\tiny{$FC_2$}};
\draw [-stealth,thick]   (dnn-fc2.east) -- (dnn-output.west);

\node []  (dnn-fc1) at (8,1) {\tiny{$FC_1$}};
\draw [-stealth,thick]   (dnn-fc1.east) -- (dnn-fc2.west);

\node [rotate=-90]  (dnn-gap) at (7.25,1) {\tiny{GAP}};
\draw [-stealth,thick]   (dnn-gap.north) -- (dnn-fc1.west);

\node [rotate=-90] (dnn-relu-4) at (6.5,1){\tiny{ReLU}};
\node [] (dnn-c4) at (5.75,1){{$C_4$}};
\draw [-stealth,thick]   (dnn-c4.east) -- (dnn-relu-4.south);
\draw [-stealth,thick]   (dnn-relu-4.north) -- (dnn-gap.south);

\node [rotate=-90] (dnn-relu-3) at (5,1){\tiny{ReLU}};
\node [] (dnn-c3) at (4.25,1){{$C_3$}};
\draw [-stealth,thick]   (dnn-c3.east) -- (dnn-relu-3.south);
\draw [-stealth,thick]   (dnn-relu-3.north) -- (dnn-c4.west);

\node [rotate=-90] (dnn-relu-2) at (3.5,1){\tiny{ReLU}};
\node [] (dnn-c2) at (2.75,1){{$C_2$}};
\draw [-stealth,thick]   (dnn-c2.east) -- (dnn-relu-2.south);
\draw [-stealth,thick]   (dnn-relu-2.north) -- (dnn-c3.west);

\node [rotate=-90] (dnn-relu-1) at (2,1){\tiny{ReLU}};
\node [] (dnn-c1) at (1.25,1){{$C_1$}};
\draw [-stealth,thick]   (dnn-c1.east) -- (dnn-relu-1.south);
\draw [-stealth,thick]   (dnn-relu-1.north) -- (dnn-c2.west);

\node [] (dnn-input) at (0.5,1){$x$};
\draw [-stealth,thick]   (dnn-input.east) -- (dnn-c1.west);

\node []  (fntext)at (5.375,0) {CNN-GAP-DGN};
\node []  (fntext)at (5.375,-.5) {(\small{\textbf{FL: 79.68\%, Standalone: 77.12\%, FR: 67.09\%}})};


\node [align=right]  (dgn-f-output) at (10.25,-1.25) {$\hat{y}_{\text{f}}(x)$};
\node []  (dgn-f-fc2) at (9.0,-1.25) {\tiny{$FC^{\text{f}}_2$}};
\draw [-stealth,thick]   (dgn-f-fc2.east) -- (dgn-f-output.west);

\node []  (dgn-f-fc1) at (8,-1.25) {\tiny{$FC^{\text{f}}_1$}};
\draw [-stealth,thick]   (dgn-f-fc1.east) -- (dgn-f-fc2.west);

\node [rotate=-90]  (dgn-f-gap) at (7.25,-1.25) {\tiny{GAP}};
\draw [-stealth,thick]   (dgn-f-gap.north) -- (dgn-f-fc1.west);

\node [rotate=-90] (dgn-relu-4) at (6.5,-1.25){\tiny{ReLU}};
\node [] (dgn-f-c4) at (5.75,-1.25){{$C^{\text{f}}_4$}};
\draw [-stealth,thick]   (dgn-f-c4.east) -- (dgn-relu-4.south);
\draw [-stealth,thick]   (dgn-relu-4.north) -- (dgn-f-gap.south);

\node [rotate=-90] (dgn-relu-3) at (5,-1.25){\tiny{ReLU}};
\node [] (dgn-f-c3) at (4.25,-1.25){{$C^{\text{f}}_3$}};
\draw [-stealth,thick]   (dgn-f-c3.east) -- (dgn-relu-3.south);
\draw [-stealth,thick]   (dgn-relu-3.north) -- (dgn-f-c4.west);

\node [rotate=-90] (dgn-relu-2) at (3.5,-1.25){\tiny{ReLU}};
\node [] (dgn-f-c2) at (2.75,-1.25){{$C^{\text{f}}_2$}};
\draw [-stealth,thick]   (dgn-f-c2.east) -- (dgn-relu-2.south);
\draw [-stealth,thick]   (dgn-relu-2.north) -- (dgn-f-c3.west);

\node [rotate=-90] (dgn-relu-1) at (2,-1.25){\tiny{ReLU}};
\node [] (dgn-f-c1) at (1.25,-1.25){{$C^{\text{f}}_1$}};
\draw [-stealth,thick]   (dgn-f-c1.east) -- (dgn-relu-1.south);
\draw [-stealth,thick]   (dgn-relu-1.north) -- (dgn-f-c2.west);

\node [] (dgn-f-input) at (0.5,-1.25){$x$};
\draw [-stealth,thick]   (dgn-f-input.east) -- (dgn-f-c1.west);

\node [align=right]  (dgn-output) at (10.5,-3.25) {$\hat{y}_{\text{DGN}}(x)$};

\node [] (dgn-fc2) at (9,-3.25){\tiny{$FC^{\text{v}}_2$}};
\draw [-stealth,thick]   (dgn-fc2.east)--(dgn-output.west);

\node [] (dgn-fc1) at (8,-3.25){\tiny{$FC^{\text{v}}_1$}};
\draw [-stealth,thick]   (dgn-fc1.east)--(dgn-fc2.west);

\node [rotate=-90] (dgn-gap) at (7.25,-3.25){\tiny{GAP}};
\draw [-stealth,thick]   (dgn-gap.north)--(dgn-fc1.west);

\node [rotate=-90] (dgn-galu-4) at (6.5,-3.25){\tiny{GaLU}};
\draw [-stealth,thick]   (dgn-galu-4.north) -- (dgn-gap.south);

\node [] (dgn-v-c4) at (5.75,-3.25){{$C^{\text{v}}_4$}};
\draw [-stealth,thick]   (dgn-v-c4.east) -- (dgn-galu-4.south);

\node [rotate=-90] (dgn-galu-3) at (5,-3.25){\tiny{GaLU}};
\node [] (dgn-v-c3) at (4.25,-3.25){{$C^{\text{v}}_3$}};
\draw [-stealth,thick]   (dgn-v-c3.east) -- (dgn-galu-3.south);
\draw [-stealth,thick]   (dgn-galu-3.north) -- (dgn-v-c4.west);

\node [rotate=-90] (dgn-galu-2) at (3.5,-3.25){\tiny{GaLU}};
\node [] (dgn-v-c2) at (2.75,-3.25){{$C^{\text{v}}_2$}};
\draw [-stealth,thick]   (dgn-v-c2.east) -- (dgn-galu-2.south);
\draw [-stealth,thick]   (dgn-galu-2.north) -- (dgn-v-c3.west);

\node [rotate=-90] (dgn-galu-1) at (2,-3.25){\tiny{GaLU}};
\node [] (dgn-v-c1) at (1.25,-3.25){{$C^{\text{v}}_1$}};

\draw [-stealth,thick]   (dgn-v-c1.east) -- (dgn-galu-1.south);
\draw [-stealth,thick]   (dgn-galu-1.north) -- (dgn-v-c2.west);

\node [] (dgn-input) at (0.5,-3.25){$x$};
\draw [-stealth,thick]   (dgn-input.east) -- (dgn-v-c1.west);

\node[] (dgn-gating-1) at (2,-2.25){$G_1$};
\draw [-stealth,thick]   (dgn-f-c1.east) to[out=-90,in=90] (dgn-gating-1.north);
\draw [-stealth,thick]   (dgn-gating-1.south) -- (dgn-galu-1.west);

\node[] (dgn-gating-2) at (3.5,-2.25){$G_2$};
\draw [-stealth,thick]   (dgn-f-c2.east) to[out=-90,in=90] (dgn-gating-2.north);
\draw [-stealth,thick]   (dgn-gating-2.south) -- (dgn-galu-2.west);

\node[] (dgn-gating-3) at (5,-2.25){$G_3$};
\draw [-stealth,thick]   (dgn-f-c3.east) to[out=-90,in=90] (dgn-gating-3.north);
\draw [-stealth,thick]   (dgn-gating-3.south) -- (dgn-galu-3.west);

\node[] (dgn-gating-4) at (6.5,-2.25){$G_4$};
\draw [-stealth,thick]   (dgn-f-c4.east) to[out=-90,in=90] (dgn-gating-4.north);
\draw [-stealth,thick]   (dgn-gating-4.south) -- (dgn-galu-4.west);

\end{tikzpicture}
}
\end{minipage}
\end{minipage}
\caption{\small{Shows the DGN on the left. \textbf{Training:} In the case of fixed learnt gates, the feature network is pre-trained using $\hat{y}_f$ as the output, and then the feature network is frozen, and the value network is trained with $\hat{y}_{DGN}$ as the output. In the case of fixed random gates, the feature network is initialised at random and frozen, and the value network is trained with $\hat{y}_{DGN}$ as the output. In the case of fixed gates, hard gating $G(q)=\mathbbm{1}_{\{q>0\}}$ is used. \textbf{Standalone Training:} both feature and value network are initialised at random and trained together with $\hat{y}_{DGN}$ as the output. Here, soft gating $G(q)=\frac{1}{1+\exp(-\beta\cdot q)}$ is used to allow gradient flow through feature network. On the right side is the CNN-GAP and its DGN used in \citep{npk}. In CNN-GAP, $C_1,C_2,C_3,C_4$ are convolutional layers, $FC_1,FC_2$ are fully connected layers. In CNN-GAP-DGN, $G_l, l=1,2,3,4$ are the gates of layers, the superscripts f, and v stand for feature and value network respectively.}}
\label{fig:dgn}
\end{figure}

\subsubsection{Deep Gated Network: Separating Gates and Weights}
{Deep Gated Network (DGN)} is a setup to separate the gates from the weights. Consider a DNN with ReLUs with weights $\Theta\in\R^{\dnet}$. The DGN \emph{corresponding} to this DNN (left diagram in \Cref{fig:dgn}) has two networks of \emph{identical architecture} (to the DNN) namely the `gating network' and the `weight network' with distinct weights $\Tf\in\R^{\dnet}$ and $\Tv\in\R^{\dnet}$.  

The `gating network' has ReLUs which turn `on/off' based on their pre-activation signals, and the `weight network' has gated linear units (GaLUs) \citep{sss,npk}, which multiply their respective pre-activation inputs by the external gating signals provided by the `gating network'.  Since both the networks have identical architecture, the ReLUs and GaLUs in the respective networks have a one-to-one correspondence.  Gating network realises $\phi_{\Tf}(x)$ by turning `on/off' the corresponding GaLUs in the weight network. The weight network realises $v_{\Tv}$ and computes the output $\hat{y}_{\text{DGN}}(x)=\ip{\phi_{\Tf}(x),v_{\Tv}}$.  The gating network is also called as the feature network since it realises the neural path features, and the weight network is also called as the value network since it realises the neural path value.

\subsubsection{Learning Weights With Fixed Gates = Neural Path Kernel}\label{sec:fixedgates}
During training, a DNN learns both $\phi_{\Theta}(x)$ as well as $v_{\Theta}$ simultaneously, and a finite time characterisation of this learning in finite width DNNs is desirable. However, this is a hard problem. An easier problem is to understand in a DGN, how the weights in the value network are learnt when the gates are fixed in the feature network, i.e., how $\hat{y}_{DGN}(x)=\ip{\phi_{\Tf}(x),v_{\Tv}}$ is learnt by learning $v_{\Tv}$ with fixed $\phi_{\Tf}(x)$. While  $\hat{y}_{DGN}(x)=\ip{\phi_{\Tf}(x),v_{\Tv}}$ is linear in the dual variables, it is still non-linear in the value network weights $\Tv$. However, \cite{npk} showed that the dual linearity is characterised by the NPK in the infinite width regime. We state the assumption followed by Theorem 5.1 in \citep{npk}, wherein, the $\text{NTK}(x,x')=\ip{\nabla_{\Tv}\hat{y}_{\Tdgn_0}(x),\nabla_{\Tv}\hat{y}_{\Tdgn_0}(x') }$ is due to the gradient of $\hat{y}_{\text{DGN}}$ with respect to the value network weights, and the $\text{NPK}(x,x')=\ip{\phi_{\Tf_0}(x),\phi_{\Tf_0}(x')}$ is due to the feature network weights.

\begin{assumption}\label{assmp:main}
$\Tv_0\stackrel{\text{i.i.d}}\sim\text{Bernoulli}(\frac12)$ over $\{-{\sigma},+{\sigma}\}$ and statistically independent of $\Tf_0$.
\end{assumption}

\begin{theorem}[Theorem 5.1 in \citep{npk}]
\label{th:fcprev} Under \Cref{assmp:main} for a fully connected DGN : 
\begin{align*}
\text{NTK}^{\texttt{FC}}(x,x')\,\, \stackrel{(a)}\rightarrow\,\, &d \cdot \sigma^{2(d-1)} \cdot \text{NPK}^{\texttt{FC}}(x,x'), \quad\text{as}\,\, w\rightarrow \infty \\
					     & = d \cdot \sigma^{2(d-1)}\cdot\ip{x,x'}\cdot \textbf{overlap}(x,x')
\end{align*}
\end{theorem}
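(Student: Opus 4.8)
The plan is to exploit the fact that, with the gates frozen, the DGN output $\hat{y}_{\text{DGN}}(x)=\ip{\phi_{\Tf}(x),v_{\Tv}}$ is \emph{linear} in the neural path value $v_{\Tv}$, while each coordinate $v_{\Tv}(p)$ is a degree-$d$ monomial in the value-network weights --- the product of the $d$ weights along path $p$. Differentiating with respect to a single weight $\Theta^{\text{v}}(l,i,j)$ therefore produces a sum over all paths that traverse that weight, of $\phi_{\Tf}(x,p)$ times the product of the remaining $d-1$ weights on $p$. This is the path decomposition of the gradient, and substituting it into $\ip{\nabla_{\Tv}\hat{y}(x),\nabla_{\Tv}\hat{y}(x')}$ turns the NTK into a triple sum: over a layer $l$ and a weight $(l,i,j)$ in that layer, and over \emph{pairs} of paths $p,p'$ both passing through $(l,i,j)$, of $\phi_{\Tf}(x,p)\phi_{\Tf}(x',p')$ weighted by the product of the surviving $d-1$ weights of $p$ and the surviving $d-1$ weights of $p'$.

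Next I would take the expectation over the initial value weights $\Tv_0$ using \Cref{assmp:main}. Two features of the $\text{Bernoulli}\{-\sigma,+\sigma\}$ law do the work: each weight squares to the \emph{constant} $\sigma^2$, and each weight has mean zero and is independent of the others. Because $p$ and $p'$ are simple feed-forward paths (one weight per layer, no repeats), in every layer $l'\neq l$ the two paths either use the same weight --- contributing a factor $\sigma^2$ --- or two distinct weights, each then appearing to the first power; since the latter has mean zero and is independent of everything else, such a term vanishes in expectation. Hence only the diagonal $p=p'$ survives, and its weight factor equals $\sigma^{2(d-1)}$ \emph{deterministically}. What remains is $\sigma^{2(d-1)}\sum_{l}\sum_{(l,i,j)}\sum_{p\ni(l,i,j)}\phi_{\Tf}(x,p)\phi_{\Tf}(x',p)$; re-summing, and noting that each path $p$ is counted exactly once per layer, i.e. $d$ times in total, collapses this to $d\cdot\sigma^{2(d-1)}\ip{\phi_{\Tf}(x),\phi_{\Tf}(x')}=d\cdot\sigma^{2(d-1)}\,\text{NPK}(x,x')$. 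The final line of the statement is then immediate from \Cref{lm:npk}, since in the fully connected case the overlap through every input node is the same scalar $\textbf{overlap}(x,x')$, so $\text{NPK}(x,x')=\ip{x,x'}\cdot\textbf{overlap}(x,x')$.

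The remaining work, and the main obstacle, is to upgrade the expectation computed above to convergence in probability as $w\to\infty$: the random NTK does fluctuate, because the off-diagonal ($p\neq p'$) terms carry random signs rather than being identically zero, and (when the gating network is also randomly initialised) the per-node overlaps are themselves random. I would control this by a second-moment estimate: bound $\mathrm{Var}(\text{NTK}(x,x'))$ by expanding $\E[\text{NTK}^2]$ into a sum over quadruples of paths, observe that a nonzero contribution again forces the weights to pair up, and count how many such pairings survive relative to the path count $\Pfc=\din w^{(d-1)}$; the surviving contributions are of lower order, so the variance vanishes after the natural normalisation and the NTK concentrates around $d\cdot\sigma^{2(d-1)}\,\text{NPK}$. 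A parallel law-of-large-numbers argument over the $w$ hidden units in each layer shows the empirical overlap $\textbf{overlap}_{\Tf_0}(i,x,x')$ concentrates and becomes asymptotically independent of the input node $i$, which is what licenses the scalar $\ip{x,x'}\cdot\textbf{overlap}(x,x')$ in the limit. Assembling the gradient path-decomposition, the expectation identity, and these two concentration statements yields the theorem.
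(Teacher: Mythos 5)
This theorem is imported verbatim from prior work (Theorem 5.1 of the cited NPK paper) and is not proved anywhere in this paper---the appendix even proves the convolutional and residual analogues by simply invoking it---so there is no in-paper proof to compare against; judged on its own, your derivation is essentially the standard argument for that result and is sound. Because the gates come from the feature network, $\hat{y}_{\text{DGN}}(x)=\ip{\phi_{\Tf}(x),v_{\Tv}}$ with $\phi_{\Tf}$ independent of $\Tv$ under \Cref{assmp:main}, so the gradient does decompose path-wise; the zero-mean, independent $\pm\sigma$ entries of $\Tv_0$ annihilate every cross term $p\neq p'$ in expectation, the surviving diagonal factor is $\sigma^{2(d-1)}$ deterministically (each square is exactly $\sigma^2$), and each path contributes once per layer, giving the multiplicity $d$---this is exactly how the cited result is obtained. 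Two corrections of emphasis. First, no law-of-large-numbers step is needed for the last display: in a fully connected network the activity $A_{\Theta}(x,p)$ depends only on the hidden units of $p$ and not on its input node, so $\textbf{overlap}(i,x,x')$ is \emph{exactly} the same for every $i$ (this is the paper's remark following the theorem), and $\text{NPK}=\ip{x,x'}\cdot\textbf{overlap}(x,x')$ holds identically, not just asymptotically. Second, your concentration sketch leaves the normalisation implicit: the off-diagonal, random part of the NTK is only negligible relative to the deterministic diagonal $d\cdot\sigma^{2(d-1)}\cdot\text{NPK}$ when $\sigma$ is scaled with width, e.g.\ $\sigma=\cscale/\sqrt{w}$ as in \Cref{th:fc}; stating the second-moment bound with that scaling (variance of the cross terms of strictly lower order in $w$ than the squared diagonal) is what actually licenses the arrow in the theorem, so that step should be written out rather than gestured at.
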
 
\textbf{Remark.} In the fully connected case, $\textbf{overlap}(i,x,x')$ is identical for all $i=1,\ldots,\din$, and hence $\ip{x,x'}_{\textbf{overlap}(\cdot,x,x')}$ in \Cref{lm:npk} becomes $\ip{x,x'}\cdot \textbf{overlap}(x,x')$ in \Cref{th:fcprev}. It follows from NTK theory that an infinite width DGN with fixed gates is equivalent to kernel method with NPK.
\subsubsection{Learning in Gates Key For Finite Width To Be Better Than Infinite Width}\label{sec:gatelearning}
The fixed gates setting is an idealised setting, in that, it does not theoretically capture the learning of the gates, i.e., the neural path features $\phi_{\Theta}(x)$. However, the learning in the gates can be empirically characterised by comparing \textbf{fixed learnt (FL)}  gates coming from a pre-trained DNN and \textbf{fixed random (FR)} gates coming from randomly initialised DNN, and the infinite width NTK.  Using a CNN-GAP and its corresponding DGN, \cite{npk} showed on CIFAR-10 that (see \Cref{fig:dgn} for details on DGN training and the CNN-GAP architecture):

{\centering FR Gates : $\mathbf{67.09\%}\leq $ CNTK-GAP: $\mathbf{77.43\%}\leq $ FL Gates: $\mathbf{79.68\%}\approx $ CNN-GAP: $\mathbf{80.32\%}$\par}

based on which it was concluded that learning in the gates (i.e., neural path features) is crucial for finite width CNN-GAP to outperform the infinite width CNTK-GAP. It was also shown that the DGN can be trained \textbf{standalone} (as shown in \Cref{fig:dgn}) and is only \emph{marginally poor} to the DNN. 

\section{Deep Linearly Gated Networks: Complete Disentanglement}
{\centering $\text{DLGN} : x\rightarrow \stackrel{\text{Primal} }{\text{Linear}}\rightarrow \text{Pre-activations}\rightarrow\text{Gates}\stackrel{\text{lifting}}{\rightarrow} \phi_{\Tf}(x) {\rightarrow}\stackrel{\text{Dual} }{\text{Linear}}: \hat{y}(x)=\ip{\phi_{\Tf}(x),v_{\Tv}}$ \par}

The deep linearly gated network (DLGN) has two `mathematically' interpretable linearities, the `primal' and the `dual' linearities. The primal linearity is ensured in via construction and needs no theoretical justification. Once the pre-activations triggers gates, $\phi_{\Tf}(x)$ gets realised in the value network by activating the paths.  Now, the value network itself is `dual' linear, i.e., it simply computes/learns the inner product $\hat{y}(x)=\ip{\phi_{\Tf}(x),v_{\Tv}}$. Gating \emph{lifts} the `primal' linear computations in the feature network to `dual' linear computations in the value network. Dual linearity is characterised by the NPK (for infinite width) which in turn depends on the input and gates, and  the fact that the pre-activations to the gates are primal linear implies complete disentanglement and interpretability. 

Dual linearity is mathematically evident due to the inner product relationship, however, adopting it has the following conceptual issue: it is a commonly held view that `sophisticated features are learnt in the layers', that is, given that the input $x\in\R^{\din}$  is presented to the value network (as in \Cref{fig:dgn}), it could be argued that the GaLUs and linear operations are entangled which in turn enable learning of sophisticated features in the layers. In what follows, we demystify this layer-by-layer view via theory (infinite width case) in \Cref{sec:analysis} , and experiments (on finite width networks) in \Cref{sec:dlgn}, and then study the performance of DLGN in \Cref{sec:dlgn}. The layer-by-layer view is demystified by showing that (i) a constant $\mathbf{1}$ input can be given to the value network, (ii) layer-by-layer structure can be destroyed. The constant $\mathbf{1}$ input is meant to show that if the input is not given to the value network then it is not possible to learn sophisticated structures `from the input' in a layer-by-layer manner.  In terms of the dual linearity, providing a constant $\mathbf{1}$ input has only a minor impact, in that, the neural path feature becomes $\phi(x,p)=1\cdot A(x,p)$, i.e., it still encodes the path activity which is still input dependent. Since $\phi(x)$ depends only on gates, the NPK will depend only on the $\textbf{overlap}$ matrix; results in \Cref{sec:analysis}  captures this in theory. Now, it could be argued that, despite a constant $\mathbf{1}$ input, the gates are still arranged layer-by-layer, due to which, the value network is still able to learn sophisticated structures in its layers. \Cref{sec:analysis} has theory that points out that as long as the \textbf{correlation of the gates} is not lost, the layer-by-layer structure can be destroyed.

\subsection{Dual Linearity: New Insights and New Results}\label{sec:analysis}
We saw in \Cref{sec:fixedgates} that dual linearity is characterised by the NPK for infinite width case. In this section, we: (i) cover standard architectural choices namely convolutions with global-average-pooling and skip connections in \Cref{th:conv,th:res}; the prior result \Cref{th:fcprev} is only for the fully connected case, (ii)  present new insights on \Cref{th:fcprev} by restating it explicitly in terms of the gates in \Cref{th:fc}, and (iii) discuss how the NPK structure helps in demystifying the layer-by-layer view. Note: Results in this section are about the value network and hold for both DGN and DLGN.

\subsubsection{Fully Connected: Product of LayerWise Base Kernels}
\begin{theorem}
\label{th:fc} Let $G_l(x)\in[0,1]^w$ denote the gates in layer $l\in\{1,\ldots,d-1\}$ for input $x\in\R^{\din}$. Under \Cref{assmp:main}  ($\sigma=\frac{\cscale}{\sqrt{w}}$) as $w\rightarrow \infty $, we have for fully connected DGN/DLGN:
\begin{align*}
\text{NTK}^{\texttt{FC}}(x,x') \rightarrow d \cdot \sigma^{2(d-1)} \cdot \text{NPK}^{\texttt{FC}}(x,x') = d \cdot \cscale^{2(d-1)} \cdot \left(\ip{ x,x'} \cdot \Pi_{l=1}^{d-1} \frac{\ip{G_l(x),G_l(x')}}w\right),
\end{align*}
\end{theorem}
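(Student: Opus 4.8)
The plan is to obtain \Cref{th:fc} from the already-available \Cref{th:fcprev} together with a short combinatorial evaluation of $\text{NPK}^{\texttt{FC}}$ directly in terms of the layer gates, followed by a trivial substitution of the variance scaling. I would first observe that the proof of \Cref{th:fcprev} (Theorem 5.1 of \citep{npk}) uses only the architecture of the value network of GaLUs and treats the gating signals $G_l(\cdot)$ as fixed external inputs; it never uses that these signals are produced by ReLUs. Hence the identical width limit holds when the gates come from a deep \emph{linear} gating network, which is exactly what licenses the ``DGN/DLGN'' in the statement. Applying \Cref{th:fcprev} with $\sigma=\cscale/\sqrt w$ — legitimate since \Cref{th:fcprev} is a per-$w$ concentration statement and holds for any, possibly $w$-dependent, $\sigma$ — gives $\text{NTK}^{\texttt{FC}}(x,x')\to d\cdot\sigma^{2(d-1)}\cdot\text{NPK}^{\texttt{FC}}(x,x')$ as $w\to\infty$, so all that remains is to rewrite $\text{NPK}^{\texttt{FC}}(x,x')=\ip{\phi_{\Tf}(x),\phi_{\Tf}(x')}$ and plug in the scaling.

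The core new step is the identity $\text{NPK}^{\texttt{FC}}(x,x')=\ip{x,x'}\cdot\Pi_{l=1}^{d-1}\ip{G_l(x),G_l(x')}$. I would prove it by unpacking \Cref{def:npf-npv}: in a fully connected net a path $p$ is exactly a tuple $(\I_0(p),\I_1(p),\dots,\I_{d-1}(p))$ with one node per layer, and $\phi_{\Tf}(x,p)=x(\I_0(p))\,\Pi_{l=1}^{d-1}G_l(x)_{\I_l(p)}$. Summing the inner product over all such tuples, $\ip{\phi_{\Tf}(x),\phi_{\Tf}(x')}=\sum_i x(i)x'(i)\sum_{\I_1,\dots,\I_{d-1}}\Pi_{l=1}^{d-1}G_l(x)_{\I_l}G_l(x')_{\I_l}$; the inner sum factorizes over layers into $\Pi_{l=1}^{d-1}\big(\sum_j G_l(x)_j G_l(x')_j\big)=\Pi_{l=1}^{d-1}\ip{G_l(x),G_l(x')}$, which does not depend on $i$, and pulling it out leaves $\big(\sum_i x(i)x'(i)\big)\cdot\Pi_{l=1}^{d-1}\ip{G_l(x),G_l(x')}=\ip{x,x'}\cdot\Pi_{l=1}^{d-1}\ip{G_l(x),G_l(x')}$. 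For hard gates this is precisely $\ip{x,x'}\cdot\textbf{overlap}(x,x')$, so this step also sharpens the Remark following \Cref{th:fcprev}, which only asserts $i$-independence of $\textbf{overlap}$ without evaluating it.

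Finally I would substitute $\sigma^{2(d-1)}=\cscale^{2(d-1)}/w^{d-1}$ into $d\cdot\sigma^{2(d-1)}\cdot\text{NPK}^{\texttt{FC}}(x,x')$ and distribute the factor $w^{-(d-1)}$ across the $d-1$ terms of the product, arriving at $d\cdot\cscale^{2(d-1)}\cdot\big(\ip{x,x'}\cdot\Pi_{l=1}^{d-1}\frac{\ip{G_l(x),G_l(x')}}w\big)$, which is the claimed right-hand side.

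I do not expect a genuine obstacle here: the analytic heavy lifting — the width limit, in particular the vanishing of the off-diagonal / cross-path terms in the second-moment computation over the Bernoulli value weights $\Tv_0$ — is already packaged inside \Cref{th:fcprev}. The only points needing care are (i) verifying that the proof of \Cref{th:fcprev} is agnostic to how the gates are produced, so that it transfers to the DLGN; (ii) making the path-to-tuple bijection and the ensuing factorization airtight, including the observation that $\textbf{overlap}(i,x,x')$ is the same for every input node $i$ because in a fully connected net every input node feeds every first-layer unit; and (iii) being consistent about whether $G_l(\cdot)\in\{0,1\}^w$ (hard gates) or $G_l(\cdot)\in[0,1]^w$ (soft gates) — the factorization above is purely algebraic and holds in both cases, but the identification with $\textbf{overlap}$ as a count is for hard gates.
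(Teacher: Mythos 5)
Your proposal is correct and follows essentially the same route as the paper: \Cref{th:fc} is obtained from \Cref{th:fcprev} (Theorem 5.1 of the prior work, which concerns only the value network and hence applies verbatim to the DLGN) together with the elementary identity $\textbf{overlap}(x,x')=\Pi_{l=1}^{d-1}\ip{G_l(x),G_l(x')}$ and the substitution $\sigma=\cscale/\sqrt{w}$. The only difference is that you spell out the path-to-tuple factorization and the hard/soft-gate caveat explicitly, which the paper leaves as an "elementary observation."
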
 
$\bullet$ \textbf{Product Kernel : Role of Depth and Width.} \Cref{th:fc} is mathematically equivalent to  \Cref{th:fcprev}, which follows from the observation that $\textbf{overlap}(x,x')=\Pi_{l=1}^{(d-1)}\ip{G_l(x),G_l(x')}$. While this observation is very elementary in itself, it is significant at the same time;  \Cref{th:fc} provides the most simplest kernel expression that characterises the information in the gates. From \Cref{th:fc} it is evident that the role of width is \emph{averaging} (due to the division by $w$). Each layer therefore corresponds to a \emph{base kernel} $\frac{\ip{G_l(x),G_l(x')}}w$ which measures the \emph{\textbf{correlation of the gates}}. The role of depth is to provide the product of kernels. To elaborate, the feature network provides the gates $G_l(x)$, and the value network realises the product kernel in \Cref{th:fc} by laying out the GaLUs depth-wise, and connecting them to form a deep network. The depth-wise layout is important: for instance, if we were to concatenate the gating features as $\varphi(x)=(G_l(x),l=1,\ldots,d-1)\in\{0,1\}^{(d-1)w}$, it would have only resulted in the kernel $\ip{\varphi(x),\varphi(x')}=\sum_{l=1}^{d-1}{\ip{G_l(x),G_l(x')}}$, i.e., a \emph{sum  (not product)} of kernels. 

$\bullet$ \textbf{Constant $\mathbf{1}$ Input.} This has a minor impact, in that, the expression on right hand side of \Cref{th:fc} becomes $d \cdot \cscale^{2(d-1)} \cdot \din \cdot \Pi_{l=1}^{d-1} \frac{\ip{G_l(x),G_l(x')}}w$, i.e., the kernel still has information of the gates.

$\bullet$ \textbf{Destroying structure by permuting the layers.}  $\Pi_{l=1}^{d-1} \frac{\ip{G_l(x),G_l(x')}}w$ is permutation invariant, and hence permuting the layers has no effect.

\subsubsection{Convolution  Global Average Pooling: Rotationally Invariant Kernel}
We consider networks with circular convolution and global average pooling (architecture and notations are in the Appendix). In \Cref{th:conv}, let the circular rotation of vector $x\in\R^{\din}$ by `$r$' co-ordinates be defined as $rot(x,r)(i)=x(i+ r)$, if $i+r \leq \din$ and $rot(x,r)(i)=x(i+ r-\din)$ if $i+r > \din$.  
\begin{theorem}\label{th:conv} Under \Cref{assmp:main}, for  a suitable $\bcnn$ (see Appendix for expansion of $\bcnn$):
\begin{align*}
\text{NTK}^{\texttt{CONV}}(x,x')\rightarrow  \frac{\bcnn}{{\din}^2} \cdot \sum_{r=0}^{\din-1} \ip{x,rot(x',r)}_{\textbf{overlap}(\cdot, x,rot(x',r))},\,\, \text{as}\,\,  w\rightarrow\infty\,
\end{align*}
\end{theorem}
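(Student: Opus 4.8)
The plan is to follow the path-space argument behind \Cref{th:fcprev} and \Cref{th:fc}, adding the two ingredients forced by the architecture: \emph{bundling} the paths that carry the same value because of weight sharing, and treating the global-average-pooling layer as a uniform $\tfrac{1}{\din}$ mask summed over spatial slots. First I would fix notation (in the Appendix): $\dc$ circular-convolution layers of window size $\wconv$, one global-average-pooling layer, and $\dfc$ fully connected layers, each of width $w$. An expanded path records a pre-pooling spatial slot $t\in[\din]$, a filter index and an in-window offset in every convolutional layer, and a hidden unit in every fully connected layer; tracing the offsets back from $t$ gives a unique input coordinate $\iota(t)$, the path value $v_{\Theta}(p)$ is the product of the $\dc+\dfc$ weights met, and the activity $A_{\Theta}(x,p)$ is the product of the gates met (the pooling slot contributing the constant $\tfrac{1}{\din}$). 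Because circular convolution is translation-equivariant and the filters/weights are shared across spatial positions, all expanded paths differing only by a common cyclic shift of their spatial indices carry the \emph{same} value; I would collapse each such orbit into a \emph{bundle} $b$, put one coordinate of $v_{\Tv}$ per bundle, and define the bundled feature $\phi_{\Tf}(x,b)=\tfrac{1}{\din}\sum_{t}x(\iota(t))\,A_{\Tf}(x,b,t)$ by summing the orbit's per-path contributions. The identity $\hat y(x)=\ip{\phi_{\Tf}(x),v_{\Tv}}$ (the convolutional counterpart of \Cref{prop:npf-npv}) then follows by bilinearly unrolling the layerwise computation; this step is routine but notation-heavy.

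Given the inner-product form, the value-network NTK is $\ip{\nabla_{\Tv}\hat y(x),\nabla_{\Tv}\hat y(x')}=\sum_{\theta}\partial_{\theta}\ip{\phi_{\Tf}(x),v_{\Tv}}\,\partial_{\theta}\ip{\phi_{\Tf}(x'),v_{\Tv}}$, a sum over weights $\theta$ and over pairs of bundles both using $\theta$. Under \Cref{assmp:main} the coordinates of $\Tv_0$ are mean-zero, independent, with squares equal to $\sigma^2$, so any monomial in which a weight occurs an odd number of times has zero expectation; since distinct conv/FC weights are a.s.\ distinct, the surviving pairs are exactly those for which the two bundles agree on every weight, i.e.\ the diagonal $b=b'$ --- the same cancellation as in the proof of \Cref{th:fcprev}. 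Summing the surviving diagonal terms, each bundle contributes $\phi_{\Tf}(x,b)\,\phi_{\Tf}(x',b)$ times $\sigma^{2(\dc+\dfc-1)}$ and a count of how many of its $\dc+\dfc$ weights may be differentiated; with $\sigma=\cscale/\sqrt w$ and $w\to\infty$, the variance of the NTK around its mean vanishes (law of large numbers over the $w$ channels of each layer, with the $\din$ spatial positions kept intact), and the prefactor and weight-sharing multiplicities assemble into the constant $\bcnn$ --- whose explicit form, containing the depth factor $\dc+\dfc$ and powers of $\cscale,w,\wconv$, is recorded in the Appendix. Hence $\text{NTK}^{\texttt{CONV}}(x,x')\to\bcnn\cdot\text{NPK}^{\texttt{CONV}}(x,x')$ in the same $\to$ sense as in \Cref{th:fcprev}, where $\text{NPK}^{\texttt{CONV}}(x,x')=\ip{\phi(x),\phi(x')}$ is evaluated at hard gates.

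It then remains to evaluate this inner product. Each bundle contributes $\bigl(\tfrac{1}{\din}\sum_{t}x(\iota(t))A(x,b,t)\bigr)\bigl(\tfrac{1}{\din}\sum_{t'}x'(\iota(t'))A(x',b,t')\bigr)$. Reindexing the inner double sum by the relative cyclic shift $r=t'-t\pmod{\din}$ and invoking translation-equivariance of circular convolution twice --- $\iota(t+r)=\iota(t)+r\pmod{\din}$, so $x'(\iota(t+r))=rot(x',r)(\iota(t))$, and $G_l(x')(s+r)=G_l(rot(x',r))(s)$, so $A(x',b,t+r)=A(rot(x',r),b,t)$ --- the contribution of bundle $b$ becomes $\tfrac{1}{\din^2}\sum_{r=0}^{\din-1}\sum_{t}x(\iota(t))\,rot(x',r)(\iota(t))\,A(x,b,t)\,A(rot(x',r),b,t)$. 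Summing over all bundles and grouping the $(b,t)$ pairs by the input coordinate $i=\iota(t)$, the coefficient of $x(i)\,rot(x',r)(i)$ is exactly the number of (expanded) paths through $i$ that are simultaneously active for $x$ and for $rot(x',r)$, namely $\textbf{overlap}(i,x,rot(x',r))$ (cf.\ \Cref{def:overlap}). Therefore $\text{NPK}^{\texttt{CONV}}(x,x')=\tfrac{1}{\din^2}\sum_{r=0}^{\din-1}\ip{x,rot(x',r)}_{\textbf{overlap}(\cdot,x,rot(x',r))}$, and multiplying by $\bcnn$ gives the claimed formula; rotational invariance is manifest, since cyclically shifting $x$ (or $x'$) merely permutes the summands.

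The main obstacle I anticipate is the weight-sharing bookkeeping: defining the orbits/bundles precisely enough that $\hat y=\ip{\phi_{\Tf}(x),v_{\Tv}}$ holds and the bundled NPV is well defined, and --- more delicately --- tracking the combinatorial multiplicities after differentiation and expectation so that they assemble into exactly the constant $\bcnn$ given in the Appendix; unlike the fully connected case the number of genuinely independent weights in a convolutional layer is far smaller (of order $w\wconv$ per layer rather than order $w^2$), which alters the bookkeeping, so the concentration step must be organized around the $w$ channels while keeping the $\din$ spatial slots fixed. A secondary, easier point is to confirm that global \emph{average} pooling really acts as the uniform $\tfrac{1}{\din}$ mask claimed --- which is precisely what yields the clean $\sum_{r}$, rather than the input-dependent reweighting one would obtain from max pooling.
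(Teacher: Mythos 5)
Your proposal is correct and follows essentially the same route as the paper: bundling the weight-sharing paths, reindexing each bundle's double sum by the relative cyclic shift $r$, and identifying the resulting coefficient of $x(i)\,rot(x',r)(i)$ with $\textbf{overlap}(i,x,rot(x',r))$ is exactly the paper's proof of \Cref{lm:cnnnpk}, with the $\tfrac{1}{\din^2}$ coming from the two global-average-pooling masks. The only difference is that you re-derive the $w\to\infty$ limit $\text{NTK}^{\texttt{CONV}}\to\bcnn\cdot\text{NPK}^{\texttt{CONV}}$ (diagonal survival under \Cref{assmp:main} plus concentration over channels), which the paper simply imports from Theorem~5.1 of \citep{npk}; in assembling the constant, note that $\bcnn$ is a sum of two terms because convolutional and fully connected layers carry different scalings $\sigcnn$ and $\sigfc$, rather than a single $(\dc+\dfc)\,\sigma^{2(\dc+\dfc-1)}$ factor.
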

$\bullet$ $\sum_{r=0}^{\din-1} \ip{x,rot(x',r)}_{\textbf{overlap}(\cdot, x,rot(x',r))}=\sum_{r=0}^{\din-1}\sum_{i=1}^{\din} x(i) rot(x',r)(i)\textbf{overlap}(i,x,rot(x',r))$, where the inner `$\Sigma$' is the inner product between $x$ and $rot(x',r)$ weighted by $\textbf{overlap}$ and the outer `$\Sigma$' covers all possible rotations, which in addition to the fact that all the variables internal to the network rotate as the input rotates, results in the rotational invariance.  It was observed by \cite{arora2019exact} that networks with global-average-pooling are better than vanilla convolutional networks. The rotational invariance holds for convolutional architectures only in the presence of global-pooling.  So, this result explains why global-average-pooling helps. That said, rotational invariance is not a new observation; it was shown by \cite{li2019enhanced} that  prediction using CNTK-GAP is equivalent to prediction using CNTK without GAP but with full translation data augmentation  (same as rotational invariance) with wrap-around at the boundary (same as circular convolution). However, \Cref{th:conv} is a necessary result, in that, it shows rotational invariance is recovered in the dual view as well. 

$\bullet$ The expression in \Cref{th:conv} becomes $\frac{\bcnn}{{\din}^2} \cdot \sum_{r=0}^{\din-1}\sum_{i=1}^{\din} \textbf{overlap}(i,x,rot(x',r))$  for a \textbf{constant $\mathbf{1}$ input}. The key novel insight is that the rotational invariance is not lost and $\textbf{overlap}$ matrix measures the correlation of the paths which in turn depends on the correlation of the gates.

$\bullet$ \textbf{Destroying structure by permuting the layers} does not destroy the rotational invariance in \Cref{th:conv}. This is because, due to circular convolutions all the internal variables of the network rotate as the input rotates. Permuting the layers only affects the ordering of the layers, and does not affect the fact that the gates rotate if the input rotates, and correlation in the gates is not lost.

\subsubsection{Residual Networks With Skip Connections (ResNet): Ensemble Of Kernels}
We consider a ResNet with `$(b+2)$' blocks and `$b$' skip connections between the blocks. Each block is a fully connected (FC) network of depth `$\dblock$' and width `$w$'. There are $2^b$ many sub-FCNs within this ResNet (see \Cref{def:subfcdnn}).
Note that the blocks being fully connected is for expository purposes, and the result continue to hold for any kind of block.

\begin{definition}\label{def:subfcdnn}[Sub FCNs]
Let $2^{[b]}$ denote the power set of $[b]$ and let $\J\in 2^{[b]}$ denote any subset of $[b]$. Define the`$\J^{th}$' sub-FCN of the ResNet to be the fully connected network obtained by (i) including  $\text{block}_{j},\forall j\in \J$  and (ii) ignoring $\text{block}_{j},\forall j\notin \J$. 
\end{definition}
\begin{theorem}\label{th:res} 
Let $\text{NPK}^{\texttt{FC}}_{\J}$ be the NPK of the $\J^{th}$ sub-FCN, and $\bfc^{\J}$ (see Appendix for expansion of $\bfc^{\J}$) be the associated constant. Under \Cref{assmp:main}, we have:
\begin{align*}
\text{NTK}^{\texttt{RES}}\rightarrow \sum_{\J\in 2^{[b]}}  \bfc^{\J} \text{NPK}^{\texttt{FC}}_{\J}, \,\, \text{as}\,\,  w\rightarrow\infty
\end{align*}
\end{theorem}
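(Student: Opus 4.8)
The plan is to exploit the standard observation that a network with $b$ skip connections, once unrolled path-wise, is an ensemble of $2^b$ networks, one for each subset $\J\in 2^{[b]}$ of optional blocks that a path may traverse. First I would show that the neural path feature and neural path value vectors of the ResNet decompose as a direct sum over $\J\in 2^{[b]}$: a path either enters a given block or takes the skip connection around it, so the index set of all paths partitions according to the set $\J$ of blocks actually entered, and the paths indexed by $\J$ are \emph{exactly} the paths of the $\J^{th}$ sub-FCN of \Cref{def:subfcdnn} (with the corresponding gates, and re-using the shared block weights). Applying \Cref{prop:npf-npv} block-wise then gives $\hat{y}_{\Tv}(x)=\sum_{\J\in 2^{[b]}}\hat{y}^{\J}(x)$, where $\hat{y}^{\J}$ is the output of the $\J^{th}$ sub-FCN. (The first and last blocks lie on every path and hence in every sub-FCN; this affects only the constants, and the argument never uses that the blocks are fully connected except when invoking \Cref{th:fc} on the diagonal, which is why the result extends to general blocks.)

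Next, since the NTK is taken with respect to the value-network weights and $\hat{y}_{\Tv}=\sum_{\J}\hat{y}^{\J}$, linearity of the gradient gives
\begin{align*}
\text{NTK}^{\texttt{RES}}(x,x') = \sum_{\J,\J'\in 2^{[b]}} \ip{\nabla_{\Tv}\hat{y}^{\J}(x),\, \nabla_{\Tv}\hat{y}^{\J'}(x')}.
\end{align*}
For the diagonal terms $\J=\J'$, each inner product is precisely the NTK of the fully connected network that is the $\J^{th}$ sub-FCN, so \Cref{th:fc} (equivalently \Cref{th:fcprev}) applies verbatim and yields $\ip{\nabla_{\Tv}\hat{y}^{\J}(x),\nabla_{\Tv}\hat{y}^{\J}(x')}\to \bfc^{\J}\,\text{NPK}^{\texttt{FC}}_{\J}(x,x')$ as $w\to\infty$, with $\bfc^{\J}$ the depth/scale constant of that sub-FCN.

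The crux is the off-diagonal terms $\J\ne\J'$, which I would show vanish as $w\to\infty$. The subtlety is that the sub-FCNs are \emph{not} statistically independent: a block in both $\J$ and $\J'$ contributes the same weights to both. The argument is a parity/expectation one. Fix $\J\ne\J'$ and pick a block $j$ in the symmetric difference, say $j\in\J\setminus\J'$. Expanding $\ip{\nabla_{\Tv}\hat{y}^{\J}(x),\nabla_{\Tv}\hat{y}^{\J'}(x')}$ as a sum over value-network weights $\theta$ and pairs of paths $(p,p')$ with $p$ a path of sub-FCN $\J$ through $\theta$ and $p'$ a path of sub-FCN $\J'$ through $\theta$, each summand is (products of gates)$\times$(product of the remaining weights on $p$)$\times$(product of the remaining weights on $p'$). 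Under \Cref{assmp:main} the weights are i.i.d. mean-zero; every weight of block $j$ on $p$ occurs to the first power in this product and does not occur in $p'$ at all, so the summand has zero expectation, hence the whole off-diagonal inner product has zero mean. A second-moment computation, identical in spirit to the variance bound underlying \Cref{th:fcprev} (the extra block-$j$ factors only help), shows the variance is $O(1/w)$, so the off-diagonal terms concentrate to $0$.

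Combining the diagonal limits with the vanishing off-diagonal terms gives $\text{NTK}^{\texttt{RES}}\to\sum_{\J\in 2^{[b]}}\bfc^{\J}\,\text{NPK}^{\texttt{FC}}_{\J}$, as claimed. (Alternatively one can induct on $b$: peeling the outermost skip connection writes the $b$-skip output as the sum of two $(b{-}1)$-skip outputs, ``with'' and ``without'' the peeled block; but the path-expansion makes both the ensemble structure and the bookkeeping of the constants $\bfc^{\J}$ most transparent.) I expect the main obstacle to be precisely the off-diagonal step: tracking which weights are shared between sub-FCNs, confirming that every cross term carries at least one unmatched mean-zero factor, and supplying the attendant second-moment bound; the rest reduces to the already-established fully connected case.
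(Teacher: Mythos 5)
Your proposal is correct in outline and rests on the same structural fact as the paper, but it takes a genuinely more self-contained route to the limit. The paper's proof has two short steps: \Cref{lm:sumofproduct} observes that the NPF of the ResNet is a concatenation of the NPFs of the $2^b$ sub-FCNs of \Cref{def:subfcdnn} (each path belongs to exactly one $\J$), so $\text{NPK}^{\texttt{RES}}=\sum_{\J}\text{NPK}^{\J}$; the passage from NTK to NPK, with the depth-dependent constants $\bfc^{\J}=(|\J|+2)\dblock\,\sigma^{2((|\J|+2)\dblock-1)}$, is then obtained by directly citing Theorem~5.1 of \citep{npk}, which already covers the DGN value network with fixed gates. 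Your first step (output $=\sum_{\J}\hat{y}^{\J}$ with shared weights, via \Cref{prop:npf-npv}) is the same combinatorial decomposition in different clothing. Where you diverge is that instead of invoking the general prior theorem once for the whole ResNet, you re-derive the limit: the diagonal terms are handled by applying \Cref{th:fcprev}/\Cref{th:fc} to each sub-FCN (legitimate, since $\nabla_{\Tv}\hat{y}^{\J}$ is supported on that sub-FCN's weights, which are i.i.d.\ Bernoulli under \Cref{assmp:main}), and the cross terms $\J\neq\J'$ are killed by the parity argument — any block in the symmetric difference contributes unmatched mean-zero weights to first power, and the shared weight $\theta$ cannot lie in that block — plus a claimed $O(1/w)$ variance bound. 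This buys transparency: it makes explicit why weight sharing across sub-FCNs produces no cross-kernel contributions and why only the diagonal uses the fully connected structure (your remark on general blocks), facts the paper's citation hides inside the black box. The cost is that the concentration step for the cross terms is asserted rather than proved; in the paper that analysis is subsumed in the cited Theorem~5.1, so your route still owes that second-moment calculation before it is complete, but the architecture of the argument is sound and consistent with the stated constants.
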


$\bullet$ \textbf{Ensemble.} To the best of our knowledge, this is the first theoretical result to show that ResNets have an ensemble structure, where  each kernel in the ensemble, i.e., $\text{NPK}^{\texttt{FC}}_{\J}$ corresponds to one of the $2^b$ sub-architectures (see \Cref{def:subfcdnn}). The ensemble behaviour of ResNet and  presence of $2^b$ architectures was observed by \cite{veit2016residual}, however without any concrete theoretical formalism. 

$\bullet$ Effect of \textbf{constant $\mathbf{1}$ input} is as before for kernels $\text{NPK}^{\texttt{FC}}_{\J}$ and translates to the ensemble $\text{NTK}^{\texttt{RES}}$. 

$\bullet$ \textbf{Destroying structure.} The ResNet inherits the invariances of the block level kernel. In addition, the ensemble structure allows to even remove layers.  \cite{veit2016residual} showed empirically that removing single layers from ResNets at test time does not noticeably affect their performance, and yet removing a layer from architecture such as VGG leads to a dramatic loss in performance. \Cref{th:res} can be seen to provide a theoretical justification for this empirical result. In other words, due to the ensemble structure a ResNet is capable of dealing with failure of components. While failure of component itself does not occur unless one makes them fail purposefully as done in \citep{veit2016residual},  the insight is that even if one or many of the kernels in the ensemble are corrupt and the good ones can compensate.

\subsection{Numerical Experiments}\label{sec:dlgn}
\begin{figure}[!b]
\centering
\begin{minipage}{0.11\columnwidth}
\centering
\resizebox{0.99\columnwidth}{!}{
\begin{tikzpicture}

\node []  (fntext)at (0.5,2.55) {C4GAP};

\node [] (output) at (1,-1.25){$\hat{y}(x)$};

\node [] (fc) at (1,-0.25){\tiny{$FC$}};
\draw [-stealth,thick]  (fc.south) -- (output.north);
\node [] (gap) at (1,0.5){\tiny{GAP}};
\draw [-stealth,thick]  (gap.south) -- (fc.north);

\node [] (relu4) at (1,1.25){\tiny{ReLU}};
\draw [-stealth,thick]  (relu4.south) -- (gap.north);
\node [] (c4) at (1,2.0){\tiny{$C_4$}};
\draw [-stealth,thick]  (c4.south) -- (relu4.north);

\node [] (relu3) at (0,2.0){\tiny{ReLU}};
\draw [-stealth,thick]  (relu3.east) -- (c4.west);
\node [] (c3) at (0,1.25){\tiny{$C_3$}};
\draw [-stealth,thick]  (c3.north) -- (relu3.south);

\node [] (relu2) at (0,0.5){\tiny{ReLU}};
\draw [-stealth,thick]  (relu2.north) -- (c3.south);
\node [] (c2) at (0,-0.25){\tiny{$C_2$}};
\draw [-stealth,thick]  (c2.north) -- (relu2.south);

\node [] (relu1) at (0,-1.0){\tiny{ReLU}};
\draw [-stealth,thick]  (relu1.north) -- (c2.south);
\node [] (c1) at (0,-1.75){\tiny{$C_1$}};
\draw [-stealth,thick]  (c1.north) -- (relu1.south);

\node [] (x) at (0,-2.5){$x$};
\draw [-stealth,thick ]  (x.north) -- (c1.south);


\end{tikzpicture}
}
\end{minipage}
\begin{minipage}{0.40\columnwidth}
\centering
\resizebox{0.99\columnwidth}{!}{
\begin{tikzpicture}

\node []  (fntext)at (1.75+3,2.25) {C4GAP-DGN};


\node [] (dgn-f-c4) at (6.25,1.5){\tiny{$C^{\text{f}}_4$}};

\node [rotate=-90] (dgn-relu-3) at (5.5,1.5){\tiny{ReLU}};
\node [] (dgn-f-c3) at (4.75,1.5){\tiny{$C^{\text{f}}_3$}};
\draw [-stealth,thick]   (dgn-f-c3.east) -- (dgn-relu-3.south);
\draw [-stealth,thick]   (dgn-relu-3.north) -- (dgn-f-c4.west);

\node [rotate=-90] (dgn-relu-2) at (4.0,1.5){\tiny{ReLU}};
\node [] (dgn-f-c2) at (3.25,1.5){\tiny{$C^{\text{f}}_2$}};
\draw [-stealth,thick]   (dgn-f-c2.east) -- (dgn-relu-2.south);
\draw [-stealth,thick]   (dgn-relu-2.north) -- (dgn-f-c3.west);

\node [rotate=-90] (dgn-relu-1) at (2.5,1.5){\tiny{ReLU}};
\node [] (dgn-f-c1) at (1.75,1.5){\tiny{$C^{\text{f}}_1$}};
\draw [-stealth,thick]   (dgn-f-c1.east) -- (dgn-relu-1.south);
\draw [-stealth,thick]   (dgn-relu-1.north) -- (dgn-f-c2.west);

\node [] (dgn-f-input) at (1.75,0){$x^{\text{f}}$};
\draw [-stealth,thick]   (dgn-f-input.north) -- (dgn-f-c1.south);

\node []  (dgn-output) at (7.75,-0.5) {$\hat{y}(x)$};
\node [] (dgn-smax) at (7.75,-1.5){\tiny{FC}};
\draw [-stealth,thick]   (dgn-smax.north)--(dgn-output.south);

\node [rotate=-90] (dgn-gap) at (7.75,-2.5){\tiny{GAP}};
\draw [-stealth,thick]   (dgn-gap.west)--(dgn-smax.south);

\node [rotate=-90] (dgn-galu-4) at (7,-2.5){\tiny{GaLU}};
\draw [-stealth,thick]   (dgn-galu-4.north) -- (dgn-gap.south);

\node [] (dgn-v-c4) at (6.25,-2.5){\tiny{$C^{\text{v}}_4$}};
\draw [-stealth,thick]   (dgn-v-c4.east) -- (dgn-galu-4.south);

\node [rotate=-90] (dgn-galu-3) at (5.5,-2.5){\tiny{GaLU}};
\node [] (dgn-v-c3) at (4.75,-2.5){\tiny{$C^{\text{v}}_3$}};
\draw [-stealth,thick]   (dgn-v-c3.east) -- (dgn-galu-3.south);
\draw [-stealth,thick]   (dgn-galu-3.north) -- (dgn-v-c4.west);

\node [rotate=-90] (dgn-galu-2) at (4,-2.5){\tiny{GaLU}};
\node [] (dgn-v-c2) at (3.25,-2.5){\tiny{$C^{\text{v}}_2$}};
\draw [-stealth,thick]   (dgn-v-c2.east) -- (dgn-galu-2.south);
\draw [-stealth,thick]   (dgn-galu-2.north) -- (dgn-v-c3.west);

\node [rotate=-90] (dgn-galu-1) at (2.5,-2.5){\tiny{GaLU}};
\node [] (dgn-v-c1) at (1.75,-2.5){\tiny{$C^{\text{v}}_1$}};

\draw [-stealth,thick]   (dgn-v-c1.east) -- (dgn-galu-1.south);
\draw [-stealth,thick]   (dgn-galu-1.north) -- (dgn-v-c2.west);

\node [] (dgn-input) at (1.75,-1){$x^{\text{v}}$};
\draw [-stealth,thick]   (dgn-input.south) -- (dgn-v-c1.north);

\node[] (dgn-gating-1-up) at (2.5,0.5){\tiny{$G_{1}$}};
\draw [-stealth,thick]   (dgn-f-c1.east) to[out=-90,in=90] (dgn-gating-1-up.north);

\node[] (dgn-gating-2-up) at (4,0.5){\tiny{$G_{2}$}};
\draw [-stealth,thick]   (dgn-f-c2.east) to[out=-90,in=90] (dgn-gating-2-up.north);

\node[] (dgn-gating-3-up) at (5.5,0.5){\tiny{$G_{3}$}};
\draw [-stealth,thick]   (dgn-f-c3.east) to[out=-90,in=90] (dgn-gating-3-up.north);

\node[] (dgn-gating-4-up) at (7,0.5){\tiny{$G_{4}$}};
\draw [-stealth,thick]   (dgn-f-c4.east) to[out=-90,in=90] (dgn-gating-4-up.north);

\node[] (dgn-gating-1) at (2.5,-1.5){\tiny{$G_{i_1}$}};
\draw [-stealth,thick]   (dgn-gating-1.south) -- (dgn-galu-1.west);

\node[] (dgn-gating-2) at (4,-1.5){\tiny{$G_{i_2}$}};
\draw [-stealth,thick]   (dgn-gating-2.south) -- (dgn-galu-2.west);

\node[] (dgn-gating-3) at (5.5,-1.5){\tiny{$G_{i_3}$}};
\draw [-stealth,thick]   (dgn-gating-3.south) -- (dgn-galu-3.west);

\node[] (dgn-gating-4) at (7,-1.5){\tiny{$G_{i_4}$}};
\draw [-stealth,thick]   (dgn-gating-4.south) -- (dgn-galu-4.west);

\node[] (permutation) at (4.75,-0.5){\small{Layer Permutation}};

\draw [-]   (dgn-gating-1-up.south) -- (permutation.north);
\draw [-]   (dgn-gating-4-up.south) -- (permutation.north);
\draw [-]   (dgn-gating-2-up.south) -- (permutation.north);
\draw [-]   (dgn-gating-3-up.south) -- (permutation.north);

\draw [-]  (permutation.south) --  (dgn-gating-1.north)  ;
\draw [-]  (permutation.south) --  (dgn-gating-2.north)  ;
\draw [-]  (permutation.south) --  (dgn-gating-3.north)  ;
\draw [-]  (permutation.south) --  (dgn-gating-4.north)  ;


\end{tikzpicture}
}
\end{minipage}
\begin{minipage}{0.40\columnwidth}
\centering
\resizebox{0.99\columnwidth}{!}{
\begin{tikzpicture}

\node []  (fntext)at (-8.5+3,2.25) {C4GAP-DLGN};


\node [] (dgn1-f-c4) at (-4,1.5){\tiny{$C^{\text{f}}_4$}};
\node [] (dgn1-f-c3) at (-5.5,1.5){\tiny{$C^{\text{f}}_3$}};
\node [] (dgn1-f-c2) at (-7,1.5){\tiny{$C^{\text{f}}_2$}};
\node [] (dgn1-f-c1) at (-8.5,1.5){\tiny{$C^{\text{f}}_1$}};
\node [] (dgn1-input-f) at (-8.5,0){$x^{\text{f}}$};
\draw [-stealth,thick]   (dgn1-f-c3.east) -- (dgn1-f-c4.west);
\draw [-stealth,thick]   (dgn1-f-c2.east) -- (dgn1-f-c3.west);
\draw [-stealth,thick]   (dgn1-f-c1.east) -- (dgn1-f-c2.west);
\draw [-stealth,thick]   (dgn1-input-f.north) -- (dgn1-f-c1.south);

\node []  (dgn1-output) at (-2.5,-0.5) {$\hat{y}(x)$};

\node [] (dgn1-smax) at (-2.5,-1.5){\tiny{FC}};
\draw [-stealth,thick]   (dgn1-smax.north)--(dgn1-output.south);

\node [rotate=-90] (dgn1-gap) at (-2.5,-2.5){\tiny{GAP}};
\draw [-stealth,thick]   (dgn1-gap.west)--(dgn1-smax.south);

\node [rotate=-90] (dgn1-galu-4) at (-3.25,-2.5){\tiny{GaLU}};
\draw [-stealth,thick]   (dgn1-galu-4.north)--(dgn1-gap.south);

\node [] (dgn1-v-c4) at (-4,-2.5){\tiny{$C^{\text{v}}_4$}};
\draw [-stealth,thick]   (dgn1-v-c4.east) -- (dgn1-galu-4.south);

\node [rotate=-90] (dgn1-galu-3) at (-4.75,-2.5){\tiny{GaLU}};
\draw [-stealth,thick]   (dgn1-galu-3.north) -- (dgn1-v-c4.west);

\node [] (dgn1-v-c3) at (-5.5,-2.5){\tiny{$C^{\text{v}}_3$}};
\draw [-stealth,thick]   (dgn1-v-c3.east) -- (dgn1-galu-3.south);

\node [rotate=-90] (dgn1-galu-2) at (-6.25,-2.5){\tiny{GaLU}};
\draw [-stealth,thick]   (dgn1-galu-2.north) -- (dgn1-v-c3.west);

\node [] (dgn1-v-c2) at (-7,-2.5){\tiny{$C^{\text{v}}_2$}};
\draw [-stealth,thick]   (dgn1-v-c2.east) -- (dgn1-galu-2.south);

\node [rotate=-90] (dgn1-galu-1) at (-7.75,-2.5){\tiny{GaLU}};
\draw [-stealth,thick]   (dgn1-galu-1.north) -- (dgn1-v-c2.west);

\node [] (dgn1-v-c1) at (-8.5,-2.5){\tiny{$C^{\text{v}}_1$}};
\draw [-stealth,thick]   (dgn1-v-c1.east) -- (dgn1-galu-1.south);

\node [] (dgn1-v-input) at (-8.5,-1){$x^{\text{v}}$};

\draw [-stealth,thick]   (dgn1-v-input.south) -- (dgn1-v-c1.north);

\node[] (dgn1-gating-4-up) at (-3.25,0.5){\tiny{$G_{4}$}};
\draw [-stealth,thick]   (dgn1-f-c4.east) to[out=-90,in=90] (dgn1-gating-4-up.north);

\node[] (dgn1-gating-3-up) at (-4.75,0.5){\tiny{$G_{3}$}};
\draw [-stealth,thick]   (dgn1-f-c3.east) to[out=-90,in=90] (dgn1-gating-3-up.north);

\node[] (dgn1-gating-2-up) at (-6.25,0.5){\tiny{$G_{2}$}};
\draw [-stealth,thick]   (dgn1-f-c2.east) to[out=-90,in=90] (dgn1-gating-2-up.north);

\node[] (dgn1-gating-1-up) at (-7.75,0.5){\tiny{$G_{1}$}};
\draw [-stealth,thick]   (dgn1-f-c1.east) to[out=-90,in=90] (dgn1-gating-1-up.north);

\node[] (dgn1-gating-4) at (-3.25,-1.5){\tiny{$G_{i_4}$}};
\draw [-stealth,thick]   (dgn1-gating-4.south) -- (dgn1-galu-4.west);

\node[] (dgn1-gating-3) at (-4.75,-1.5){\tiny{$G_{i_3}$}};
\draw [-stealth,thick]   (dgn1-gating-3.south) -- (dgn1-galu-3.west);

\node[] (dgn1-gating-2) at (-6.25,-1.5){\tiny{$G_{i_2}$}};
\draw [-stealth,thick]   (dgn1-gating-2.south) -- (dgn1-galu-2.west);

\node[] (dgn1-gating-1) at (-7.75,-1.5){\tiny{$G_{i_1}$}};
\draw [-stealth,thick]   (dgn1-gating-1.south) -- (dgn1-galu-1.west);

\node[] (permutation1) at (-5.5,-0.5){\small{Layer Permutation}};

\draw [-]   (dgn1-gating-1-up.south) -- (permutation1.north);
\draw [-]   (dgn1-gating-4-up.south) -- (permutation1.north);
\draw [-]   (dgn1-gating-2-up.south) -- (permutation1.north);
\draw [-]   (dgn1-gating-3-up.south) -- (permutation1.north);

\draw [-]  (permutation1.south) --  (dgn1-gating-1.north)  ;
\draw [-]  (permutation1.south) --  (dgn1-gating-2.north)  ;
\draw [-]  (permutation1.south) --  (dgn1-gating-3.north)  ;
\draw [-]  (permutation1.south) --  (dgn1-gating-4.north)  ;


\end{tikzpicture}
}
\end{minipage}
\centering
\resizebox{.90\columnwidth}{!}{
\begin{tabular}{cccccccc}
\toprule
\multicolumn{8}{c}{Table I}\\
\toprule
Dataset 					& Permute	&C4GAP 				&DGN$(x,x)$ 			&DGN$(x,\mathbf{1})$ 	&DLGN$(x,x)$ 			&DLGN$(x,\mathbf{1})$ 	&$\frac{\text{DLGN}(x,\mathbf{1})}{\text{DNN}}$\\\midrule
\multirow{2}{*}{CIFAR10}		& No			&80.5\tiny{$\pm$0.4} 	&77.4\tiny{$\pm$0.3} 	& 77.5\tiny{$\pm$0.2} 	&75.4\tiny{$\pm$0.3} 	&75.4\tiny{$\pm$0.2}		&$93.66$	\\
						& Yes 		&-- 					&77.3\tiny{$\pm$0.5} 	&77.9\tiny{$\pm$0.6}		&75.9\tiny{$\pm$0.5} 	&76.0\tiny{$\pm$0.5}		&$94.40$	\\\midrule
\multirow{2}{*}{CIFAR100}		& No  		&51.8\tiny{$\pm$0.4} 	&47.4\tiny{$\pm$0.2}		&47.3\tiny{$\pm$0.3} 	&47.4\tiny{$\pm$0.1} 	&48.0\tiny{$\pm$0.2}		&$92.66$\\
						& Yes 		& -- 					&48.4\tiny{$\pm$0.8} 	&49.2\tiny{$\pm$0.9} 	&47.5\tiny{$\pm$1.0} 	&48.4\tiny{$\pm$0.9}		&$93.43$\\
\bottomrule
\end{tabular}
}
\resizebox{0.9\columnwidth}{!}{
\begin{tabular}{cccccccccc}
\toprule
\multicolumn{8}{c}{Table II}\\
\toprule
Dataset 					& Model 		&DNN 				&DGN$(x,x)$ 			&DGN$(x,\mathbf{1})$ 	&DLGN$(x,x)$ 			&DLGN$(x,\mathbf{1})$ 	&$\frac{\text{DLGN}(x,\mathbf{1})}{\text{DNN}}$\\\midrule		
\multirow{2}{*}{CIFAR10}		&VGG16 		&93.6\tiny{$\pm$0.2} 	& 93.0\tiny{$\pm$0.1}  	&93.0\tiny{$\pm$0.1}   	&87.0\tiny{$\pm$0.1}		&87.0\tiny{$\pm$0.2}		&$92.94$	\\
						&ResNet110 	&94.0\tiny{$\pm$0.2} 	& 93.3\tiny{$\pm$0.2} 	&93.2\tiny{$\pm$0.1} 	&87.9\tiny{$\pm$0.2}   	&87.8\tiny{$\pm$0.1} 	&$93.40$	\\\midrule
\multirow{2}{*}{CIFAR100}		&VGG16	 	&73.4\tiny{$\pm$0.3}  	&70.3\tiny{$\pm$0.1} 	&70.5\tiny{$\pm$0.2} 	&61.5\tiny{$\pm$0.2}		&61.5\tiny{$\pm$0.1}		&$\mathbf{83.78}$	\\
						&ResNet110 	&72.7\tiny{$\pm$0.2}		&70.8\tiny{$\pm$0.2} 	&70.8\tiny{$\pm$0.4}		&62.3\tiny{$\pm$0.2} 	&62.7\tiny{$\pm$0.3} 	&$86.24$	\\
\bottomrule
\end{tabular}
}
\caption{\small{Here the gates $G_1, G_2, G_3, G_4$ are generated by the feature network and are permuted as $G_{i_1},G_{i_2},G_{i_3},G_{i_4}$ before applying to the value network. $C_1,C_2,C_3,C_4$ have $128$ filters each. \textbf{Table I and II:} All columns (except the last) show the $\%$ test accuracy on CIFAR-10 and CIFAR-100, and $\%$ of DNN performance recovered by DLGN is in the last column. \textbf{Table I:} For each dataset, the top row has results for vanilla models without permutations (the results are averaged over $5$ runs) and the bottom row has results of $4!-1=23$ permutations (except the identity) for each model (the results are averaged over the $23$ permutations). }}
\label{fig:c4gap}
\end{figure}

\Cref{sec:analysis}  presented theoretical results which demystified the layer-by-layer view in value network, in this section we will verify these theoretical results in experiments. We then show that DLGN recovers major part of performance of state-of-the-art DNNs on CIFAR-10 and CIFAR-100.

\textbf{Setup Details.} We consider $3$ DNN architectures, C4GAP, VGG-16 and Resnet-110, and their DGN and DLGN counterparts. Here C4GAP is a simple model (achieves about $80\%$ accuracy on CIFAR-10), mainly used to verify the theoretical insights in \Cref{sec:analysis}. VGG-16 and Resnet-110 are chosen for their state-of-the-art performance on CIFAR-10 and CIFAR-100. All models are trained using off-the-shelf optimisers (for more details, see \Cref{sec:expdetails}). The DGN and DLGN are trained from scratch, i.e., both the feature and value network are initialised at random and trained. In DGN and DLGN, we use soft gating (see \Cref{fig:dgn}) so that gradient flows through the feature network and the gates are learnt (we chose $\beta=10$).  In what follows, we use the notation DGN$(\xf,\xv)$ and DLGN$(\xf,\xv)$ where $\xf$ and $\xv$ denote the input to the value and feature networks respectively. For instance, DGN$(x,x)$ will mean that both the value and feature network of the DGN is provided the image as input, and DLGN$(x,\mathbf{1})$ will mean that the feature network is given with the image as input and the value network is given a constant $\mathbf{1}$ as input. 

\textbf{Disentangling Value Network.} We show that destroying the layer-by-layer structure via permutations and providing a constant $\mathbf{1}$ input do not degrade performance. Since our aim here is not state-of-the-art performance, we use C4GAP with $4$ convolutional layers which achieves only about $80\%$ test accuracy on CIFAR-10, however, enables us to run all the $4!=24$ layer permutations. The C4GAP, DGN and DLGN with layer permutations are shown in \Cref{fig:c4gap}. Once a permutation is chosen, it is fixed during both training and testing. The results in Table I of \Cref{fig:c4gap} show that there is no significant difference in performance between DGN$(x,x)$ vs DGN$(x,\mathbf{1})$, and DLGN$(x,x)$ vs DLGN$(x,\mathbf{1})$, i.e., constant $\mathbf{1}$ input does not hurt. Also, there is no significant difference between the models without permutations and the models with permutations.  These counter intuitive and surprising results are difficult to explain using the commonly held `sophisticated features are learnt layer-by-layer' view. However, neither the permutations or the constant $\mathbf{1}$ input destroys the correlation in the gates, and are not expected to degrade performance as per the insights in \Cref{sec:analysis}.

\textbf{DLGN Performance.} For this we choose VGG-16 and Resnet-110. The results in Table II of \Cref{fig:c4gap} show that the DLGN recovers more than $83.5\%$ (i.e., $83.78\%$ in the worst case) of the performance of the state-of-the-art DNN. 
While entanglement in the DNNs enable their improved performance,  the `disentangled and interpretable'  computations in the DLGN recovers most part of the performance. 

\subsection{Is DLGN a Universal Spectral Approximator}
Motivated by the success of DLGN, we further break it down into DLGN-\emph{Shallow Features} (DLGN-SF), wherein, the feature network is a collection of shallow single matrix multiplications. We compare a shallow DNN with ReLU called C1GAP against DLGN-SF of C4GAP (see \Cref{fig:shallow}) and VGG-16 (see Appendix). The results are in \Cref{fig:shallow}, based on which we observe the following:

$\bullet$ \textbf{Power of depth in value network and lifting to dual space.} Both C1GAP and C4GAP-DLGN-SF were trained with identical batch size, optimiser and learning rate (chosen to be the best for C1GAP). The performance of C1GAP at $200$ epochs is $\sim10\%$ lower than that of C4GAP-DLGN-SF. After $2000$ epochs of training and ensembling 16 such C1GAP's as C1GAP-\texttt{16-ENS} closes the gap within $\sim 3\%$ on C4GAP-DLGN-SF. Yet, a deeper architecture VGG-16-DLGN-SF is $\sim10\%$ better than C1GAP-\texttt{16-ENS}.  Note that both VGG-16-DLGN-SF and C1GAP-\texttt{16-ENS} have gates for $16$ layers produced in a shallow manner. While in a C1GAP-\texttt{16-ENS}, `16' C1GAPs are ensembled, in VGG-16-DLGN-SF these gates for 16 layers are used as gating signals to turn `on/off' the GaLUs laid depth-wise as 16 layers of the value network, which helps to lift the computations to the dual space. Thus, using the gates to \textbf{lift} (instead of ensembling) the computations to the dual space in the value network is playing a critical role, investigating which is an important future work.

$\bullet$ \textbf{Power of depth in feature network.} By comparing CIFAR-100 performance of VGG-16-DLGN-SF in \Cref{fig:shallow} and that of VGG-16-DLGN in \Cref{fig:c4gap}, we see $\sim 6\%$ improvement if we have a deep linear network instead of many shallow linear networks as the feature network. This implies depth helps even if the feature network is entirely linear, investigating which is an important future work. 

$\bullet$ \textbf{DGN vs DLGN} In Table I and II of \Cref{fig:c4gap}, the difference between DGN and DNN is minimal (about $3\%$), however, the difference between DLGN and DNN is significantly large. Thus, it is important to understand the role of the ReLUs in the feature network of DGN. It is interesting to know whether this is simpler than understanding the DNN with ReLUs itself.

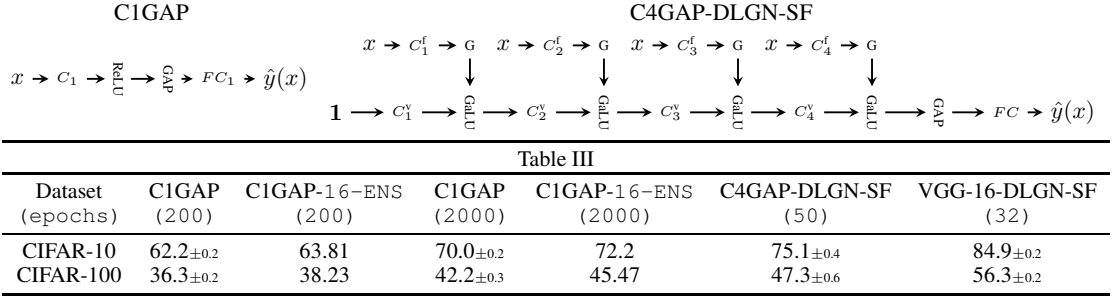
\begin{figure}
\centering
\resizebox{0.9\columnwidth}{!}{
\begin{tikzpicture}
\node []  (fntext)at (0.5+5.25,1.0) {C4GAP-DLGN-SF};


\node [] (dgn-f-input) at (0.5,1.5-1){$x$};
\node [] (dgn-f-c4) at (8-0.75,1.5-1){\tiny{$C^{\text{f}}_4$}};
\node [] (dgn-f-x-4) at (8-0.75-0.75,1.5-1){$x$};
\draw [-stealth,thick]   (dgn-f-x-4.east) -- (dgn-f-c4.west);

\node [] (dgn-f-c3) at (6-0.75,1.5-1){\tiny{$C^{\text{f}}_3$}};
\node [] (dgn-f-x-3) at (6-0.75-0.75,1.5-1){$x$};
\draw [-stealth,thick]   (dgn-f-x-3.east) -- (dgn-f-c3.west);

\node [] (dgn-f-c2) at (3.5-0.75+0.5,1.5-1){\tiny{$C^{\text{f}}_2$}};
\node [] (dgn-f-x-2) at (3.5-0.75+0.5-0.75,1.5-1){$x$};
\draw [-stealth,thick]   (dgn-f-x-2.east) -- (dgn-f-c2.west);

\node [] (dgn-f-c1) at (2-0.75,1.5-1){\tiny{$C^{\text{f}}_1$}};

\draw [-stealth,thick]   (dgn-f-input.east) -- (dgn-f-c1.west);

\node []  (dgn-output) at (11,-0.5) {$\hat{y}(x)$};

\node [] (dgn-smax) at (10,-0.5){\tiny{$FC$}};
\draw [-stealth,thick]   (dgn-smax.east) -- (dgn-output.west);
;
\node [rotate=-90] (dgn-gap) at (9,-0.5){\tiny{GAP}};
\draw [-stealth,thick]   (dgn-gap.north) -- (dgn-smax.west);

\node [rotate=-90] (dgn-galu-4) at (8,-0.5){\tiny{GaLU}};

\draw [-stealth,thick]   (dgn-galu-4.north) -- (dgn-gap.south);
\node [] (dgn-v-c4) at (7,-0.5){\tiny{$C^{\text{v}}_4$}};
\draw [-stealth,thick]   (dgn-v-c4.east) -- (dgn-galu-4.south);

\node [rotate=-90] (dgn-galu-3) at (6,-0.5){\tiny{GaLU}};
\draw [-stealth,thick]   (dgn-galu-3.north) -- (dgn-v-c4.west);

\node [] (dgn-v-c3) at (5,-0.5){\tiny{$C^{\text{v}}_3$}};
\draw [-stealth,thick]   (dgn-v-c3.east) -- (dgn-galu-3.south);

\node [rotate=-90] (dgn-galu-2) at (4,-0.5){\tiny{GaLU}};
\draw [-stealth,thick]   (dgn-galu-2.north) -- (dgn-v-c3.west);

\node [] (dgn-v-c2) at (3,-0.5){\tiny{$C^{\text{v}}_2$}};
\draw [-stealth,thick]   (dgn-v-c2.east) -- (dgn-galu-2.south);

\node [rotate=-90] (dgn-galu-1) at (2.0,-0.5){\tiny{GaLU}};
\draw [-stealth,thick]   (dgn-galu-1.north) -- (dgn-v-c2.west);

\node [] (dgn-v-c1) at (1,-0.5){\tiny{$C^{\text{v}}_1$}};
\draw [-stealth,thick]   (dgn-v-c1.east) -- (dgn-galu-1.south);

\node [] (dgn-v-input) at (0.0,-0.5){$\mathbf{1}$};

\draw [-stealth,thick]   (dgn-v-input.east) -- (dgn-v-c1.west);

\node[] (dgn-gating-1) at (2,0.5){\tiny{G}};
\draw [-stealth,thick]   (dgn-f-c1.east)-- (dgn-gating-1.west);

\node[] (dgn-gating-2) at (3.5+0.5,0.5){\tiny{G}};
\draw [-stealth,thick]   (dgn-f-c2.east)-- (dgn-gating-2.west);

\node[] (dgn-gating-3) at (6,0.5){\tiny{G}};
\draw [-stealth,thick]   (dgn-f-c3.east) -- (dgn-gating-3.west);

\node[] (dgn-gating-4) at (8,0.5){\tiny{G}};
\draw [-stealth,thick]   (dgn-f-c4.east) -- (dgn-gating-4.west);

\draw [-stealth,thick]   (dgn-gating-1.south) -- (dgn-galu-1.west);

\draw [-stealth,thick]   (dgn-gating-2.south) -- (dgn-galu-2.west);

\draw [-stealth,thick]   (dgn-gating-3.south) -- (dgn-galu-3.west);

\draw [-stealth,thick]   (dgn-gating-4.south) -- (dgn-galu-4.west);

\node []  (fntext)at (-4.75+2,1.0) {C1GAP};

\node []  (c1gap-output) at (-0.75,0) {$\hat{y}(x)$};

\node []  (c1gap-fc) at (-1.75,0) {\tiny{$FC_1$}};

\node [rotate=-90]  (c1gap-gap) at (-2.5,0) {\tiny{GAP}};

\node [rotate=-90]  (c1gap-relu) at (-3.25,0) {\tiny{ReLU}};

\node []  (c1gap-c1) at (-4,0) {\tiny{$C_1$}};

\node []  (c1gap-x) at (-4.75,0) {$x$};

\draw [-stealth,thick]		(c1gap-x.east) 	-- (c1gap-c1.west);

\draw [-stealth,thick]    	(c1gap-c1.east) -- (c1gap-relu.south) ;

\draw [-stealth,thick]     	(c1gap-relu.north) -- (c1gap-gap.south) ;

\draw [-stealth,thick]     	(c1gap-gap.north) -- (c1gap-fc.west) ;

\draw [-stealth,thick]     	(c1gap-fc.east) -- (c1gap-output.west) ;
	
\end{tikzpicture}
}
\resizebox{0.9\columnwidth}{!}{
\begin{tabular}{ccccccc}
\toprule
\multicolumn{7}{c}{Table III}\\
\toprule
Dataset 			&C1GAP 				&C1GAP-\texttt{16-ENS}		&C1GAP 				&C1GAP-\texttt{16-ENS}		&C4GAP-DLGN-SF	&VGG-16-DLGN-SF\\
\texttt{(epochs)} 		&\texttt{(200)}	 		&\texttt{(200)}			&\texttt{(2000)} 		&\texttt{(2000)}				&\texttt{(50)}					&\texttt{(32)}\\\midrule		
CIFAR-10			&62.2\tiny{$\pm$0.2}		&63.81					&70.0\tiny{$\pm$0.2}		&72.2					&75.1\tiny{$\pm$0.4}				&84.9\tiny{$\pm$0.2}\\ 
CIFAR-100		&36.3\tiny{$\pm$0.2}		&38.23					&42.2\tiny{$\pm$0.3}		&45.47					&47.3\tiny{$\pm$0.6}				&56.3\tiny{$\pm$0.2}\\
\bottomrule
\end{tabular}
}
\caption{\small{C1GAP and C4GAP have width = 512 to make them comparable to VGG-16 whose maximum width is 512. The ensemble size is 16 to match the 16 layers of VGG-16. Note that C4GAP in \Cref{fig:c4gap} has width=128. The last two columns show only DLGN-SF$(x,\mathbf{1})$. We observed the performance of DLGN-SF$(x,x)$ to be $\sim 2\%$ lesser and left it out from Table III for sake brevity. All results except for \texttt{ENS} are averaged over $5$ runs.}}
\label{fig:shallow}
\end{figure}

$\bullet$ \textbf{Is DLGN a Universal Spectral Approximator?} The value network realises the NPK which in general is an ensemble (assuming skip connections). The NPK is based on the gates whose pre-activations are generated linearly. It is interesting to ask whether the DLGN via its feature network learns the right linear transformations to extract the relevant spectral features (to tigger the gates) and via its value network learns  the ensembling of kernels (based on gates) in a dataset dependent manner.

\section{Conclusion}
Entanglement of the non-linear and the linear operation in each layer of a DNN makes them uninterpretable. This paper proposed a novel DLGN which disentangled the computations in a DNN with ReLUs into two mathematically interpretable linearities, the `primal' linearity from the input to the pre-activations that trigger the gates, and the `dual' linearity in the path space. DLGN recovers more than $83.5\%$ of performance of state-of-the-art DNNs on CIFAR-10 and CIFAR-100. Based on this success of DLGN, the paper concluded by asking `Is DLGN a universal spectral approximator?'.


\bibliographystyle{unsrtnat}
\bibliography{refs} 
\appendix

\section{Convolution With Global Average Pooling}\label{sec:conv}
In this section, we define NPFs and NPV in the presence of convolution with pooling. This requires three key steps (i) treating pooling layers like gates/masks (see \Cref{def:pooling}) (ii) bundling together the paths that share the same path value
(due to weight sharing in convolutions, see \Cref{def:bundle}),  and (iii) re-defining the NPF and NPV for bundles (see \Cref{def:convnps}). Weight sharing due to convolutions and pooling makes the NPK rotationally invariant \Cref{lm:cnnnpk}. We begin by describing the architecture.

\textbf{Architecture:} We consider (for sake of brevity) a $1$-dimensional\footnote{The results follow in a direct manner to any form of circular convolutions.} convolutional neural network with circular convolutions, with $\dc$ convolutional layers ($l=1,\ldots,\dc$), followed by a \emph{global-average-pooling} layer ($l=\dc+1$) and $\dfc$ ($l=\dc+2,\ldots,\dc+\dfc+1$) fully connected  layers. The convolutional window size is $\wconv<\din$, the number of filters per convolutional layer as well as the width of the FC is $w$. 

\textbf{Indexing:} Here $\iin/\iout$ are the indices (taking values in $[w]$) of the input/output filters. $\icin$ denotes the indices of the convolutional window taking values in $[\wconv]$. $\ifout$ denotes the indices (taking values in $[\din]$, the dimension of input features) of individual nodes in a given output filter. The weights of layers $l\in[\dc]$ are denoted by $\Theta(\icin,\iin,\iout,l)$ and for layers $l\in[\dfc]+\dc$ are denoted by $\Theta(\iin,\iout,l)$. The pre-activations, gating and hidden unit outputs are denoted by $q_{x,\Theta}(\ifout,\iout,l)$,  $G_{x,\Theta}(\ifout,\iout,l)$, and $z_{x,\Theta}(\ifout,\iout,l)$ for layers $l=1,\ldots, \dc$.

\begin{definition}[Circular Convolution]
For $x\in\R^{\din}$, $i\in[\din]$ and $r\in\{0,\ldots,\din-1\}$, define :

(i) $i\oplus r = i+r$, for $i+r \leq \din$ and $i\oplus r =i+r-\din$, for $i+r>\din$.

(ii) $rot(x,r)(i)=x(i\oplus r), i\in[\din]$.

(iii) $q_{x,\Theta}(\ifout,\iout,l)=\sum_{\icin,\iin}\Theta(\icin,\iin,\iout,l)\cdot z_{x,\Theta}(\ifout\oplus (\icin-1),\iin,l-1)$. 
\end{definition}
\begin{definition}[Pooling]\label{def:pooling}
Let $G^{\text{pool}}_{x,\Theta}(\ifout,\iout,\dc+1)$ denote the pooling mask, then we have
\centerline{
$z_{x,\Theta}(\iout, \dc+1) =\sum_{\ifout} z_{x,\Theta}(\ifout,\iout,\dc)\cdot G^{\text{pool}}_{x,\Theta}(\ifout,\iout,\dc+1),$
}
where in the case of \emph{global-average-pooling} $G^{\text{pool}}_{x,\Theta}(\ifout,\iout,\dc+1)=\frac{1}{\din},\forall \iout\in[w], \ifout\in[\din]$.
\end{definition}
\FloatBarrier
\begin{table}[!h]
\centering
\resizebox{1.0\columnwidth}{!}{
\begin{tabular}{|c l lll|}\hline
Input Layer&: &$z_{x,\Theta}(\cdot,1,0)$ &$=$ &$x$ \\\hline
\multicolumn{5}{l}{\quad }\\
\multicolumn{5}{l}{\quad \quad \quad \quad \quad \quad \quad \quad \quad \quad \quad \quad \quad \quad \quad \quad Convolutional Layers, $l\in[\dc]$}\\\hline
Pre-Activation&: & $q_{x,\Theta}(\ifout,\iout,l)$& $=$ & $\sum_{\icin,\iin}\Theta(\icin,\iin,\iout,l)\cdot z_{x,\Theta}(\ifout\oplus (\icin-1),\iin,l-1)$\\
Gating Values&: &$G_{x,\Theta}(\ifout,\iout,l)$& $=$ & $\mathbf{1}_{\{q_{x,\Theta}(\ifout,\iout,l)>0\}}$\\
Hidden Unit Output&: &$z_{x,\Theta}(\ifout,\iout,l)$ & $=$ & $q_{x,\Theta}(\ifout,\iout,l)\cdot G_{x,\Theta}(\ifout,\iout,l)$\\\hline
\multicolumn{5}{l}{\quad }\\
\multicolumn{5}{l}{\quad \quad \quad \quad \quad \quad \quad \quad \quad \quad \quad \quad \quad \quad \quad \quad GAP Layer, $l=\dc+1$}\\\hline
Hidden Unit Output&: &$z_{x,\Theta}(\iout, \dc+1)$ & $=$ &$\sum_{\ifout} z_{x,\Theta}(\ifout,\iout,\dc)\cdot G^{\text{pool}}_{x,\Theta}(\ifout,\iout,\dc+1)$\\\hline
\multicolumn{5}{l}{\quad }\\
\multicolumn{5}{l}{\quad \quad \quad \quad \quad \quad \quad \quad \quad \quad \quad \quad \quad \quad \quad \quad Fully Connected Layers, $l\in[\dfc]+(\dc+1)$}\\\hline
Pre-Activation&: & $q_{x,\Theta}(\iout,l)$& $=$ & $\sum_{\iin}\Theta(\iin,\iout,l) \cdot z_{x,\Theta}(\iin,l-1) $\\
Gating Values&: &$G_{x,\Theta}(\iout,l)$& $=$ & $\mathbf{1}_{\{(q_{x,\Theta}(\iout,l))>0\}}$\\
Hidden Unit Output&: &$z_{x,\Theta}(\iout,l)$ & $=$ & $q_{x,\Theta}(\iout,l)\cdot G_{x,\Theta}(\iout,l)$\\
Final Output&: & $\hat{y}_{\Theta}(x)$ & $=$ & $\sum_{\iin}\Theta(\iin,\iout, d)\cdot z_{x,\Theta}(\iin,d-1)$\\\hline
\end{tabular}
}
\caption{Shows the information flow in the convolutional architecture described at the beginning of \Cref{sec:conv}.}
\label{tb:cconv}
\end{table}

\subsection{Neural Path Features, Neural Path Value}

\begin{proposition}
The total number of paths in a CNN is given by  $\Pcnn=\din(\wconv w)^{\dc}w^{(\dfc-1)}$.
\end{proposition}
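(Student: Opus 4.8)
The plan is to establish the count by directly enumerating paths: I would traverse a generic path from its starting input node to the scalar output node and multiply the number of admissible choices available at each layer transition. Since a path is completely specified by this sequence of local choices, the product of the per-layer counts is exactly $\Pcnn$.

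First I would dispose of the endpoints and the convolutional block. The input layer $z_{x,\Theta}(\cdot,1,0)=x$ has a single filter with $\din$ spatial positions, so there are $\din$ choices of starting node. For a transition through a convolutional layer $l\in[\dc]$, I would observe that from the current node $(\ifout,\iin,l-1)$ one is free to pick a window offset $\icin\in[\wconv]$ and an output filter $\iout\in[w]$, but then the spatial index of the next node is forced: the relation $q_{x,\Theta}(\ifout,\iout,l)=\sum_{\icin,\iin}\Theta(\icin,\iin,\iout,l)\,z_{x,\Theta}(\ifout\oplus(\icin-1),\iin,l-1)$ fixes the predecessor spatial index as $\ifout\oplus(\icin-1)$, so it is determined uniquely once $\ifout$ and $\icin$ are chosen. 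Hence each convolutional layer contributes exactly $\wconv w$ choices, for a total factor $(\wconv w)^{\dc}$.

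Next I would handle the pooling and fully connected layers. The global-average-pooling layer $l=\dc+1$ links $(\ifout,\iout,\dc)$ to $(\iout,\dc+1)$ inside the same filter $\iout$ and, being treated as a fixed mask (\Cref{def:pooling}), carries no weight and hence contributes a factor $1$. The fully connected layers occupy $l=\dc+2,\ldots,\dc+\dfc+1$; at each of the first $\dfc-1$ of them a path chooses an output unit in $[w]$, while the terminal layer $l=d=\dc+\dfc+1$ feeds the single scalar output $\hat{y}_{\Theta}(x)$ and therefore offers only one choice. This yields the factor $w^{\dfc-1}$, and multiplying all contributions gives $\Pcnn=\din\,(\wconv w)^{\dc}\,w^{\dfc-1}$.

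The one step that needs genuine care — and is essentially the whole content of the proposition — is the determinacy of the spatial index through a convolutional layer: weight sharing (the same $\Theta(\icin,\iin,\iout,l)$ reused across spatial positions) collapses what would otherwise be a free spatial coordinate, so a convolutional layer contributes $\wconv w$ rather than something like $\din\wconv w$. A smaller bookkeeping point is to index the fully connected block correctly so that the scalar output layer contributes $1$ and not $w$, which is what produces the exponent $\dfc-1$ rather than $\dfc$.
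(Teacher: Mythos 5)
Your count is correct, and since the paper states this proposition without any proof (it is asserted as an immediate counting fact), your direct enumeration is exactly the intended justification rather than a divergence from it. You also handle the two points that actually matter: circular convolution with $\wconv<\din$ makes the next spatial index unique once the window offset and output filter are chosen, so each of the $\dc$ convolutional layers contributes $\wconv w$ (not $\din\wconv w$), and reading the last fully connected layer as the scalar output—consistent with the paper's NPV definition, whose FC product runs over exactly $\dfc$ weight layers—gives the exponent $\dfc-1$, yielding $\Pcnn=\din(\wconv w)^{\dc}w^{\dfc-1}$.
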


\begin{notation}[Index Maps]
The ranges of index maps $\Ifeat_l$,  $\Iconv_l$, $\I_l$ are $[\din]$, $[\wconv]$ and $[w]$ respectively. 
\end{notation}

\begin{definition}[Bundle Paths of Sharing Weights]\label{def:bundle}
Let $\hat{P}^{\text{cnn}}=\frac{\Pcnn}{\din}$, and $\{B_1,\ldots, B_{\hat{P}^{\text{cnn}}}\}$ be a collection of sets such that $\forall i,j\in [\hat{P}^{\text{cnn}}], i\neq j$ we have $B_i\cap B_j=\emptyset$ and $\cup_{i=1}^{\hat{P}^{\text{cnn}}}B_i =[\Pcnn]$. Further,  if paths $p,p' \in B_i$, then $\Iconv_l(p)=\Iconv_l(p'), \forall l=1,\ldots, \dc$ and $\I_l(p)=\I_l(p'), \forall l=0,\ldots, \dc$.
\end{definition}

\begin{proposition}\label{prop:bundle}
There are exactly $\din$ paths in a bundle.
\end{proposition}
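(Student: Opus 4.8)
The plan is to identify each bundle $B_i$ with a fiber of the map that records, for a path $p$, all of the \emph{weight-selecting} indices it uses --- the convolutional-window indices $\Iconv_l(p)$ for $l\in[\dc]$ and the filter/unit indices $\I_l(p)$ across all layers --- and then to observe that the only remaining degree of freedom in a path is the choice of its input node $\Ifeat_0(p)\in[\din]$, so that every such fiber has exactly $\din$ elements. Along the way this matches \Cref{def:bundle}, under which the $B_i$ form a partition of $[\Pcnn]$ into $\hat{P}^{\text{cnn}}=\Pcnn/\din$ blocks.

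First I would make precise what data a path in the architecture of \Cref{sec:conv} carries. Besides the weight-selecting indices above, a path has per-layer spatial indices $\Ifeat_l(p)\in[\din]$ in the convolutional layers, but these are not independent. From the circular-convolution rule in \Cref{tb:cconv}, $q_{x,\Theta}(\ifout,\iout,l)=\sum_{\icin,\iin}\Theta(\icin,\iin,\iout,l)\cdot z_{x,\Theta}(\ifout\oplus(\icin-1),\iin,l-1)$, a path occupying spatial position $\ifout$ in layer $l$ reads from position $\ifout\oplus(\Iconv_l(p)-1)$ in layer $l-1$; hence $\Ifeat_l(p)=\Ifeat_{l-1}(p)\oplus(\Iconv_l(p)-1)$ for $l\in[\dc]$, so $\Ifeat_l(p)$ is determined by $\Ifeat_0(p)$ together with the window indices, and at the global-average-pooling layer (\Cref{def:pooling}) the spatial coordinate is summed out, so nothing downstream depends on it either. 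Consequently a path is in bijection with the pair given by its input node $\Ifeat_0(p)\in[\din]$ and its tuple of weight-selecting indices, and the two coordinates range independently; counting them separately recovers $\Pcnn=\din\cdot(\wconv w)^{\dc}w^{\dfc-1}$, and the number of admissible weight-selecting tuples is exactly $\hat{P}^{\text{cnn}}=\Pcnn/\din$.

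Finally I would match this against \Cref{def:bundle}: two paths lie in the same bundle precisely when they agree on all convolutional-window indices and all filter indices, i.e.\ when they carry the same weight-selecting tuple; hence the $B_i$ are exactly the fibers of the map sending $p$ to its weight-selecting indices. By the bijection of the previous paragraph, a fiber is obtained by fixing the weight-selecting tuple and letting $\Ifeat_0(p)$ run over $[\din]$, which yields $\din$ paths, pairwise distinct already in their input node, each a legitimate path of the network because circular convolution wraps around the boundary --- every one of the $\din$ cyclic shifts of the spatial trajectory stays inside $[\din]$, which would fail for zero-padded or ``valid'' convolutions. Therefore $|B_i|=\din$ for every $i$, consistent with $\sum_i|B_i|=\Pcnn=\din\cdot\hat{P}^{\text{cnn}}$ (and with the fact, used elsewhere, that the path value $v_\Theta$ is constant on each bundle by weight sharing). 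The only step demanding care is the index bookkeeping: pinning down the recursion $\Ifeat_l(p)=\Ifeat_{l-1}(p)\oplus(\Iconv_l(p)-1)$, verifying the pooling layer erases the spatial coordinate, and using circularity to certify that all $\din$ shifts are genuine paths; none of this is deep once \Cref{def:bundle} is in hand.
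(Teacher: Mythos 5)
Your argument is correct and matches the paper's (largely implicit) reasoning: fixing the shared weight-selecting indices of \Cref{def:bundle} leaves only the input node $\Ifeat_0(p)\in[\din]$ free, the spatial trajectory being determined by the window indices and circular convolution, so each bundle contains exactly one path per input node — exactly the fact the paper invokes in the proof of \Cref{lm:cnnnpk}. Your explicit bookkeeping (the recursion for $\Ifeat_l$, the pooling layer erasing the spatial coordinate, and including the FC-layer unit indices in the fiber so that the count $\hat{P}^{\text{cnn}}=\Pcnn/\din$ and the constancy of $v_\Theta$ on bundles both come out right) is just a more careful spelling-out of the same approach.
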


\begin{definition}\label{def:convnps} Let $x\in\R^{\din}$ be the input to the CNN. For this input, 
\begin{tabular}{rlp{12cm}}
$A_{\Theta}(x,p)$&$\eqdef$&$\left(\Pi_{l=1}^{\dc+1} G_{x,\Theta}(\Ifeat_l(p),\I_l(p),l)\right)\cdot\left(\Pi_{l=\dc+2}^{\dc+\dfc+1} G_{x,\Theta}(\I_l(p),l)\right)$\\
$\phi_{x,\Theta}(\hat{p})$&$\eqdef$&$ \sum_{\hat{p}\in B_{\hat{p}}}x(\Ifeat_0(p))A_{\Theta}(x,p)$\\
$v_{\Theta}(B_{\hat{p}})$&$\eqdef$&$ \left(\Pi_{l=1}^{\dc} \Theta(\Iconv_{l}(p),\I_{l-1}(p),\I_{l}(p),l)\right) \cdot\left( \Pi_{l=\dc+2}^{\dc+\dfc+1} \Theta(\I_{l-1}(p),\I_l(p),l)\right)$ 
\end{tabular}
\begin{center}
\begin{tabular}{|c|c|}\hline
NPF &$\phi_{x,\Theta}\eqdef (\phi_{x,\Theta}(B_{\hat{p}}),\hat{p}\in [\hat{P}^{\text{cnn}}])\in\R^{\hat{P}^{\text{cnn}}}$\\\hline
NPV& $v_{\Theta}\eqdef (v_{\Theta}(B_{\hat{p}}),\hat{p}\in [\hat{P}^{\text{cnn}}])\in\R^{\hat{P}^{\text{cnn}}}$\\\hline
\end{tabular}
\end{center}
\end{definition}

\subsection{Rotational Invariant Kernel}
\begin{lemma}\label{lm:cnnnpk}
\begin{align*}
\text{NPK}^{\texttt{CONV}}_{\Theta}(x,x')&=\sum_{r=0}^{\din-1} \ip{x,rot(x',r)}_{\textbf{overlap}_{\Theta}(\cdot, x,rot(x',r))}\\&=\sum_{r=0}^{\din-1} \ip{rot(x,r),x'}_{\textbf{overlap}_{\Theta}(\cdot, rot(x,r),x')}
\end{align*}
\end{lemma}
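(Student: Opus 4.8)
The plan is to prove Lemma \ref{lm:cnnnpk} by unwinding the definitions in \Cref{def:convnps} and exploiting two structural facts about circular convolution: (i) every internal variable of the CNN rotates covariantly with the input, i.e.\ $q_{rot(x,r),\Theta}(\ifout,\iout,l) = q_{x,\Theta}(\ifout\oplus r,\iout,l)$ for convolutional layers, and consequently $G_{rot(x,r),\Theta}(\ifout,\iout,l)=G_{x,\Theta}(\ifout\oplus r,\iout,l)$; and (ii) global average pooling erases the spatial index, so the fully connected layers (and hence the final output) see the same values for $x$ and $rot(x,r)$ up to this re-indexing. First I would establish the covariance claim by induction on the layer index $l$: the base case is $z_{rot(x,r),\Theta}(\cdot,1,0)=rot(x,r)=rot(z_{x,\Theta}(\cdot,1,0),r)$, and the inductive step uses that circular convolution commutes with circular shift, $q_{rot(x,r),\Theta}(\ifout,\iout,l)=\sum_{\icin,\iin}\Theta(\icin,\iin,\iout,l) z_{rot(x,r),\Theta}(\ifout\oplus(\icin-1),\iin,l-1)$, and then relabel $\ifout\oplus(\icin-1)$ using the inductive hypothesis to get $q_{x,\Theta}(\ifout\oplus r,\iout,l)$; gating and hidden-unit output inherit this because they are pointwise functions of $q$.

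Next I would plug the NPK definition $\text{NPK}^{\texttt{CONV}}_\Theta(x,x')=\ip{\phi_{x,\Theta},\phi_{x',\Theta}}=\sum_{\hat p}\phi_{x,\Theta}(B_{\hat p})\phi_{x',\Theta}(B_{\hat p})$ and expand each bundle-feature as the sum over its $\din$ constituent paths (\Cref{prop:bundle}): $\phi_{x,\Theta}(B_{\hat p})=\sum_{p\in B_{\hat p}} x(\Ifeat_0(p))A_\Theta(x,p)$. The $\din$ paths in a bundle differ exactly in the input node $\Ifeat_0(p)$ (and correspondingly the spatial indices $\Ifeat_l(p)$), so the product over $\hat p$ and the two sums over paths within bundles can be reorganized into a single sum over all pairs of paths with matching convolutional-window and filter indices. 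The key combinatorial step is to re-index this pairing by the relative spatial shift $r$ between the two paths: a path $p$ for $x$ and a path $p'$ for $x'$ that lie in the same bundle and whose input nodes differ by $r$ contribute $x(i)\,x'(i\oplus r)\,A_\Theta(x,p)A_\Theta(x',p')$, and by the covariance fact $A_\Theta(x',p') = A_\Theta(x,p)$ precisely when the gates along $p$ are `on' for $x$ and the gates along the $r$-rotated version are `on' for $x'$ — i.e.\ when $A_\Theta(x,p)=A_{\Theta}(rot(x',r)$\dots wait, more carefully: $A_\Theta(rot(x,-r),p')$. Summing over all such paths through a fixed input node $i$ and using \Cref{def:overlap} gives exactly $\textbf{overlap}_\Theta(i,x,rot(x',r))$, and summing over $i$ and over $r\in\{0,\ldots,\din-1\}$ yields $\sum_{r=0}^{\din-1}\ip{x,rot(x',r)}_{\textbf{overlap}_\Theta(\cdot,x,rot(x',r))}$.

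The second equality in the lemma then follows from a change of variable $r\mapsto \din-r$ together with the identity $\ip{x,rot(x',r)}_{D(\cdot)} = \ip{rot(x,-r),x'}_{D(\cdot\oplus(-r))}$ and the fact that $\textbf{overlap}$ is symmetric in its two input arguments and transforms covariantly under simultaneous rotation of both arguments; concretely, rotating both $x$ and $x'$ by the same amount permutes the active paths within each bundle without changing which pairs agree, so $\textbf{overlap}_\Theta(i\oplus r, rot(x,r),rot(x',r)) = \textbf{overlap}_\Theta(i,x,x')$. I expect the main obstacle to be bookkeeping: carefully tracking how the bundle structure (\Cref{def:bundle}), the spatial index maps $\Ifeat_l$, and the circular-shift operation $\oplus$ interact, so that the re-indexing of paths by relative shift $r$ is genuinely a bijection and the pooling mask $G^{\text{pool}}=\frac1\din$ contributes the correct normalization that is absorbed into $\bcnn$ in \Cref{th:conv} (in \Cref{lm:cnnnpk} itself the $\frac1{\din^2}$ is dropped since it only proves the bracketed sum). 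Getting the direction of the rotation (whether $x'$ is rotated by $r$ or $-r$, and on which argument the shift lands) consistent between the two displayed lines is the fiddly part; everything else is a direct unrolling of definitions once the covariance lemma is in hand.
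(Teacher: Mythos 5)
Your proposal is correct and follows essentially the same route as the paper's proof: expand the bundle features into their $\din$ constituent paths, re-index the resulting double sum over input nodes by the relative circular shift $r$, use the rotational covariance of the gates to rewrite $A_{\Theta}(x',b_{\hat{p}}(i\oplus r))$ as $A_{\Theta}(rot(x',r),b_{\hat{p}}(i))$, and identify the inner sum over bundles with $\textbf{overlap}_{\Theta}(i,x,rot(x',r))$. The only difference is that you make the layer-wise induction for the covariance of pre-activations and gates explicit (and note the symmetric second form via simultaneous rotation), steps the paper leaves implicit.
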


\begin{proof}
For the CNN architecture considered in this paper, each bundle has exactly $\din$ number of paths, each one corresponding to a distinct input node. For a bundle $b_{\hat{p}}$, let $b_{\hat{p}}(i),i\in[\din]$ denote the path starting from input node $i$.
\begin{align*}
&\sum_{\hat{p}\in [\hat{P}]} \Bigg(\sum_{i,i'\in[\din]} x(i) x'(i') A_{\Theta}\left(x,b_{\hat{p}}(i)\right) A_{\Theta}\left(x',b_{\hat{p}}(i')\right) \Bigg)\\
=&\sum_{\hat{p}\in [\hat{P}]}\Bigg(\sum_{i\in[\din],i'=i\oplus r, r\in\{0,\ldots,\din-1\}} x(i) x'(i\oplus r) A_{\Theta}\left(x,b_{\hat{p}}(i)\right) A_{\Theta}\left(x',b_{\hat{p}}(i\oplus r)\right)\Bigg)\\
=&\sum_{\hat{p}\in [\hat{P}]}\Bigg(\sum_{i\in[\din], r\in\{0,\ldots,\din-1\}} x(i) rot(x',r)(i) A_{\Theta}\left(x,b_{\hat{p}}(i)\right) A_{\Theta}\left(rot(x',r),b_{\hat{p}}(i)\right)\Bigg)\\
=&\sum_{r=0}^{\din-1} \Bigg(\sum_{i\in[\din]} x(i) rot(x',r)(i) \sum_{\hat{p}\in [\hat{P}]}  A_{\Theta}\left(x,b_{\hat{p}}(i)\right) A_{\Theta}\left(rot(x',r),b_{\hat{p}}(i)\right)\Bigg)\\
=&\sum_{r=0}^{\din-1}\Bigg(\sum_{i\in[\din]} x(i) rot(x',r)(i) \textbf{overlap}_{\Theta}(i,x,rot(x',r))\Bigg)\\
=&\sum_{r=0}^{\din-1} \ip{x,rot(x',r)}_{\textbf{overlap}_{\Theta}(\cdot,x,rot(x',r))}
\end{align*}
\end{proof}

In what follows we re-state \Cref{th:conv}.

\begin{theorem} Let $\sigcnn=\frac{\cscale}{\sqrt{w\wconv}}$ for the convolutional layers and $\sigfc=\frac{\cscale}{\sqrt{w}}$ for FC layers. Under \Cref{assmp:main}, as $w\rightarrow\infty$, with  $\bcnn = \ \left(\dconv \sigcnn^{2(\dconv-1)}\sigfc^{2\dfc}+\dfc \sigcnn^{2\dconv}\sigfc^{2(\dfc-1)}\right)$ we have:
\begin{align*}
&\text{NTK}^{\texttt{CONV}}_{\Tdgn_0}\rightarrow\quad \frac{\bcnn}{{\din}^2} \cdot \text{NPK}^{\texttt{CONV}}_{\Tf_0}
\end{align*}
\end{theorem}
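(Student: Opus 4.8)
The plan is to compute the expectation of the empirical NTK at initialisation --- taken only over the value--network weights $\Tv_0$, since the gates are frozen and supplied externally by the feature network, so that $\phi_{\Tf_0}$ is deterministic given $\Tf_0$ --- and then to upgrade this to convergence as $w\to\infty$, exactly paralleling the route behind \Cref{th:fcprev}. Write $\hat{y}_{\Tdgn_0}(x)=\ip{\phi_{\Tf_0}(x),v_{\Tv_0}}=\sum_{\hat p}\phi_{x,\Tf_0}(B_{\hat p})\,v_{\Tv_0}(B_{\hat p})$ using the bundled NPF/NPV of \Cref{def:convnps}; here $v_{\Tv_0}(B_{\hat p})$ is a product of exactly $\dconv+\dfc$ weights, one per (non--pooling, non--input) layer, while the gates and the global--average--pooling masks (each equal to $\tfrac1\din$) enter only through the fixed scalars $\phi_{x,\Tf_0}(B_{\hat p})$. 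Differentiating $\hat{y}$ with respect to a single weight $e$ in layer $l$ deletes that factor from every bundle containing it, $\partial_e\hat{y}=\sum_{\hat p\ni e}\phi_{x,\Tf_0}(B_{\hat p})\,v_{\Tv_0}(B_{\hat p}\setminus e)$, so that the empirical NTK is the triple sum over layers $l$, weights $e$ in layer $l$, and pairs of bundles $\hat p,\hat p'$ through $e$ of $\phi_{x,\Tf_0}(B_{\hat p})\phi_{x',\Tf_0}(B_{\hat p'})\,v_{\Tv_0}(B_{\hat p}\setminus e)\,v_{\Tv_0}(B_{\hat p'}\setminus e)$.

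Next I would take $\E_{\Tv_0}$ under \Cref{assmp:main}: the weights are i.i.d., mean zero, with second moment $\sigcnn^2$ in a convolutional layer and $\sigfc^2$ in a fully connected layer, and by \Cref{def:bundle} two distinct bundles differ in at least one weight index. Hence, if $\hat p\neq\hat p'$ but both contain $e$, they must differ at some layer $l^*\neq l$, so the product $v_{\Tv_0}(B_{\hat p}\setminus e)\,v_{\Tv_0}(B_{\hat p'}\setminus e)$ contains a lone mean--zero factor and its expectation vanishes; only the diagonal $\hat p=\hat p'$ survives, contributing $\E[v_{\Tv_0}(B_{\hat p}\setminus e)^2]=\prod_{l'\neq l}\sigma_{l'}^2$. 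Since each bundle holds exactly one weight in layer $l$, the sum $\sum_{e\in\text{layer }l}\sum_{\hat p\ni e}$ collapses to $\sum_{\hat p}$, and summing over the $\dconv$ convolutional and $\dfc$ fully connected layers produces the prefactor $\sum_l\prod_{l'\neq l}\sigma_{l'}^2=\dconv\,\sigcnn^{2(\dconv-1)}\sigfc^{2\dfc}+\dfc\,\sigcnn^{2\dconv}\sigfc^{2(\dfc-1)}=\bcnn$. Therefore $\E_{\Tv_0}[\text{NTK}^{\texttt{CONV}}_{\Tdgn_0}(x,x')]=\bcnn\,\ip{\phi_{\Tf_0}(x),\phi_{\Tf_0}(x')}$, an identity that in fact holds at every width (it uses only zero mean and independence). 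Applying \Cref{lm:cnnnpk} --- expanding the inner product over the $\din$ paths of each bundle, re-indexing $i'=i\oplus r$, and pulling out the two pooling masks --- rewrites $\ip{\phi_{\Tf_0}(x),\phi_{\Tf_0}(x')}$ as $\tfrac1{\din^2}\sum_{r=0}^{\din-1}\ip{x,rot(x',r)}_{\textbf{overlap}_{\Tf_0}(\cdot,x,rot(x',r))}=\tfrac1{\din^2}\text{NPK}^{\texttt{CONV}}_{\Tf_0}(x,x')$ with $\textbf{overlap}$ the integer count of \Cref{def:overlap}, giving the claimed mean.

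The remaining step is to pass from the mean to the limit: I would bound the variance of the empirical NTK over $\Tv_0$ as in the fully connected case, showing that after the scalings $\sigcnn=\cscale/\sqrt{w\wconv}$, $\sigfc=\cscale/\sqrt w$ the surviving higher--moment contributions are of order $1/w$, so the fluctuation around $\tfrac{\bcnn}{\din^2}\text{NPK}^{\texttt{CONV}}_{\Tf_0}$ vanishes and the NTK converges (in probability). I expect this variance estimate to be the main obstacle: one must organise the pair--of--bundles bookkeeping under convolutional weight sharing and global pooling and verify that every non--vanishing fourth--moment term carries enough factors of $w^{-1}$; by contrast the mean computation is essentially combinatorial once the bundle structure of \Cref{def:convnps} and the rotational rearrangement of \Cref{lm:cnnnpk} are in hand.
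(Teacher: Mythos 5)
Your proposal is correct in substance but takes a visibly different route from the paper: the paper's entire proof of this statement is a one-line citation, ``Follows from Theorem~5.1 in \citep{npk}'' (with the rotational form supplied separately by \Cref{lm:cnnnpk}), whereas you re-derive the result from scratch. Your mean computation is right and is really the combinatorial heart of the matter: with the gates frozen, differentiating $\hat{y}=\sum_{\hat p}\phi_{x,\Tf_0}(B_{\hat p})v_{\Tv_0}(B_{\hat p})$ in a single weight $e$ of layer $l$, using that each bundle carries exactly one weight per layer so two distinct bundles through $e$ must disagree at some layer $l^*\neq l$, and invoking zero mean and independence from \Cref{assmp:main}, kills the off-diagonal bundle pairs and yields $\E_{\Tv_0}[\text{NTK}^{\texttt{CONV}}]=\bcnn\ip{\phi_{\Tf_0}(x),\phi_{\Tf_0}(x')}$ exactly at finite width, with $\bcnn=\dconv\sigcnn^{2(\dconv-1)}\sigfc^{2\dfc}+\dfc\sigcnn^{2\dconv}\sigfc^{2(\dfc-1)}$; your explicit extraction of the two GAP masks to produce the $1/\din^2$ in front of the overlap form of \Cref{lm:cnnnpk} is actually more careful than the paper, which leaves the bookkeeping between $A_{\Theta}$, the pooling mask and the integer-valued $\textbf{overlap}$ implicit. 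What you gain is a self-contained argument that makes the origin of $\bcnn$ and of $1/\din^2$ transparent; what the paper's citation buys is precisely the step you only sketch, namely the passage from the expectation to the limit: the variance/concentration estimate under the scalings $\sigcnn=\cscale/\sqrt{w\wconv}$, $\sigfc=\cscale/\sqrt{w}$, including the fourth-moment bookkeeping under convolutional weight sharing, is carried out in Theorem~5.1 of \citep{npk} and is not reproduced here. So your proposal is not wrong, but to stand alone it would need that variance bound made explicit; as written, its honest status is ``expectation identity proved, concentration deferred,'' which is the same division of labour the paper makes by citation.
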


\begin{proof}
Follows from Theorem~5.1 in [\citenum{npk}].
\end{proof}

\section{Residual Networks with Skip connections}

As a consequence of the skip connections, within the ResNet architecture there are $2^b$ sub-FC networks (see \Cref{def:subfcdnn}). The total number of paths $\Pres$ in the ResNet is equal to the summation of the paths in these $2^b$ sub-FC networks (see \Cref{prop:resnetpath}). Now, The neural path features and the neural path value are $\Pres$ dimensional quantities, obtained as the concatenation of the NPFs and NPV of the $2^b$ sub-FC networks. 

\begin{proposition}\label{prop:resnetpath}
The total number of paths in the ResNet is  $\Pres = \din \cdot\sum_{i=0}^b \binom{b}{i} w^{(i+2)\dblock-1}$.
\end{proposition}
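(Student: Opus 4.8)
The plan is to partition the paths of the ResNet according to which of the $b$ skippable blocks each path actually traverses, and then to identify each cell of this partition with the path set of one of the $2^b$ sub-FCNs of \Cref{def:subfcdnn}. The arithmetic is then a single application of the fully connected count $\Pfc=\din\, w^{(d-1)}$ from \Cref{def:npf-npv} followed by a binomial grouping.

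First I would make precise the all-or-nothing behaviour of a skip connection. At a block $\text{block}_j$, $j\in[b]$, wired as $x\mapsto x+\text{block}_j(x)$, a path from an input node to the output node either (i) takes the identity branch, contributing no hidden unit and a fixed unit weight, or (ii) enters $\text{block}_j$, in which case it must pick exactly one hidden unit in each of its internal hidden layers, equivalently one weight from each of its $\dblock$ weight matrices. The two blocks that carry no skip connection (the first and the last, accounting for the ``$+2$'' in ``$b+2$'') are traversed by every path. Consequently each path is specified by, and specifies, a subset $\J\subseteq[b]$ of skippable blocks entered together with the within-block hidden-unit choices, and the paths with a given $\J$ are exactly the paths of the $\J^{\text{th}}$ sub-FCN; hence the ResNet path set is the disjoint union, over $\J\in 2^{[b]}$, of the path sets of the $\J^{\text{th}}$ sub-FCNs.

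Next I would count one sub-FCN. By construction the $\J^{\text{th}}$ sub-FCN is an ordinary fully connected network built by stacking $|\J|+2$ blocks, each of depth $\dblock$ and width $w$, on an input of dimension $\din$; its number of weight layers is therefore $(|\J|+2)\dblock$, so \Cref{def:npf-npv} (applied with $d=(|\J|+2)\dblock$) gives it $\din\, w^{(|\J|+2)\dblock-1}$ paths. Summing over all subsets and grouping by cardinality $i=|\J|$ (there being $\binom{b}{i}$ subsets of each size $i$) yields
\begin{align*}
\Pres=\sum_{\J\in 2^{[b]}}\din\, w^{(|\J|+2)\dblock-1}=\din\sum_{i=0}^{b}\binom{b}{i}\,w^{(i+2)\dblock-1},
\end{align*}
which is the claimed identity.

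The hard part will be the disjointness/bijection step: one must verify carefully, from the exact wiring of the skip connections (what the identity branch connects to, and that the always-present first and last blocks sit outside any skip), that a path cannot ``partially'' enter a block, that every ResNet path arises in this way, and that two different subsets $\J\neq\J'$ never produce the same path (they differ precisely in the set of blocks whose hidden units appear along the path), so that nothing is double-counted or missed. Everything else — keeping the lone factor $\din$ from the input layer separate from the $w$'s contributed by the hidden layers, and the binomial collapse $\sum_{|\J|=i}1=\binom{b}{i}$ — is routine bookkeeping.
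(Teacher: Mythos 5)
Your proposal is correct and follows essentially the same route as the paper: the paper also decomposes the ResNet's path set into the disjoint union of the path sets of the $2^b$ sub-FCNs of \Cref{def:subfcdnn} (the skip/identity branch versus entering a block), counts each sub-FCN with the fully connected formula $\din\, w^{(|\J|+2)\dblock-1}$, and collapses the sum over subsets by cardinality into $\binom{b}{i}$. No gaps beyond the bookkeeping you already flag.
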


\begin{lemma}[Sum of Product Kernel]\label{lm:sumofproduct}
Let $\text{NPK}^{\texttt{RES}}_{\Theta}$ be the NPK of the ResNet, and $\text{NPK}^{\J}_{\Theta}$ be the NPK of the sub-FCNs within the ResNet obtained by ignoring those skip connections in the set $\J$. Then, \begin{align*}\text{NPK}^{\texttt{RES}}_{\Theta}=\sum_{\J\in 2^{[b]}}\text{NPK}^{\J}_{\Theta}\end{align*}
\end{lemma}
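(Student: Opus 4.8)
The plan is to treat this as a bookkeeping statement: the NPK is the Gram matrix of the neural path features (\Cref{lm:npk}, \Cref{prop:npf-npv}), and the neural path feature vector of the ResNet is, by construction, the concatenation of the NPFs of the $2^b$ sub-FCNs indexed by $2^{[b]}$; the claimed identity is then just bilinearity of the inner product. I would carry this out in three steps.

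First, I would make the path decomposition precise. A path in the ResNet makes, in addition to the choices inside the always-present first and last blocks and the fully connected/pooling layers, a binary choice at each of the $b$ skip junctions: traverse $\text{block}_j$ or take the skip connection. Collecting the indices of the traversed blocks gives a set $\J\in 2^{[b]}$ (equivalently, the complementary set of skips that are "ignored"), and conditioned on this choice the remaining choices are exactly those defining a path in the $\J^{\text{th}}$ sub-FCN of \Cref{def:subfcdnn}. This gives a bijection between $[\Pres]$ and the disjoint union $\bigsqcup_{\J\in 2^{[b]}}[P^{\J}]$, consistent with the count in \Cref{prop:resnetpath}, and hence the concatenation structure $\phi_{\Theta}(x)=\big(\phi^{\J}_{\Theta}(x)\big)_{\J\in 2^{[b]}}$, where $\phi^{\J}_{\Theta}(x)$ is the NPF of the $\J^{\text{th}}$ sub-FCN evaluated with the ResNet's own gates.

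Second, I would verify that under this bijection a ResNet-path feature coordinate equals the corresponding sub-FCN NPF coordinate. The feature of a path is the product of the input-node signal with the path's activity, the activity being the product of the gates along the path. Skip connections, and the identity carried through a bypassed block, contribute no gating factor, i.e. a multiplicative $1$; hence the product of gates along a ResNet path traversing exactly the blocks in $\J$ equals the product of gates along the matching path of the $\J^{\text{th}}$ sub-FCN, and the input-node signal is the same coordinate of $x$ in both. So the two NPFs agree coordinatewise, which is the intended meaning of $\text{NPK}^{\J}_{\Theta}(x,x')=\ip{\phi^{\J}_{\Theta}(x),\phi^{\J}_{\Theta}(x')}$.

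Third, I would conclude by bilinearity:
\[
\text{NPK}^{\texttt{RES}}_{\Theta}(x,x')=\ip{\phi_{\Theta}(x),\phi_{\Theta}(x')}=\sum_{\J\in 2^{[b]}}\ip{\phi^{\J}_{\Theta}(x),\phi^{\J}_{\Theta}(x')}=\sum_{\J\in 2^{[b]}}\text{NPK}^{\J}_{\Theta}(x,x').
\]
The routine part is the third step. The part needing care is the first: making the "binary choice at each skip junction" argument rigorous so that the $2^b$ path-sets genuinely partition $[\Pres]$ and line up with the sub-FCN path indexing of \Cref{def:subfcdnn}. A secondary subtlety, handled in the second step, is checking that bypassed blocks and the always-present first/last blocks and FC/pooling layers are treated uniformly across all $\J$, so that the only difference between $\phi_{\Theta}$ and the stacked $\big(\phi^{\J}_{\Theta}\big)_{\J}$ is the grouping of coordinates; then \Cref{th:res} follows by applying the infinite-width NTK--NPK correspondence to each sub-FCN and summing.
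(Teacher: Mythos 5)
Your proposal is correct and follows essentially the same route as the paper: the paper's one-line proof simply notes that the NPF of the ResNet is the concatenation of the NPFs of the $2^b$ sub-FC networks, which is exactly your path-partition plus bilinearity argument, just stated without the details. Your elaboration of the bijection between ResNet paths and sub-FCN paths and of the skip connections contributing trivial gating factors is a faithful filling-in of what the paper leaves implicit.
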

\begin{proof}
Proof is complete by noting that the NPF of the ResNet is a concatenation of the NPFs of the $2^b$ distinct sub-FC-DNNs within the ResNet architecture.
\end{proof}

We re-state \Cref{th:res}
\begin{theorem} Let $\sigma=\frac{\cscale}{\sqrt{w}}$. Under \Cref{assmp:main}, as $w\rightarrow\infty$,  for $\bres^{\J} = (|\J| +2)\cdot\dblock\cdot \sigma^{2\big( (|\J|+2)\dblock-1\big)}$,
\begin{align*}
\text{NTK}^{\texttt{RES}}_{\Tdgn_0}\rightarrow \sum_{\J\in 2^{[b]}}  \bres^{\J} \text{NPK}^{\J}_{\Tf_0}
\end{align*}
\end{theorem}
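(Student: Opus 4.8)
The plan is to reduce the ResNet case to the fully connected case of \Cref{th:fcprev}/\Cref{th:fc} by exploiting the path decomposition of the ResNet, and then to add up the resulting contributions over the $2^b$ sub-FCNs. Concretely, I would first recall (from \Cref{def:subfcdnn}, \Cref{prop:resnetpath} and \Cref{lm:sumofproduct}) that every path of the ResNet belongs to exactly one of the $2^b$ sub-FCNs indexed by $\J\in 2^{[b]}$, that the $\J$th sub-FCN is an ordinary fully connected network of depth $d_{\J}=(|\J|+2)\dblock$ and width $w$, and that the neural path feature and value vectors of the ResNet are the concatenations of those of the $2^b$ sub-FCNs. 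Hence $\hat{y}_{\text{DGN}}(x)=\sum_{\J\in 2^{[b]}}\ip{\phi^{\J}_{\Tf_0}(x),v^{\J}_{\Tv_0}}$ and, by \Cref{lm:sumofproduct}, $\text{NPK}^{\texttt{RES}}_{\Tf_0}=\sum_{\J}\text{NPK}^{\J}_{\Tf_0}$.

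Next I would expand $\text{NTK}^{\texttt{RES}}_{\Tdgn_0}(x,x')=\ip{\nabla_{\Tv}\hat{y}_{\Tdgn_0}(x),\nabla_{\Tv}\hat{y}_{\Tdgn_0}(x')}$ as a sum over value-network weight coordinates $\omega$, and then over pairs of paths $(p,p')$ each containing $\omega$; writing the product of the remaining weights of $p$ as $v_{\Tv_0}(p)/\Tv_0(\omega)$, a generic term reads $\phi_{\Tf_0}(x,p)\,\phi_{\Tf_0}(x',p')\cdot\big(v_{\Tv_0}(p)/\Tv_0(\omega)\big)\big(v_{\Tv_0}(p')/\Tv_0(\omega)\big)$. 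Splitting into the diagonal part $p=p'$ and the off-diagonal part $p\neq p'$: on the diagonal, \Cref{assmp:main} gives $\Tv_0(\omega)^2=\sigma^2$ for every coordinate, so a path $p$ in sub-FCN $\J$ contributes $\sigma^{2(d_{\J}-1)}$ once for each of its $d_{\J}$ weights, and summing over $p\in\J$ yields $\sum_{p\in\J} d_{\J}\,\sigma^{2(d_{\J}-1)}\,\phi_{\Tf_0}(x,p)\phi_{\Tf_0}(x',p)=\bres^{\J}\,\text{NPK}^{\J}_{\Tf_0}(x,x')$, with $\bres^{\J}=d_{\J}\sigma^{2(d_{\J}-1)}$ exactly as stated. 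Summing over $\J\in 2^{[b]}$ then gives the claimed limiting expression.

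The only genuinely analytic step — and the main obstacle — is to show the off-diagonal part converges to $0$ as $w\to\infty$. Here I would use that under \Cref{assmp:main} the value weights $\Tv_0$ are i.i.d. symmetric ($\pm\sigma$ Bernoulli) and independent of $\Tf_0$: for $p\neq p'$ the monomial $\big(v_{\Tv_0}(p)/\Tv_0(\omega)\big)\big(v_{\Tv_0}(p')/\Tv_0(\omega)\big)$ contains at least one coordinate raised to an odd power, since the symmetric difference of the weight-supports of $p$ and $p'$ is nonempty even when $p$ and $p'$ lie in different sub-FCNs and share some block weights; hence these terms have mean zero, and a second-moment estimate shows the off-diagonal sum concentrates to $0$, the normalisation $\sigma^{2(\cdot)}\sim w^{-(\cdot)}$ exactly compensating the growth in the number of paths. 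This is precisely the estimate already carried out in the proof of Theorem 5.1 of \citep{npk} for a single fully connected network; once the ResNet is regarded, in path space, as the disjoint union of its $2^b$ sub-FCNs, that estimate transfers verbatim. Thus the theorem follows from Theorem 5.1 of \citep{npk} combined with \Cref{lm:sumofproduct}.
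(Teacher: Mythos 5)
Your proposal is correct and follows essentially the same route as the paper: the paper's proof simply invokes \Cref{lm:sumofproduct} (the NPF of the ResNet is the concatenation of the NPFs of the $2^b$ sub-FCNs, so the NPK is the sum of the sub-FCN NPKs) and then appeals to Theorem~5.1 of \citep{npk} applied per sub-FCN of depth $(|\J|+2)\dblock$, which yields exactly your constant $\bres^{\J}=d_{\J}\sigma^{2(d_{\J}-1)}$. The only difference is that you additionally unpack the diagonal/off-diagonal argument inside Theorem~5.1 of \citep{npk}, which the paper leaves as a citation.
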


\begin{proof}
Follows from Theorem~5.1 in [\citenum{npk}].
\end{proof}

\section{Numerical Experiments}\label{sec:expdetails}
We now list the details related to the numerical experiments which have been left out in the main body of the paper.

 $\bullet$ \textbf{Computational Resource.} The numerical experiments were run in Nvidia-RTX 2080 TI GPUs and Tesla V100 GPUs.

$\bullet$ All the models in Table I of \Cref{fig:c4gap} we used Adam \citep{adam} with learning rate of $3\times 10^{-4}$, and batch size of 32.

$\bullet$ In \Cref{sec:dlgn}, the codes for experiments based on VGG-16 and Resnet-110  were refactored from following repository: ``https://github.com/gahaalt/resnets-in-tensorflow2".

$\bullet$ For VGG-16-DLGN in \Cref{fig:c4gap} and DLGN-SF in \Cref{fig:shallow}, the $\max$-pooling were replaced by \emph{average} pooling so as to ensure that the feature network is entirely linear. For the comparison to be fair, we replaced the $\max$ pooling in VGG-16 reported in \Cref{fig:c4gap} by \emph{average} pooling. Batch normalisation layers were retained in VGG-16, VGG-16-DLGN and VGG-16-DLGN-SF (all three are shown in \Cref{fig:vggnets}).

$\bullet$ For VGG-16-DLGN in \Cref{fig:c4gap} and DLGN-SF in \Cref{fig:shallow}, the $\max$-pooling were replaced by \emph{average} pooling so as to ensure that the feature network is entirely linear. For the comparison to be fair, we replaced the $\max$ pooling in VGG-16 reported in \Cref{fig:c4gap}. Batch normalisation layers were retained in VGG-16, VGG-16-DLGN and VGG-16-DLGN-SF.

$\bullet$ All the VGG-16, Resnet-100 (and their DGN/DLGN) models in Table II of \Cref{fig:c4gap} we used \emph{SGD} optimiser with momentum $0.9$ and the following learning rate schedule (as suggested in ``https://github.com/gahaalt/resnets-in-tensorflow2") : for iterations $[0, 400)$ learning rate was $0.01$,  for iterations $[400, 32000)$ the learning rate was $ 0.1$, for iterations $[32000, 48000)$ the learning rate was $0.01$, for iterations $[48000, 64000)$ the learning rate was $0.001$. The batch size was $128$. The models were trained till $32$ epochs.

$\bullet$ The VGG-16-DLGN-SF in Table III of \Cref{fig:shallow} uses the same optimiser, batch size and learning rate schedule as the models in Table II of \Cref{fig:c4gap} as explained in the previous point.

$\bullet$ For C1GAP and C4GAP in Table III of \Cref{fig:shallow}, we used Adam \citep{adam} with learning rate of $10^{-3}$, and batch size of 32. This learning rate is best among the set $\{10^{-1},10^{-2},10^{-3}, 3\times 10^{-4}\}$ for C1GAP.

$\bullet$ Models Used in \Cref{fig:c4gap} and \Cref{fig:shallow} are shown below.

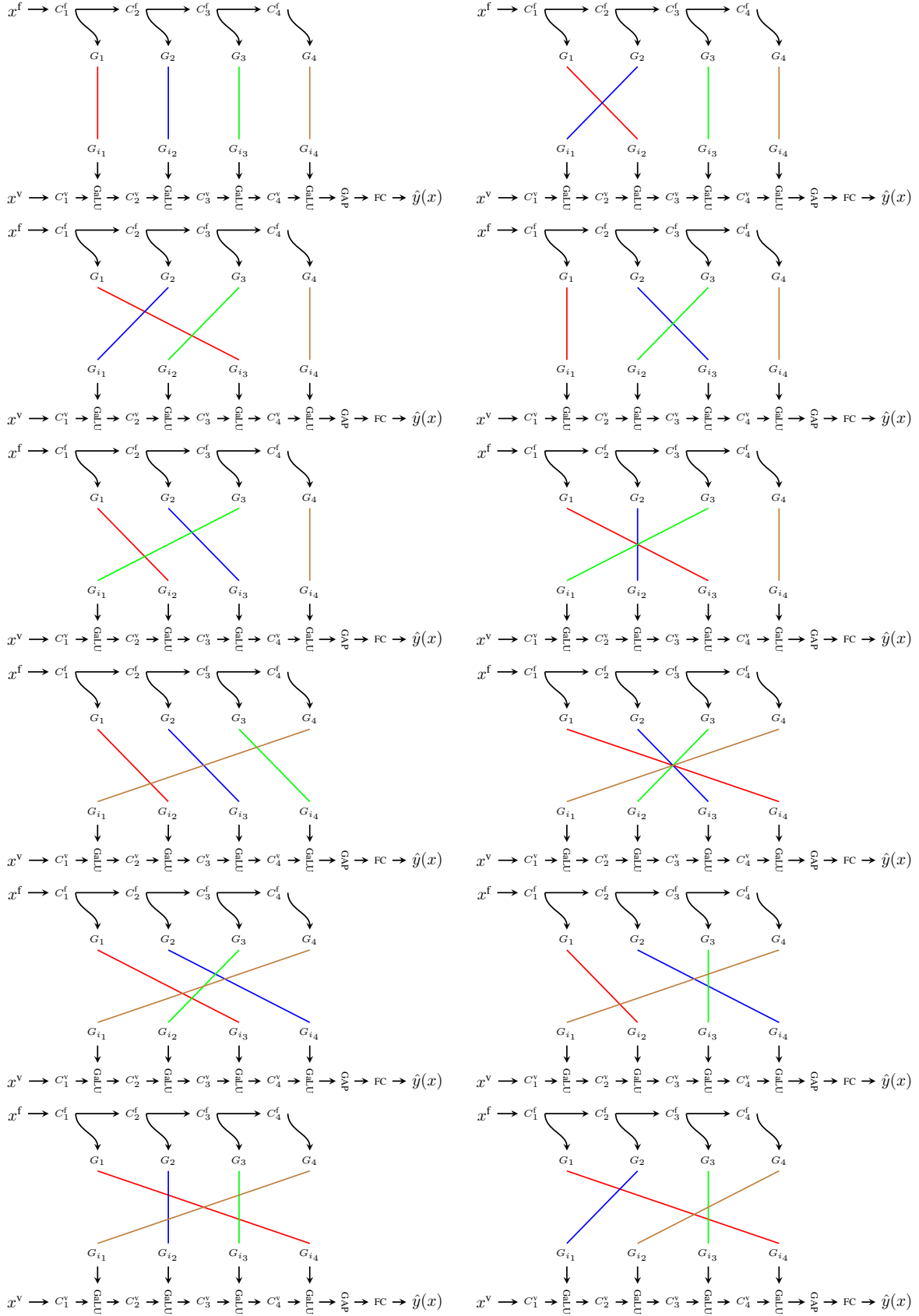
\begin{figure}
\centering
\begin{minipage}{0.48\columnwidth}
\centering
\resizebox{0.99\columnwidth}{!}{
\begin{tikzpicture}


\node [] (dgn1-f-c4) at (-3.5,1.5){\tiny{$C^{\text{f}}_4$}};
\node [] (dgn1-f-c3) at (-5,1.5){\tiny{$C^{\text{f}}_3$}};
\node [] (dgn1-f-c2) at (-6.5,1.5){\tiny{$C^{\text{f}}_2$}};
\node [] (dgn1-f-c1) at (-8,1.5){\tiny{$C^{\text{f}}_1$}};
\node [] (dgn1-input-f) at (-9,1.5){$x^{\text{f}}$};
\draw [-stealth,thick]   (dgn1-f-c3.east) -- (dgn1-f-c4.west);
\draw [-stealth,thick]   (dgn1-f-c2.east) -- (dgn1-f-c3.west);
\draw [-stealth,thick]   (dgn1-f-c1.east) -- (dgn1-f-c2.west);
\draw [-stealth,thick]   (dgn1-input-f.east) -- (dgn1-f-c1.west);

\node []  (dgn1-output) at (-0.25,-2.5) {$\hat{y}(x)$};

\node [] (dgn1-smax) at (-1.25,-2.5){\tiny{FC}};
\draw [-stealth,thick]   (dgn1-smax.east)--(dgn1-output.west);

\node [rotate=-90] (dgn1-gap) at (-2,-2.5){\tiny{GAP}};
\draw [-stealth,thick]   (dgn1-gap.north)--(dgn1-smax.west);

\node [rotate=-90] (dgn1-galu-4) at (-2.75,-2.5){\tiny{GaLU}};
\draw [-stealth,thick]   (dgn1-galu-4.north)--(dgn1-gap.south);

\node [] (dgn1-v-c4) at (-3.5,-2.5){\tiny{$C^{\text{v}}_4$}};
\draw [-stealth,thick]   (dgn1-v-c4.east) -- (dgn1-galu-4.south);

\node [rotate=-90] (dgn1-galu-3) at (-4.25,-2.5){\tiny{GaLU}};
\draw [-stealth,thick]   (dgn1-galu-3.north) -- (dgn1-v-c4.west);

\node [] (dgn1-v-c3) at (-5,-2.5){\tiny{$C^{\text{v}}_3$}};
\draw [-stealth,thick]   (dgn1-v-c3.east) -- (dgn1-galu-3.south);

\node [rotate=-90] (dgn1-galu-2) at (-5.75,-2.5){\tiny{GaLU}};
\draw [-stealth,thick]   (dgn1-galu-2.north) -- (dgn1-v-c3.west);

\node [] (dgn1-v-c2) at (-6.5,-2.5){\tiny{$C^{\text{v}}_2$}};
\draw [-stealth,thick]   (dgn1-v-c2.east) -- (dgn1-galu-2.south);

\node [rotate=-90] (dgn1-galu-1) at (-7.25,-2.5){\tiny{GaLU}};
\draw [-stealth,thick]   (dgn1-galu-1.north) -- (dgn1-v-c2.west);

\node [] (dgn1-v-c1) at (-8,-2.5){\tiny{$C^{\text{v}}_1$}};
\draw [-stealth,thick]   (dgn1-v-c1.east) -- (dgn1-galu-1.south);

\node [] (dgn1-v-input) at (-9,-2.5){$x^{\text{v}}$};

\draw [-stealth,thick]   (dgn1-v-input.east) -- (dgn1-v-c1.west);

\node[] (dgn1-gating-4-up) at (-2.75,0.5){\tiny{$G_{4}$}};
\draw [-stealth,thick]   (dgn1-f-c4.east) to[out=-90,in=90] (dgn1-gating-4-up.north);

\node[] (dgn1-gating-3-up) at (-4.25,0.5){\tiny{$G_{3}$}};
\draw [-stealth,thick]   (dgn1-f-c3.east) to[out=-90,in=90] (dgn1-gating-3-up.north);

\node[] (dgn1-gating-2-up) at (-5.75,0.5){\tiny{$G_{2}$}};
\draw [-stealth,thick]   (dgn1-f-c2.east) to[out=-90,in=90] (dgn1-gating-2-up.north);

\node[] (dgn1-gating-1-up) at (-7.25,0.5){\tiny{$G_{1}$}};
\draw [-stealth,thick]   (dgn1-f-c1.east) to[out=-90,in=90] (dgn1-gating-1-up.north);

\node[] (dgn1-gating-4) at (-2.75,-1.5){\tiny{$G_{i_4}$}};
\draw [-stealth,thick]   (dgn1-gating-4.south) -- (dgn1-galu-4.west);

\node[] (dgn1-gating-3) at (-4.25,-1.5){\tiny{$G_{i_3}$}};
\draw [-stealth,thick]   (dgn1-gating-3.south) -- (dgn1-galu-3.west);

\node[] (dgn1-gating-2) at (-5.75,-1.5){\tiny{$G_{i_2}$}};
\draw [-stealth,thick]   (dgn1-gating-2.south) -- (dgn1-galu-2.west);

\node[] (dgn1-gating-1) at (-7.25,-1.5){\tiny{$G_{i_1}$}};
\draw [-stealth,thick]   (dgn1-gating-1.south) -- (dgn1-galu-1.west);

\draw [-,thick,color=red]   (dgn1-gating-1-up.south) --(dgn1-gating-1.north);

\draw [-,thick,color=blue]   (dgn1-gating-2-up.south) --(dgn1-gating-2.north);

\draw [-,thick,color=green]   (dgn1-gating-3-up.south) --(dgn1-gating-3.north);

\draw [-,thick,color=brown]   (dgn1-gating-4-up.south) --(dgn1-gating-4.north);

	
\end{tikzpicture}
}
\end{minipage}
\begin{minipage}{0.48\columnwidth}
\centering
\resizebox{0.99\columnwidth}{!}{
\begin{tikzpicture}


\node [] (dgn1-f-c4) at (-3.5,1.5){\tiny{$C^{\text{f}}_4$}};
\node [] (dgn1-f-c3) at (-5,1.5){\tiny{$C^{\text{f}}_3$}};
\node [] (dgn1-f-c2) at (-6.5,1.5){\tiny{$C^{\text{f}}_2$}};
\node [] (dgn1-f-c1) at (-8,1.5){\tiny{$C^{\text{f}}_1$}};
\node [] (dgn1-input-f) at (-9,1.5){$x^{\text{f}}$};
\draw [-stealth,thick]   (dgn1-f-c3.east) -- (dgn1-f-c4.west);
\draw [-stealth,thick]   (dgn1-f-c2.east) -- (dgn1-f-c3.west);
\draw [-stealth,thick]   (dgn1-f-c1.east) -- (dgn1-f-c2.west);
\draw [-stealth,thick]   (dgn1-input-f.east) -- (dgn1-f-c1.west);

\node []  (dgn1-output) at (-0.25,-2.5) {$\hat{y}(x)$};

\node [] (dgn1-smax) at (-1.25,-2.5){\tiny{FC}};
\draw [-stealth,thick]   (dgn1-smax.east)--(dgn1-output.west);

\node [rotate=-90] (dgn1-gap) at (-2,-2.5){\tiny{GAP}};
\draw [-stealth,thick]   (dgn1-gap.north)--(dgn1-smax.west);

\node [rotate=-90] (dgn1-galu-4) at (-2.75,-2.5){\tiny{GaLU}};
\draw [-stealth,thick]   (dgn1-galu-4.north)--(dgn1-gap.south);

\node [] (dgn1-v-c4) at (-3.5,-2.5){\tiny{$C^{\text{v}}_4$}};
\draw [-stealth,thick]   (dgn1-v-c4.east) -- (dgn1-galu-4.south);

\node [rotate=-90] (dgn1-galu-3) at (-4.25,-2.5){\tiny{GaLU}};
\draw [-stealth,thick]   (dgn1-galu-3.north) -- (dgn1-v-c4.west);

\node [] (dgn1-v-c3) at (-5,-2.5){\tiny{$C^{\text{v}}_3$}};
\draw [-stealth,thick]   (dgn1-v-c3.east) -- (dgn1-galu-3.south);

\node [rotate=-90] (dgn1-galu-2) at (-5.75,-2.5){\tiny{GaLU}};
\draw [-stealth,thick]   (dgn1-galu-2.north) -- (dgn1-v-c3.west);

\node [] (dgn1-v-c2) at (-6.5,-2.5){\tiny{$C^{\text{v}}_2$}};
\draw [-stealth,thick]   (dgn1-v-c2.east) -- (dgn1-galu-2.south);

\node [rotate=-90] (dgn1-galu-1) at (-7.25,-2.5){\tiny{GaLU}};
\draw [-stealth,thick]   (dgn1-galu-1.north) -- (dgn1-v-c2.west);

\node [] (dgn1-v-c1) at (-8,-2.5){\tiny{$C^{\text{v}}_1$}};
\draw [-stealth,thick]   (dgn1-v-c1.east) -- (dgn1-galu-1.south);

\node [] (dgn1-v-input) at (-9,-2.5){$x^{\text{v}}$};

\draw [-stealth,thick]   (dgn1-v-input.east) -- (dgn1-v-c1.west);

\node[] (dgn1-gating-4-up) at (-2.75,0.5){\tiny{$G_{4}$}};
\draw [-stealth,thick]   (dgn1-f-c4.east) to[out=-90,in=90] (dgn1-gating-4-up.north);

\node[] (dgn1-gating-3-up) at (-4.25,0.5){\tiny{$G_{3}$}};
\draw [-stealth,thick]   (dgn1-f-c3.east) to[out=-90,in=90] (dgn1-gating-3-up.north);

\node[] (dgn1-gating-2-up) at (-5.75,0.5){\tiny{$G_{2}$}};
\draw [-stealth,thick]   (dgn1-f-c2.east) to[out=-90,in=90] (dgn1-gating-2-up.north);

\node[] (dgn1-gating-1-up) at (-7.25,0.5){\tiny{$G_{1}$}};
\draw [-stealth,thick]   (dgn1-f-c1.east) to[out=-90,in=90] (dgn1-gating-1-up.north);

\node[] (dgn1-gating-4) at (-2.75,-1.5){\tiny{$G_{i_4}$}};
\draw [-stealth,thick]   (dgn1-gating-4.south) -- (dgn1-galu-4.west);

\node[] (dgn1-gating-3) at (-4.25,-1.5){\tiny{$G_{i_3}$}};
\draw [-stealth,thick]   (dgn1-gating-3.south) -- (dgn1-galu-3.west);

\node[] (dgn1-gating-2) at (-5.75,-1.5){\tiny{$G_{i_2}$}};
\draw [-stealth,thick]   (dgn1-gating-2.south) -- (dgn1-galu-2.west);

\node[] (dgn1-gating-1) at (-7.25,-1.5){\tiny{$G_{i_1}$}};
\draw [-stealth,thick]   (dgn1-gating-1.south) -- (dgn1-galu-1.west);

\draw [-,thick,color=red]   (dgn1-gating-1-up.south) --(dgn1-gating-2.north);

\draw [-,thick,color=blue]   (dgn1-gating-2-up.south) --(dgn1-gating-1.north);

\draw [-,thick,color=green]   (dgn1-gating-3-up.south) --(dgn1-gating-3.north);

\draw [-,thick,color=brown]   (dgn1-gating-4-up.south) --(dgn1-gating-4.north);

	
\end{tikzpicture}
}
\end{minipage}

\begin{minipage}{0.48\columnwidth}
\centering
\resizebox{0.99\columnwidth}{!}{
\begin{tikzpicture}


\node [] (dgn1-f-c4) at (-3.5,1.5){\tiny{$C^{\text{f}}_4$}};
\node [] (dgn1-f-c3) at (-5,1.5){\tiny{$C^{\text{f}}_3$}};
\node [] (dgn1-f-c2) at (-6.5,1.5){\tiny{$C^{\text{f}}_2$}};
\node [] (dgn1-f-c1) at (-8,1.5){\tiny{$C^{\text{f}}_1$}};
\node [] (dgn1-input-f) at (-9,1.5){$x^{\text{f}}$};
\draw [-stealth,thick]   (dgn1-f-c3.east) -- (dgn1-f-c4.west);
\draw [-stealth,thick]   (dgn1-f-c2.east) -- (dgn1-f-c3.west);
\draw [-stealth,thick]   (dgn1-f-c1.east) -- (dgn1-f-c2.west);
\draw [-stealth,thick]   (dgn1-input-f.east) -- (dgn1-f-c1.west);

\node []  (dgn1-output) at (-0.25,-2.5) {$\hat{y}(x)$};

\node [] (dgn1-smax) at (-1.25,-2.5){\tiny{FC}};
\draw [-stealth,thick]   (dgn1-smax.east)--(dgn1-output.west);

\node [rotate=-90] (dgn1-gap) at (-2,-2.5){\tiny{GAP}};
\draw [-stealth,thick]   (dgn1-gap.north)--(dgn1-smax.west);

\node [rotate=-90] (dgn1-galu-4) at (-2.75,-2.5){\tiny{GaLU}};
\draw [-stealth,thick]   (dgn1-galu-4.north)--(dgn1-gap.south);

\node [] (dgn1-v-c4) at (-3.5,-2.5){\tiny{$C^{\text{v}}_4$}};
\draw [-stealth,thick]   (dgn1-v-c4.east) -- (dgn1-galu-4.south);

\node [rotate=-90] (dgn1-galu-3) at (-4.25,-2.5){\tiny{GaLU}};
\draw [-stealth,thick]   (dgn1-galu-3.north) -- (dgn1-v-c4.west);

\node [] (dgn1-v-c3) at (-5,-2.5){\tiny{$C^{\text{v}}_3$}};
\draw [-stealth,thick]   (dgn1-v-c3.east) -- (dgn1-galu-3.south);

\node [rotate=-90] (dgn1-galu-2) at (-5.75,-2.5){\tiny{GaLU}};
\draw [-stealth,thick]   (dgn1-galu-2.north) -- (dgn1-v-c3.west);

\node [] (dgn1-v-c2) at (-6.5,-2.5){\tiny{$C^{\text{v}}_2$}};
\draw [-stealth,thick]   (dgn1-v-c2.east) -- (dgn1-galu-2.south);

\node [rotate=-90] (dgn1-galu-1) at (-7.25,-2.5){\tiny{GaLU}};
\draw [-stealth,thick]   (dgn1-galu-1.north) -- (dgn1-v-c2.west);

\node [] (dgn1-v-c1) at (-8,-2.5){\tiny{$C^{\text{v}}_1$}};
\draw [-stealth,thick]   (dgn1-v-c1.east) -- (dgn1-galu-1.south);

\node [] (dgn1-v-input) at (-9,-2.5){$x^{\text{v}}$};

\draw [-stealth,thick]   (dgn1-v-input.east) -- (dgn1-v-c1.west);

\node[] (dgn1-gating-4-up) at (-2.75,0.5){\tiny{$G_{4}$}};
\draw [-stealth,thick]   (dgn1-f-c4.east) to[out=-90,in=90] (dgn1-gating-4-up.north);

\node[] (dgn1-gating-3-up) at (-4.25,0.5){\tiny{$G_{3}$}};
\draw [-stealth,thick]   (dgn1-f-c3.east) to[out=-90,in=90] (dgn1-gating-3-up.north);

\node[] (dgn1-gating-2-up) at (-5.75,0.5){\tiny{$G_{2}$}};
\draw [-stealth,thick]   (dgn1-f-c2.east) to[out=-90,in=90] (dgn1-gating-2-up.north);

\node[] (dgn1-gating-1-up) at (-7.25,0.5){\tiny{$G_{1}$}};
\draw [-stealth,thick]   (dgn1-f-c1.east) to[out=-90,in=90] (dgn1-gating-1-up.north);

\node[] (dgn1-gating-4) at (-2.75,-1.5){\tiny{$G_{i_4}$}};
\draw [-stealth,thick]   (dgn1-gating-4.south) -- (dgn1-galu-4.west);

\node[] (dgn1-gating-3) at (-4.25,-1.5){\tiny{$G_{i_3}$}};
\draw [-stealth,thick]   (dgn1-gating-3.south) -- (dgn1-galu-3.west);

\node[] (dgn1-gating-2) at (-5.75,-1.5){\tiny{$G_{i_2}$}};
\draw [-stealth,thick]   (dgn1-gating-2.south) -- (dgn1-galu-2.west);

\node[] (dgn1-gating-1) at (-7.25,-1.5){\tiny{$G_{i_1}$}};
\draw [-stealth,thick]   (dgn1-gating-1.south) -- (dgn1-galu-1.west);

\draw [-,thick,color=red]   (dgn1-gating-1-up.south) --(dgn1-gating-3.north);

\draw [-,thick,color=blue]   (dgn1-gating-2-up.south) --(dgn1-gating-1.north);

\draw [-,thick,color=green]   (dgn1-gating-3-up.south) --(dgn1-gating-2.north);

\draw [-,thick,color=brown]   (dgn1-gating-4-up.south) --(dgn1-gating-4.north);

	
\end{tikzpicture}
}
\end{minipage}
\begin{minipage}{0.48\columnwidth}
\centering
\resizebox{0.99\columnwidth}{!}{
\begin{tikzpicture}


\node [] (dgn1-f-c4) at (-3.5,1.5){\tiny{$C^{\text{f}}_4$}};
\node [] (dgn1-f-c3) at (-5,1.5){\tiny{$C^{\text{f}}_3$}};
\node [] (dgn1-f-c2) at (-6.5,1.5){\tiny{$C^{\text{f}}_2$}};
\node [] (dgn1-f-c1) at (-8,1.5){\tiny{$C^{\text{f}}_1$}};
\node [] (dgn1-input-f) at (-9,1.5){$x^{\text{f}}$};
\draw [-stealth,thick]   (dgn1-f-c3.east) -- (dgn1-f-c4.west);
\draw [-stealth,thick]   (dgn1-f-c2.east) -- (dgn1-f-c3.west);
\draw [-stealth,thick]   (dgn1-f-c1.east) -- (dgn1-f-c2.west);
\draw [-stealth,thick]   (dgn1-input-f.east) -- (dgn1-f-c1.west);

\node []  (dgn1-output) at (-0.25,-2.5) {$\hat{y}(x)$};

\node [] (dgn1-smax) at (-1.25,-2.5){\tiny{FC}};
\draw [-stealth,thick]   (dgn1-smax.east)--(dgn1-output.west);

\node [rotate=-90] (dgn1-gap) at (-2,-2.5){\tiny{GAP}};
\draw [-stealth,thick]   (dgn1-gap.north)--(dgn1-smax.west);

\node [rotate=-90] (dgn1-galu-4) at (-2.75,-2.5){\tiny{GaLU}};
\draw [-stealth,thick]   (dgn1-galu-4.north)--(dgn1-gap.south);

\node [] (dgn1-v-c4) at (-3.5,-2.5){\tiny{$C^{\text{v}}_4$}};
\draw [-stealth,thick]   (dgn1-v-c4.east) -- (dgn1-galu-4.south);

\node [rotate=-90] (dgn1-galu-3) at (-4.25,-2.5){\tiny{GaLU}};
\draw [-stealth,thick]   (dgn1-galu-3.north) -- (dgn1-v-c4.west);

\node [] (dgn1-v-c3) at (-5,-2.5){\tiny{$C^{\text{v}}_3$}};
\draw [-stealth,thick]   (dgn1-v-c3.east) -- (dgn1-galu-3.south);

\node [rotate=-90] (dgn1-galu-2) at (-5.75,-2.5){\tiny{GaLU}};
\draw [-stealth,thick]   (dgn1-galu-2.north) -- (dgn1-v-c3.west);

\node [] (dgn1-v-c2) at (-6.5,-2.5){\tiny{$C^{\text{v}}_2$}};
\draw [-stealth,thick]   (dgn1-v-c2.east) -- (dgn1-galu-2.south);

\node [rotate=-90] (dgn1-galu-1) at (-7.25,-2.5){\tiny{GaLU}};
\draw [-stealth,thick]   (dgn1-galu-1.north) -- (dgn1-v-c2.west);

\node [] (dgn1-v-c1) at (-8,-2.5){\tiny{$C^{\text{v}}_1$}};
\draw [-stealth,thick]   (dgn1-v-c1.east) -- (dgn1-galu-1.south);

\node [] (dgn1-v-input) at (-9,-2.5){$x^{\text{v}}$};

\draw [-stealth,thick]   (dgn1-v-input.east) -- (dgn1-v-c1.west);

\node[] (dgn1-gating-4-up) at (-2.75,0.5){\tiny{$G_{4}$}};
\draw [-stealth,thick]   (dgn1-f-c4.east) to[out=-90,in=90] (dgn1-gating-4-up.north);

\node[] (dgn1-gating-3-up) at (-4.25,0.5){\tiny{$G_{3}$}};
\draw [-stealth,thick]   (dgn1-f-c3.east) to[out=-90,in=90] (dgn1-gating-3-up.north);

\node[] (dgn1-gating-2-up) at (-5.75,0.5){\tiny{$G_{2}$}};
\draw [-stealth,thick]   (dgn1-f-c2.east) to[out=-90,in=90] (dgn1-gating-2-up.north);

\node[] (dgn1-gating-1-up) at (-7.25,0.5){\tiny{$G_{1}$}};
\draw [-stealth,thick]   (dgn1-f-c1.east) to[out=-90,in=90] (dgn1-gating-1-up.north);

\node[] (dgn1-gating-4) at (-2.75,-1.5){\tiny{$G_{i_4}$}};
\draw [-stealth,thick]   (dgn1-gating-4.south) -- (dgn1-galu-4.west);

\node[] (dgn1-gating-3) at (-4.25,-1.5){\tiny{$G_{i_3}$}};
\draw [-stealth,thick]   (dgn1-gating-3.south) -- (dgn1-galu-3.west);

\node[] (dgn1-gating-2) at (-5.75,-1.5){\tiny{$G_{i_2}$}};
\draw [-stealth,thick]   (dgn1-gating-2.south) -- (dgn1-galu-2.west);

\node[] (dgn1-gating-1) at (-7.25,-1.5){\tiny{$G_{i_1}$}};
\draw [-stealth,thick]   (dgn1-gating-1.south) -- (dgn1-galu-1.west);

\draw [-,thick,color=red]   (dgn1-gating-1-up.south) --(dgn1-gating-1.north);

\draw [-,thick,color=blue]   (dgn1-gating-2-up.south) --(dgn1-gating-3.north);

\draw [-,thick,color=green]   (dgn1-gating-3-up.south) --(dgn1-gating-2.north);

\draw [-,thick,color=brown]   (dgn1-gating-4-up.south) --(dgn1-gating-4.north);

	
\end{tikzpicture}
}
\end{minipage}

\begin{minipage}{0.48\columnwidth}
\centering
\resizebox{0.99\columnwidth}{!}{
\begin{tikzpicture}


\node [] (dgn1-f-c4) at (-3.5,1.5){\tiny{$C^{\text{f}}_4$}};
\node [] (dgn1-f-c3) at (-5,1.5){\tiny{$C^{\text{f}}_3$}};
\node [] (dgn1-f-c2) at (-6.5,1.5){\tiny{$C^{\text{f}}_2$}};
\node [] (dgn1-f-c1) at (-8,1.5){\tiny{$C^{\text{f}}_1$}};
\node [] (dgn1-input-f) at (-9,1.5){$x^{\text{f}}$};
\draw [-stealth,thick]   (dgn1-f-c3.east) -- (dgn1-f-c4.west);
\draw [-stealth,thick]   (dgn1-f-c2.east) -- (dgn1-f-c3.west);
\draw [-stealth,thick]   (dgn1-f-c1.east) -- (dgn1-f-c2.west);
\draw [-stealth,thick]   (dgn1-input-f.east) -- (dgn1-f-c1.west);

\node []  (dgn1-output) at (-0.25,-2.5) {$\hat{y}(x)$};

\node [] (dgn1-smax) at (-1.25,-2.5){\tiny{FC}};
\draw [-stealth,thick]   (dgn1-smax.east)--(dgn1-output.west);

\node [rotate=-90] (dgn1-gap) at (-2,-2.5){\tiny{GAP}};
\draw [-stealth,thick]   (dgn1-gap.north)--(dgn1-smax.west);

\node [rotate=-90] (dgn1-galu-4) at (-2.75,-2.5){\tiny{GaLU}};
\draw [-stealth,thick]   (dgn1-galu-4.north)--(dgn1-gap.south);

\node [] (dgn1-v-c4) at (-3.5,-2.5){\tiny{$C^{\text{v}}_4$}};
\draw [-stealth,thick]   (dgn1-v-c4.east) -- (dgn1-galu-4.south);

\node [rotate=-90] (dgn1-galu-3) at (-4.25,-2.5){\tiny{GaLU}};
\draw [-stealth,thick]   (dgn1-galu-3.north) -- (dgn1-v-c4.west);

\node [] (dgn1-v-c3) at (-5,-2.5){\tiny{$C^{\text{v}}_3$}};
\draw [-stealth,thick]   (dgn1-v-c3.east) -- (dgn1-galu-3.south);

\node [rotate=-90] (dgn1-galu-2) at (-5.75,-2.5){\tiny{GaLU}};
\draw [-stealth,thick]   (dgn1-galu-2.north) -- (dgn1-v-c3.west);

\node [] (dgn1-v-c2) at (-6.5,-2.5){\tiny{$C^{\text{v}}_2$}};
\draw [-stealth,thick]   (dgn1-v-c2.east) -- (dgn1-galu-2.south);

\node [rotate=-90] (dgn1-galu-1) at (-7.25,-2.5){\tiny{GaLU}};
\draw [-stealth,thick]   (dgn1-galu-1.north) -- (dgn1-v-c2.west);

\node [] (dgn1-v-c1) at (-8,-2.5){\tiny{$C^{\text{v}}_1$}};
\draw [-stealth,thick]   (dgn1-v-c1.east) -- (dgn1-galu-1.south);

\node [] (dgn1-v-input) at (-9,-2.5){$x^{\text{v}}$};

\draw [-stealth,thick]   (dgn1-v-input.east) -- (dgn1-v-c1.west);

\node[] (dgn1-gating-4-up) at (-2.75,0.5){\tiny{$G_{4}$}};
\draw [-stealth,thick]   (dgn1-f-c4.east) to[out=-90,in=90] (dgn1-gating-4-up.north);

\node[] (dgn1-gating-3-up) at (-4.25,0.5){\tiny{$G_{3}$}};
\draw [-stealth,thick]   (dgn1-f-c3.east) to[out=-90,in=90] (dgn1-gating-3-up.north);

\node[] (dgn1-gating-2-up) at (-5.75,0.5){\tiny{$G_{2}$}};
\draw [-stealth,thick]   (dgn1-f-c2.east) to[out=-90,in=90] (dgn1-gating-2-up.north);

\node[] (dgn1-gating-1-up) at (-7.25,0.5){\tiny{$G_{1}$}};
\draw [-stealth,thick]   (dgn1-f-c1.east) to[out=-90,in=90] (dgn1-gating-1-up.north);

\node[] (dgn1-gating-4) at (-2.75,-1.5){\tiny{$G_{i_4}$}};
\draw [-stealth,thick]   (dgn1-gating-4.south) -- (dgn1-galu-4.west);

\node[] (dgn1-gating-3) at (-4.25,-1.5){\tiny{$G_{i_3}$}};
\draw [-stealth,thick]   (dgn1-gating-3.south) -- (dgn1-galu-3.west);

\node[] (dgn1-gating-2) at (-5.75,-1.5){\tiny{$G_{i_2}$}};
\draw [-stealth,thick]   (dgn1-gating-2.south) -- (dgn1-galu-2.west);

\node[] (dgn1-gating-1) at (-7.25,-1.5){\tiny{$G_{i_1}$}};
\draw [-stealth,thick]   (dgn1-gating-1.south) -- (dgn1-galu-1.west);

\draw [-,thick,color=red]   (dgn1-gating-1-up.south) --(dgn1-gating-2.north);

\draw [-,thick,color=blue]   (dgn1-gating-2-up.south) --(dgn1-gating-3.north);

\draw [-,thick,color=green]   (dgn1-gating-3-up.south) --(dgn1-gating-1.north);

\draw [-,thick,color=brown]   (dgn1-gating-4-up.south) --(dgn1-gating-4.north);

	
\end{tikzpicture}
}
\end{minipage}
\begin{minipage}{0.48\columnwidth}
\centering
\resizebox{0.99\columnwidth}{!}{
\begin{tikzpicture}


\node [] (dgn1-f-c4) at (-3.5,1.5){\tiny{$C^{\text{f}}_4$}};
\node [] (dgn1-f-c3) at (-5,1.5){\tiny{$C^{\text{f}}_3$}};
\node [] (dgn1-f-c2) at (-6.5,1.5){\tiny{$C^{\text{f}}_2$}};
\node [] (dgn1-f-c1) at (-8,1.5){\tiny{$C^{\text{f}}_1$}};
\node [] (dgn1-input-f) at (-9,1.5){$x^{\text{f}}$};
\draw [-stealth,thick]   (dgn1-f-c3.east) -- (dgn1-f-c4.west);
\draw [-stealth,thick]   (dgn1-f-c2.east) -- (dgn1-f-c3.west);
\draw [-stealth,thick]   (dgn1-f-c1.east) -- (dgn1-f-c2.west);
\draw [-stealth,thick]   (dgn1-input-f.east) -- (dgn1-f-c1.west);

\node []  (dgn1-output) at (-0.25,-2.5) {$\hat{y}(x)$};

\node [] (dgn1-smax) at (-1.25,-2.5){\tiny{FC}};
\draw [-stealth,thick]   (dgn1-smax.east)--(dgn1-output.west);

\node [rotate=-90] (dgn1-gap) at (-2,-2.5){\tiny{GAP}};
\draw [-stealth,thick]   (dgn1-gap.north)--(dgn1-smax.west);

\node [rotate=-90] (dgn1-galu-4) at (-2.75,-2.5){\tiny{GaLU}};
\draw [-stealth,thick]   (dgn1-galu-4.north)--(dgn1-gap.south);

\node [] (dgn1-v-c4) at (-3.5,-2.5){\tiny{$C^{\text{v}}_4$}};
\draw [-stealth,thick]   (dgn1-v-c4.east) -- (dgn1-galu-4.south);

\node [rotate=-90] (dgn1-galu-3) at (-4.25,-2.5){\tiny{GaLU}};
\draw [-stealth,thick]   (dgn1-galu-3.north) -- (dgn1-v-c4.west);

\node [] (dgn1-v-c3) at (-5,-2.5){\tiny{$C^{\text{v}}_3$}};
\draw [-stealth,thick]   (dgn1-v-c3.east) -- (dgn1-galu-3.south);

\node [rotate=-90] (dgn1-galu-2) at (-5.75,-2.5){\tiny{GaLU}};
\draw [-stealth,thick]   (dgn1-galu-2.north) -- (dgn1-v-c3.west);

\node [] (dgn1-v-c2) at (-6.5,-2.5){\tiny{$C^{\text{v}}_2$}};
\draw [-stealth,thick]   (dgn1-v-c2.east) -- (dgn1-galu-2.south);

\node [rotate=-90] (dgn1-galu-1) at (-7.25,-2.5){\tiny{GaLU}};
\draw [-stealth,thick]   (dgn1-galu-1.north) -- (dgn1-v-c2.west);

\node [] (dgn1-v-c1) at (-8,-2.5){\tiny{$C^{\text{v}}_1$}};
\draw [-stealth,thick]   (dgn1-v-c1.east) -- (dgn1-galu-1.south);

\node [] (dgn1-v-input) at (-9,-2.5){$x^{\text{v}}$};

\draw [-stealth,thick]   (dgn1-v-input.east) -- (dgn1-v-c1.west);

\node[] (dgn1-gating-4-up) at (-2.75,0.5){\tiny{$G_{4}$}};
\draw [-stealth,thick]   (dgn1-f-c4.east) to[out=-90,in=90] (dgn1-gating-4-up.north);

\node[] (dgn1-gating-3-up) at (-4.25,0.5){\tiny{$G_{3}$}};
\draw [-stealth,thick]   (dgn1-f-c3.east) to[out=-90,in=90] (dgn1-gating-3-up.north);

\node[] (dgn1-gating-2-up) at (-5.75,0.5){\tiny{$G_{2}$}};
\draw [-stealth,thick]   (dgn1-f-c2.east) to[out=-90,in=90] (dgn1-gating-2-up.north);

\node[] (dgn1-gating-1-up) at (-7.25,0.5){\tiny{$G_{1}$}};
\draw [-stealth,thick]   (dgn1-f-c1.east) to[out=-90,in=90] (dgn1-gating-1-up.north);

\node[] (dgn1-gating-4) at (-2.75,-1.5){\tiny{$G_{i_4}$}};
\draw [-stealth,thick]   (dgn1-gating-4.south) -- (dgn1-galu-4.west);

\node[] (dgn1-gating-3) at (-4.25,-1.5){\tiny{$G_{i_3}$}};
\draw [-stealth,thick]   (dgn1-gating-3.south) -- (dgn1-galu-3.west);

\node[] (dgn1-gating-2) at (-5.75,-1.5){\tiny{$G_{i_2}$}};
\draw [-stealth,thick]   (dgn1-gating-2.south) -- (dgn1-galu-2.west);

\node[] (dgn1-gating-1) at (-7.25,-1.5){\tiny{$G_{i_1}$}};
\draw [-stealth,thick]   (dgn1-gating-1.south) -- (dgn1-galu-1.west);

\draw [-,thick,color=red]   (dgn1-gating-1-up.south) --(dgn1-gating-3.north);

\draw [-,thick,color=blue]   (dgn1-gating-2-up.south) --(dgn1-gating-2.north);

\draw [-,thick,color=green]   (dgn1-gating-3-up.south) --(dgn1-gating-1.north);

\draw [-,thick,color=brown]   (dgn1-gating-4-up.south) --(dgn1-gating-4.north);

	
\end{tikzpicture}
}
\end{minipage}

\begin{minipage}{0.48\columnwidth}
\centering
\resizebox{0.99\columnwidth}{!}{
\begin{tikzpicture}


\node [] (dgn1-f-c4) at (-3.5,1.5){\tiny{$C^{\text{f}}_4$}};
\node [] (dgn1-f-c3) at (-5,1.5){\tiny{$C^{\text{f}}_3$}};
\node [] (dgn1-f-c2) at (-6.5,1.5){\tiny{$C^{\text{f}}_2$}};
\node [] (dgn1-f-c1) at (-8,1.5){\tiny{$C^{\text{f}}_1$}};
\node [] (dgn1-input-f) at (-9,1.5){$x^{\text{f}}$};
\draw [-stealth,thick]   (dgn1-f-c3.east) -- (dgn1-f-c4.west);
\draw [-stealth,thick]   (dgn1-f-c2.east) -- (dgn1-f-c3.west);
\draw [-stealth,thick]   (dgn1-f-c1.east) -- (dgn1-f-c2.west);
\draw [-stealth,thick]   (dgn1-input-f.east) -- (dgn1-f-c1.west);

\node []  (dgn1-output) at (-0.25,-2.5) {$\hat{y}(x)$};

\node [] (dgn1-smax) at (-1.25,-2.5){\tiny{FC}};
\draw [-stealth,thick]   (dgn1-smax.east)--(dgn1-output.west);

\node [rotate=-90] (dgn1-gap) at (-2,-2.5){\tiny{GAP}};
\draw [-stealth,thick]   (dgn1-gap.north)--(dgn1-smax.west);

\node [rotate=-90] (dgn1-galu-4) at (-2.75,-2.5){\tiny{GaLU}};
\draw [-stealth,thick]   (dgn1-galu-4.north)--(dgn1-gap.south);

\node [] (dgn1-v-c4) at (-3.5,-2.5){\tiny{$C^{\text{v}}_4$}};
\draw [-stealth,thick]   (dgn1-v-c4.east) -- (dgn1-galu-4.south);

\node [rotate=-90] (dgn1-galu-3) at (-4.25,-2.5){\tiny{GaLU}};
\draw [-stealth,thick]   (dgn1-galu-3.north) -- (dgn1-v-c4.west);

\node [] (dgn1-v-c3) at (-5,-2.5){\tiny{$C^{\text{v}}_3$}};
\draw [-stealth,thick]   (dgn1-v-c3.east) -- (dgn1-galu-3.south);

\node [rotate=-90] (dgn1-galu-2) at (-5.75,-2.5){\tiny{GaLU}};
\draw [-stealth,thick]   (dgn1-galu-2.north) -- (dgn1-v-c3.west);

\node [] (dgn1-v-c2) at (-6.5,-2.5){\tiny{$C^{\text{v}}_2$}};
\draw [-stealth,thick]   (dgn1-v-c2.east) -- (dgn1-galu-2.south);

\node [rotate=-90] (dgn1-galu-1) at (-7.25,-2.5){\tiny{GaLU}};
\draw [-stealth,thick]   (dgn1-galu-1.north) -- (dgn1-v-c2.west);

\node [] (dgn1-v-c1) at (-8,-2.5){\tiny{$C^{\text{v}}_1$}};
\draw [-stealth,thick]   (dgn1-v-c1.east) -- (dgn1-galu-1.south);

\node [] (dgn1-v-input) at (-9,-2.5){$x^{\text{v}}$};

\draw [-stealth,thick]   (dgn1-v-input.east) -- (dgn1-v-c1.west);

\node[] (dgn1-gating-4-up) at (-2.75,0.5){\tiny{$G_{4}$}};
\draw [-stealth,thick]   (dgn1-f-c4.east) to[out=-90,in=90] (dgn1-gating-4-up.north);

\node[] (dgn1-gating-3-up) at (-4.25,0.5){\tiny{$G_{3}$}};
\draw [-stealth,thick]   (dgn1-f-c3.east) to[out=-90,in=90] (dgn1-gating-3-up.north);

\node[] (dgn1-gating-2-up) at (-5.75,0.5){\tiny{$G_{2}$}};
\draw [-stealth,thick]   (dgn1-f-c2.east) to[out=-90,in=90] (dgn1-gating-2-up.north);

\node[] (dgn1-gating-1-up) at (-7.25,0.5){\tiny{$G_{1}$}};
\draw [-stealth,thick]   (dgn1-f-c1.east) to[out=-90,in=90] (dgn1-gating-1-up.north);

\node[] (dgn1-gating-4) at (-2.75,-1.5){\tiny{$G_{i_4}$}};
\draw [-stealth,thick]   (dgn1-gating-4.south) -- (dgn1-galu-4.west);

\node[] (dgn1-gating-3) at (-4.25,-1.5){\tiny{$G_{i_3}$}};
\draw [-stealth,thick]   (dgn1-gating-3.south) -- (dgn1-galu-3.west);

\node[] (dgn1-gating-2) at (-5.75,-1.5){\tiny{$G_{i_2}$}};
\draw [-stealth,thick]   (dgn1-gating-2.south) -- (dgn1-galu-2.west);

\node[] (dgn1-gating-1) at (-7.25,-1.5){\tiny{$G_{i_1}$}};
\draw [-stealth,thick]   (dgn1-gating-1.south) -- (dgn1-galu-1.west);

\draw [-,thick,color=red]   (dgn1-gating-1-up.south) --(dgn1-gating-2.north);

\draw [-,thick,color=blue]   (dgn1-gating-2-up.south) --(dgn1-gating-3.north);

\draw [-,thick,color=green]   (dgn1-gating-3-up.south) --(dgn1-gating-4.north);

\draw [-,thick,color=brown]   (dgn1-gating-4-up.south) --(dgn1-gating-1.north);

	
\end{tikzpicture}
}
\end{minipage}
\begin{minipage}{0.48\columnwidth}
\centering
\resizebox{0.99\columnwidth}{!}{
\begin{tikzpicture}


\node [] (dgn1-f-c4) at (-3.5,1.5){\tiny{$C^{\text{f}}_4$}};
\node [] (dgn1-f-c3) at (-5,1.5){\tiny{$C^{\text{f}}_3$}};
\node [] (dgn1-f-c2) at (-6.5,1.5){\tiny{$C^{\text{f}}_2$}};
\node [] (dgn1-f-c1) at (-8,1.5){\tiny{$C^{\text{f}}_1$}};
\node [] (dgn1-input-f) at (-9,1.5){$x^{\text{f}}$};
\draw [-stealth,thick]   (dgn1-f-c3.east) -- (dgn1-f-c4.west);
\draw [-stealth,thick]   (dgn1-f-c2.east) -- (dgn1-f-c3.west);
\draw [-stealth,thick]   (dgn1-f-c1.east) -- (dgn1-f-c2.west);
\draw [-stealth,thick]   (dgn1-input-f.east) -- (dgn1-f-c1.west);

\node []  (dgn1-output) at (-0.25,-2.5) {$\hat{y}(x)$};

\node [] (dgn1-smax) at (-1.25,-2.5){\tiny{FC}};
\draw [-stealth,thick]   (dgn1-smax.east)--(dgn1-output.west);

\node [rotate=-90] (dgn1-gap) at (-2,-2.5){\tiny{GAP}};
\draw [-stealth,thick]   (dgn1-gap.north)--(dgn1-smax.west);

\node [rotate=-90] (dgn1-galu-4) at (-2.75,-2.5){\tiny{GaLU}};
\draw [-stealth,thick]   (dgn1-galu-4.north)--(dgn1-gap.south);

\node [] (dgn1-v-c4) at (-3.5,-2.5){\tiny{$C^{\text{v}}_4$}};
\draw [-stealth,thick]   (dgn1-v-c4.east) -- (dgn1-galu-4.south);

\node [rotate=-90] (dgn1-galu-3) at (-4.25,-2.5){\tiny{GaLU}};
\draw [-stealth,thick]   (dgn1-galu-3.north) -- (dgn1-v-c4.west);

\node [] (dgn1-v-c3) at (-5,-2.5){\tiny{$C^{\text{v}}_3$}};
\draw [-stealth,thick]   (dgn1-v-c3.east) -- (dgn1-galu-3.south);

\node [rotate=-90] (dgn1-galu-2) at (-5.75,-2.5){\tiny{GaLU}};
\draw [-stealth,thick]   (dgn1-galu-2.north) -- (dgn1-v-c3.west);

\node [] (dgn1-v-c2) at (-6.5,-2.5){\tiny{$C^{\text{v}}_2$}};
\draw [-stealth,thick]   (dgn1-v-c2.east) -- (dgn1-galu-2.south);

\node [rotate=-90] (dgn1-galu-1) at (-7.25,-2.5){\tiny{GaLU}};
\draw [-stealth,thick]   (dgn1-galu-1.north) -- (dgn1-v-c2.west);

\node [] (dgn1-v-c1) at (-8,-2.5){\tiny{$C^{\text{v}}_1$}};
\draw [-stealth,thick]   (dgn1-v-c1.east) -- (dgn1-galu-1.south);

\node [] (dgn1-v-input) at (-9,-2.5){$x^{\text{v}}$};

\draw [-stealth,thick]   (dgn1-v-input.east) -- (dgn1-v-c1.west);

\node[] (dgn1-gating-4-up) at (-2.75,0.5){\tiny{$G_{4}$}};
\draw [-stealth,thick]   (dgn1-f-c4.east) to[out=-90,in=90] (dgn1-gating-4-up.north);

\node[] (dgn1-gating-3-up) at (-4.25,0.5){\tiny{$G_{3}$}};
\draw [-stealth,thick]   (dgn1-f-c3.east) to[out=-90,in=90] (dgn1-gating-3-up.north);

\node[] (dgn1-gating-2-up) at (-5.75,0.5){\tiny{$G_{2}$}};
\draw [-stealth,thick]   (dgn1-f-c2.east) to[out=-90,in=90] (dgn1-gating-2-up.north);

\node[] (dgn1-gating-1-up) at (-7.25,0.5){\tiny{$G_{1}$}};
\draw [-stealth,thick]   (dgn1-f-c1.east) to[out=-90,in=90] (dgn1-gating-1-up.north);

\node[] (dgn1-gating-4) at (-2.75,-1.5){\tiny{$G_{i_4}$}};
\draw [-stealth,thick]   (dgn1-gating-4.south) -- (dgn1-galu-4.west);

\node[] (dgn1-gating-3) at (-4.25,-1.5){\tiny{$G_{i_3}$}};
\draw [-stealth,thick]   (dgn1-gating-3.south) -- (dgn1-galu-3.west);

\node[] (dgn1-gating-2) at (-5.75,-1.5){\tiny{$G_{i_2}$}};
\draw [-stealth,thick]   (dgn1-gating-2.south) -- (dgn1-galu-2.west);

\node[] (dgn1-gating-1) at (-7.25,-1.5){\tiny{$G_{i_1}$}};
\draw [-stealth,thick]   (dgn1-gating-1.south) -- (dgn1-galu-1.west);

\draw [-,thick,color=red]   (dgn1-gating-1-up.south) --(dgn1-gating-4.north);

\draw [-,thick,color=blue]   (dgn1-gating-2-up.south) --(dgn1-gating-3.north);

\draw [-,thick,color=green]   (dgn1-gating-3-up.south) --(dgn1-gating-2.north);

\draw [-,thick,color=brown]   (dgn1-gating-4-up.south) --(dgn1-gating-1.north);

	
\end{tikzpicture}
}
\end{minipage}

\begin{minipage}{0.48\columnwidth}
\centering
\resizebox{0.99\columnwidth}{!}{
\begin{tikzpicture}


\node [] (dgn1-f-c4) at (-3.5,1.5){\tiny{$C^{\text{f}}_4$}};
\node [] (dgn1-f-c3) at (-5,1.5){\tiny{$C^{\text{f}}_3$}};
\node [] (dgn1-f-c2) at (-6.5,1.5){\tiny{$C^{\text{f}}_2$}};
\node [] (dgn1-f-c1) at (-8,1.5){\tiny{$C^{\text{f}}_1$}};
\node [] (dgn1-input-f) at (-9,1.5){$x^{\text{f}}$};
\draw [-stealth,thick]   (dgn1-f-c3.east) -- (dgn1-f-c4.west);
\draw [-stealth,thick]   (dgn1-f-c2.east) -- (dgn1-f-c3.west);
\draw [-stealth,thick]   (dgn1-f-c1.east) -- (dgn1-f-c2.west);
\draw [-stealth,thick]   (dgn1-input-f.east) -- (dgn1-f-c1.west);

\node []  (dgn1-output) at (-0.25,-2.5) {$\hat{y}(x)$};

\node [] (dgn1-smax) at (-1.25,-2.5){\tiny{FC}};
\draw [-stealth,thick]   (dgn1-smax.east)--(dgn1-output.west);

\node [rotate=-90] (dgn1-gap) at (-2,-2.5){\tiny{GAP}};
\draw [-stealth,thick]   (dgn1-gap.north)--(dgn1-smax.west);

\node [rotate=-90] (dgn1-galu-4) at (-2.75,-2.5){\tiny{GaLU}};
\draw [-stealth,thick]   (dgn1-galu-4.north)--(dgn1-gap.south);

\node [] (dgn1-v-c4) at (-3.5,-2.5){\tiny{$C^{\text{v}}_4$}};
\draw [-stealth,thick]   (dgn1-v-c4.east) -- (dgn1-galu-4.south);

\node [rotate=-90] (dgn1-galu-3) at (-4.25,-2.5){\tiny{GaLU}};
\draw [-stealth,thick]   (dgn1-galu-3.north) -- (dgn1-v-c4.west);

\node [] (dgn1-v-c3) at (-5,-2.5){\tiny{$C^{\text{v}}_3$}};
\draw [-stealth,thick]   (dgn1-v-c3.east) -- (dgn1-galu-3.south);

\node [rotate=-90] (dgn1-galu-2) at (-5.75,-2.5){\tiny{GaLU}};
\draw [-stealth,thick]   (dgn1-galu-2.north) -- (dgn1-v-c3.west);

\node [] (dgn1-v-c2) at (-6.5,-2.5){\tiny{$C^{\text{v}}_2$}};
\draw [-stealth,thick]   (dgn1-v-c2.east) -- (dgn1-galu-2.south);

\node [rotate=-90] (dgn1-galu-1) at (-7.25,-2.5){\tiny{GaLU}};
\draw [-stealth,thick]   (dgn1-galu-1.north) -- (dgn1-v-c2.west);

\node [] (dgn1-v-c1) at (-8,-2.5){\tiny{$C^{\text{v}}_1$}};
\draw [-stealth,thick]   (dgn1-v-c1.east) -- (dgn1-galu-1.south);

\node [] (dgn1-v-input) at (-9,-2.5){$x^{\text{v}}$};

\draw [-stealth,thick]   (dgn1-v-input.east) -- (dgn1-v-c1.west);

\node[] (dgn1-gating-4-up) at (-2.75,0.5){\tiny{$G_{4}$}};
\draw [-stealth,thick]   (dgn1-f-c4.east) to[out=-90,in=90] (dgn1-gating-4-up.north);

\node[] (dgn1-gating-3-up) at (-4.25,0.5){\tiny{$G_{3}$}};
\draw [-stealth,thick]   (dgn1-f-c3.east) to[out=-90,in=90] (dgn1-gating-3-up.north);

\node[] (dgn1-gating-2-up) at (-5.75,0.5){\tiny{$G_{2}$}};
\draw [-stealth,thick]   (dgn1-f-c2.east) to[out=-90,in=90] (dgn1-gating-2-up.north);

\node[] (dgn1-gating-1-up) at (-7.25,0.5){\tiny{$G_{1}$}};
\draw [-stealth,thick]   (dgn1-f-c1.east) to[out=-90,in=90] (dgn1-gating-1-up.north);

\node[] (dgn1-gating-4) at (-2.75,-1.5){\tiny{$G_{i_4}$}};
\draw [-stealth,thick]   (dgn1-gating-4.south) -- (dgn1-galu-4.west);

\node[] (dgn1-gating-3) at (-4.25,-1.5){\tiny{$G_{i_3}$}};
\draw [-stealth,thick]   (dgn1-gating-3.south) -- (dgn1-galu-3.west);

\node[] (dgn1-gating-2) at (-5.75,-1.5){\tiny{$G_{i_2}$}};
\draw [-stealth,thick]   (dgn1-gating-2.south) -- (dgn1-galu-2.west);

\node[] (dgn1-gating-1) at (-7.25,-1.5){\tiny{$G_{i_1}$}};
\draw [-stealth,thick]   (dgn1-gating-1.south) -- (dgn1-galu-1.west);

\draw [-,thick,color=red]   (dgn1-gating-1-up.south) --(dgn1-gating-3.north);

\draw [-,thick,color=blue]   (dgn1-gating-2-up.south) --(dgn1-gating-4.north);

\draw [-,thick,color=green]   (dgn1-gating-3-up.south) --(dgn1-gating-2.north);

\draw [-,thick,color=brown]   (dgn1-gating-4-up.south) --(dgn1-gating-1.north);

	
\end{tikzpicture}
}
\end{minipage}
\begin{minipage}{0.48\columnwidth}
\centering
\resizebox{0.99\columnwidth}{!}{
\begin{tikzpicture}


\node [] (dgn1-f-c4) at (-3.5,1.5){\tiny{$C^{\text{f}}_4$}};
\node [] (dgn1-f-c3) at (-5,1.5){\tiny{$C^{\text{f}}_3$}};
\node [] (dgn1-f-c2) at (-6.5,1.5){\tiny{$C^{\text{f}}_2$}};
\node [] (dgn1-f-c1) at (-8,1.5){\tiny{$C^{\text{f}}_1$}};
\node [] (dgn1-input-f) at (-9,1.5){$x^{\text{f}}$};
\draw [-stealth,thick]   (dgn1-f-c3.east) -- (dgn1-f-c4.west);
\draw [-stealth,thick]   (dgn1-f-c2.east) -- (dgn1-f-c3.west);
\draw [-stealth,thick]   (dgn1-f-c1.east) -- (dgn1-f-c2.west);
\draw [-stealth,thick]   (dgn1-input-f.east) -- (dgn1-f-c1.west);

\node []  (dgn1-output) at (-0.25,-2.5) {$\hat{y}(x)$};

\node [] (dgn1-smax) at (-1.25,-2.5){\tiny{FC}};
\draw [-stealth,thick]   (dgn1-smax.east)--(dgn1-output.west);

\node [rotate=-90] (dgn1-gap) at (-2,-2.5){\tiny{GAP}};
\draw [-stealth,thick]   (dgn1-gap.north)--(dgn1-smax.west);

\node [rotate=-90] (dgn1-galu-4) at (-2.75,-2.5){\tiny{GaLU}};
\draw [-stealth,thick]   (dgn1-galu-4.north)--(dgn1-gap.south);

\node [] (dgn1-v-c4) at (-3.5,-2.5){\tiny{$C^{\text{v}}_4$}};
\draw [-stealth,thick]   (dgn1-v-c4.east) -- (dgn1-galu-4.south);

\node [rotate=-90] (dgn1-galu-3) at (-4.25,-2.5){\tiny{GaLU}};
\draw [-stealth,thick]   (dgn1-galu-3.north) -- (dgn1-v-c4.west);

\node [] (dgn1-v-c3) at (-5,-2.5){\tiny{$C^{\text{v}}_3$}};
\draw [-stealth,thick]   (dgn1-v-c3.east) -- (dgn1-galu-3.south);

\node [rotate=-90] (dgn1-galu-2) at (-5.75,-2.5){\tiny{GaLU}};
\draw [-stealth,thick]   (dgn1-galu-2.north) -- (dgn1-v-c3.west);

\node [] (dgn1-v-c2) at (-6.5,-2.5){\tiny{$C^{\text{v}}_2$}};
\draw [-stealth,thick]   (dgn1-v-c2.east) -- (dgn1-galu-2.south);

\node [rotate=-90] (dgn1-galu-1) at (-7.25,-2.5){\tiny{GaLU}};
\draw [-stealth,thick]   (dgn1-galu-1.north) -- (dgn1-v-c2.west);

\node [] (dgn1-v-c1) at (-8,-2.5){\tiny{$C^{\text{v}}_1$}};
\draw [-stealth,thick]   (dgn1-v-c1.east) -- (dgn1-galu-1.south);

\node [] (dgn1-v-input) at (-9,-2.5){$x^{\text{v}}$};

\draw [-stealth,thick]   (dgn1-v-input.east) -- (dgn1-v-c1.west);

\node[] (dgn1-gating-4-up) at (-2.75,0.5){\tiny{$G_{4}$}};
\draw [-stealth,thick]   (dgn1-f-c4.east) to[out=-90,in=90] (dgn1-gating-4-up.north);

\node[] (dgn1-gating-3-up) at (-4.25,0.5){\tiny{$G_{3}$}};
\draw [-stealth,thick]   (dgn1-f-c3.east) to[out=-90,in=90] (dgn1-gating-3-up.north);

\node[] (dgn1-gating-2-up) at (-5.75,0.5){\tiny{$G_{2}$}};
\draw [-stealth,thick]   (dgn1-f-c2.east) to[out=-90,in=90] (dgn1-gating-2-up.north);

\node[] (dgn1-gating-1-up) at (-7.25,0.5){\tiny{$G_{1}$}};
\draw [-stealth,thick]   (dgn1-f-c1.east) to[out=-90,in=90] (dgn1-gating-1-up.north);

\node[] (dgn1-gating-4) at (-2.75,-1.5){\tiny{$G_{i_4}$}};
\draw [-stealth,thick]   (dgn1-gating-4.south) -- (dgn1-galu-4.west);

\node[] (dgn1-gating-3) at (-4.25,-1.5){\tiny{$G_{i_3}$}};
\draw [-stealth,thick]   (dgn1-gating-3.south) -- (dgn1-galu-3.west);

\node[] (dgn1-gating-2) at (-5.75,-1.5){\tiny{$G_{i_2}$}};
\draw [-stealth,thick]   (dgn1-gating-2.south) -- (dgn1-galu-2.west);

\node[] (dgn1-gating-1) at (-7.25,-1.5){\tiny{$G_{i_1}$}};
\draw [-stealth,thick]   (dgn1-gating-1.south) -- (dgn1-galu-1.west);

\draw [-,thick,color=red]   (dgn1-gating-1-up.south) --(dgn1-gating-2.north);

\draw [-,thick,color=blue]   (dgn1-gating-2-up.south) --(dgn1-gating-4.north);

\draw [-,thick,color=green]   (dgn1-gating-3-up.south) --(dgn1-gating-3.north);

\draw [-,thick,color=brown]   (dgn1-gating-4-up.south) --(dgn1-gating-1.north);

	
\end{tikzpicture}
}
\end{minipage}

\begin{minipage}{0.48\columnwidth}
\centering
\resizebox{0.99\columnwidth}{!}{
\begin{tikzpicture}


\node [] (dgn1-f-c4) at (-3.5,1.5){\tiny{$C^{\text{f}}_4$}};
\node [] (dgn1-f-c3) at (-5,1.5){\tiny{$C^{\text{f}}_3$}};
\node [] (dgn1-f-c2) at (-6.5,1.5){\tiny{$C^{\text{f}}_2$}};
\node [] (dgn1-f-c1) at (-8,1.5){\tiny{$C^{\text{f}}_1$}};
\node [] (dgn1-input-f) at (-9,1.5){$x^{\text{f}}$};
\draw [-stealth,thick]   (dgn1-f-c3.east) -- (dgn1-f-c4.west);
\draw [-stealth,thick]   (dgn1-f-c2.east) -- (dgn1-f-c3.west);
\draw [-stealth,thick]   (dgn1-f-c1.east) -- (dgn1-f-c2.west);
\draw [-stealth,thick]   (dgn1-input-f.east) -- (dgn1-f-c1.west);

\node []  (dgn1-output) at (-0.25,-2.5) {$\hat{y}(x)$};

\node [] (dgn1-smax) at (-1.25,-2.5){\tiny{FC}};
\draw [-stealth,thick]   (dgn1-smax.east)--(dgn1-output.west);

\node [rotate=-90] (dgn1-gap) at (-2,-2.5){\tiny{GAP}};
\draw [-stealth,thick]   (dgn1-gap.north)--(dgn1-smax.west);

\node [rotate=-90] (dgn1-galu-4) at (-2.75,-2.5){\tiny{GaLU}};
\draw [-stealth,thick]   (dgn1-galu-4.north)--(dgn1-gap.south);

\node [] (dgn1-v-c4) at (-3.5,-2.5){\tiny{$C^{\text{v}}_4$}};
\draw [-stealth,thick]   (dgn1-v-c4.east) -- (dgn1-galu-4.south);

\node [rotate=-90] (dgn1-galu-3) at (-4.25,-2.5){\tiny{GaLU}};
\draw [-stealth,thick]   (dgn1-galu-3.north) -- (dgn1-v-c4.west);

\node [] (dgn1-v-c3) at (-5,-2.5){\tiny{$C^{\text{v}}_3$}};
\draw [-stealth,thick]   (dgn1-v-c3.east) -- (dgn1-galu-3.south);

\node [rotate=-90] (dgn1-galu-2) at (-5.75,-2.5){\tiny{GaLU}};
\draw [-stealth,thick]   (dgn1-galu-2.north) -- (dgn1-v-c3.west);

\node [] (dgn1-v-c2) at (-6.5,-2.5){\tiny{$C^{\text{v}}_2$}};
\draw [-stealth,thick]   (dgn1-v-c2.east) -- (dgn1-galu-2.south);

\node [rotate=-90] (dgn1-galu-1) at (-7.25,-2.5){\tiny{GaLU}};
\draw [-stealth,thick]   (dgn1-galu-1.north) -- (dgn1-v-c2.west);

\node [] (dgn1-v-c1) at (-8,-2.5){\tiny{$C^{\text{v}}_1$}};
\draw [-stealth,thick]   (dgn1-v-c1.east) -- (dgn1-galu-1.south);

\node [] (dgn1-v-input) at (-9,-2.5){$x^{\text{v}}$};

\draw [-stealth,thick]   (dgn1-v-input.east) -- (dgn1-v-c1.west);

\node[] (dgn1-gating-4-up) at (-2.75,0.5){\tiny{$G_{4}$}};
\draw [-stealth,thick]   (dgn1-f-c4.east) to[out=-90,in=90] (dgn1-gating-4-up.north);

\node[] (dgn1-gating-3-up) at (-4.25,0.5){\tiny{$G_{3}$}};
\draw [-stealth,thick]   (dgn1-f-c3.east) to[out=-90,in=90] (dgn1-gating-3-up.north);

\node[] (dgn1-gating-2-up) at (-5.75,0.5){\tiny{$G_{2}$}};
\draw [-stealth,thick]   (dgn1-f-c2.east) to[out=-90,in=90] (dgn1-gating-2-up.north);

\node[] (dgn1-gating-1-up) at (-7.25,0.5){\tiny{$G_{1}$}};
\draw [-stealth,thick]   (dgn1-f-c1.east) to[out=-90,in=90] (dgn1-gating-1-up.north);

\node[] (dgn1-gating-4) at (-2.75,-1.5){\tiny{$G_{i_4}$}};
\draw [-stealth,thick]   (dgn1-gating-4.south) -- (dgn1-galu-4.west);

\node[] (dgn1-gating-3) at (-4.25,-1.5){\tiny{$G_{i_3}$}};
\draw [-stealth,thick]   (dgn1-gating-3.south) -- (dgn1-galu-3.west);

\node[] (dgn1-gating-2) at (-5.75,-1.5){\tiny{$G_{i_2}$}};
\draw [-stealth,thick]   (dgn1-gating-2.south) -- (dgn1-galu-2.west);

\node[] (dgn1-gating-1) at (-7.25,-1.5){\tiny{$G_{i_1}$}};
\draw [-stealth,thick]   (dgn1-gating-1.south) -- (dgn1-galu-1.west);

\draw [-,thick,color=red]   (dgn1-gating-1-up.south) --(dgn1-gating-4.north);

\draw [-,thick,color=blue]   (dgn1-gating-2-up.south) --(dgn1-gating-2.north);

\draw [-,thick,color=green]   (dgn1-gating-3-up.south) --(dgn1-gating-3.north);

\draw [-,thick,color=brown]   (dgn1-gating-4-up.south) --(dgn1-gating-1.north);

	
\end{tikzpicture}
}
\end{minipage}
\begin{minipage}{0.48\columnwidth}
\centering
\resizebox{0.99\columnwidth}{!}{
\begin{tikzpicture}


\node [] (dgn1-f-c4) at (-3.5,1.5){\tiny{$C^{\text{f}}_4$}};
\node [] (dgn1-f-c3) at (-5,1.5){\tiny{$C^{\text{f}}_3$}};
\node [] (dgn1-f-c2) at (-6.5,1.5){\tiny{$C^{\text{f}}_2$}};
\node [] (dgn1-f-c1) at (-8,1.5){\tiny{$C^{\text{f}}_1$}};
\node [] (dgn1-input-f) at (-9,1.5){$x^{\text{f}}$};
\draw [-stealth,thick]   (dgn1-f-c3.east) -- (dgn1-f-c4.west);
\draw [-stealth,thick]   (dgn1-f-c2.east) -- (dgn1-f-c3.west);
\draw [-stealth,thick]   (dgn1-f-c1.east) -- (dgn1-f-c2.west);
\draw [-stealth,thick]   (dgn1-input-f.east) -- (dgn1-f-c1.west);

\node []  (dgn1-output) at (-0.25,-2.5) {$\hat{y}(x)$};

\node [] (dgn1-smax) at (-1.25,-2.5){\tiny{FC}};
\draw [-stealth,thick]   (dgn1-smax.east)--(dgn1-output.west);

\node [rotate=-90] (dgn1-gap) at (-2,-2.5){\tiny{GAP}};
\draw [-stealth,thick]   (dgn1-gap.north)--(dgn1-smax.west);

\node [rotate=-90] (dgn1-galu-4) at (-2.75,-2.5){\tiny{GaLU}};
\draw [-stealth,thick]   (dgn1-galu-4.north)--(dgn1-gap.south);

\node [] (dgn1-v-c4) at (-3.5,-2.5){\tiny{$C^{\text{v}}_4$}};
\draw [-stealth,thick]   (dgn1-v-c4.east) -- (dgn1-galu-4.south);

\node [rotate=-90] (dgn1-galu-3) at (-4.25,-2.5){\tiny{GaLU}};
\draw [-stealth,thick]   (dgn1-galu-3.north) -- (dgn1-v-c4.west);

\node [] (dgn1-v-c3) at (-5,-2.5){\tiny{$C^{\text{v}}_3$}};
\draw [-stealth,thick]   (dgn1-v-c3.east) -- (dgn1-galu-3.south);

\node [rotate=-90] (dgn1-galu-2) at (-5.75,-2.5){\tiny{GaLU}};
\draw [-stealth,thick]   (dgn1-galu-2.north) -- (dgn1-v-c3.west);

\node [] (dgn1-v-c2) at (-6.5,-2.5){\tiny{$C^{\text{v}}_2$}};
\draw [-stealth,thick]   (dgn1-v-c2.east) -- (dgn1-galu-2.south);

\node [rotate=-90] (dgn1-galu-1) at (-7.25,-2.5){\tiny{GaLU}};
\draw [-stealth,thick]   (dgn1-galu-1.north) -- (dgn1-v-c2.west);

\node [] (dgn1-v-c1) at (-8,-2.5){\tiny{$C^{\text{v}}_1$}};
\draw [-stealth,thick]   (dgn1-v-c1.east) -- (dgn1-galu-1.south);

\node [] (dgn1-v-input) at (-9,-2.5){$x^{\text{v}}$};

\draw [-stealth,thick]   (dgn1-v-input.east) -- (dgn1-v-c1.west);

\node[] (dgn1-gating-4-up) at (-2.75,0.5){\tiny{$G_{4}$}};
\draw [-stealth,thick]   (dgn1-f-c4.east) to[out=-90,in=90] (dgn1-gating-4-up.north);

\node[] (dgn1-gating-3-up) at (-4.25,0.5){\tiny{$G_{3}$}};
\draw [-stealth,thick]   (dgn1-f-c3.east) to[out=-90,in=90] (dgn1-gating-3-up.north);

\node[] (dgn1-gating-2-up) at (-5.75,0.5){\tiny{$G_{2}$}};
\draw [-stealth,thick]   (dgn1-f-c2.east) to[out=-90,in=90] (dgn1-gating-2-up.north);

\node[] (dgn1-gating-1-up) at (-7.25,0.5){\tiny{$G_{1}$}};
\draw [-stealth,thick]   (dgn1-f-c1.east) to[out=-90,in=90] (dgn1-gating-1-up.north);

\node[] (dgn1-gating-4) at (-2.75,-1.5){\tiny{$G_{i_4}$}};
\draw [-stealth,thick]   (dgn1-gating-4.south) -- (dgn1-galu-4.west);

\node[] (dgn1-gating-3) at (-4.25,-1.5){\tiny{$G_{i_3}$}};
\draw [-stealth,thick]   (dgn1-gating-3.south) -- (dgn1-galu-3.west);

\node[] (dgn1-gating-2) at (-5.75,-1.5){\tiny{$G_{i_2}$}};
\draw [-stealth,thick]   (dgn1-gating-2.south) -- (dgn1-galu-2.west);

\node[] (dgn1-gating-1) at (-7.25,-1.5){\tiny{$G_{i_1}$}};
\draw [-stealth,thick]   (dgn1-gating-1.south) -- (dgn1-galu-1.west);

\draw [-,thick,color=red]   (dgn1-gating-1-up.south) --(dgn1-gating-4.north);

\draw [-,thick,color=blue]   (dgn1-gating-2-up.south) --(dgn1-gating-1.north);

\draw [-,thick,color=green]   (dgn1-gating-3-up.south) --(dgn1-gating-3.north);

\draw [-,thick,color=brown]   (dgn1-gating-4-up.south) --(dgn1-gating-2.north);

	
\end{tikzpicture}
}
\end{minipage}

\caption{Shows the permutations $1-12$ C4GAP-DLGN in Table I of \Cref{fig:c4gap}. The top left is the identity permutation and is the vanilla model.}
\label{fig:perm1}
\end{figure}

\begin{figure}
\centering
\begin{minipage}{0.48\columnwidth}
\centering
\resizebox{0.99\columnwidth}{!}{
\begin{tikzpicture}


\node [] (dgn1-f-c4) at (-3.5,1.5){\tiny{$C^{\text{f}}_4$}};
\node [] (dgn1-f-c3) at (-5,1.5){\tiny{$C^{\text{f}}_3$}};
\node [] (dgn1-f-c2) at (-6.5,1.5){\tiny{$C^{\text{f}}_2$}};
\node [] (dgn1-f-c1) at (-8,1.5){\tiny{$C^{\text{f}}_1$}};
\node [] (dgn1-input-f) at (-9,1.5){$x^{\text{f}}$};
\draw [-stealth,thick]   (dgn1-f-c3.east) -- (dgn1-f-c4.west);
\draw [-stealth,thick]   (dgn1-f-c2.east) -- (dgn1-f-c3.west);
\draw [-stealth,thick]   (dgn1-f-c1.east) -- (dgn1-f-c2.west);
\draw [-stealth,thick]   (dgn1-input-f.east) -- (dgn1-f-c1.west);

\node []  (dgn1-output) at (-0.25,-2.5) {$\hat{y}(x)$};

\node [] (dgn1-smax) at (-1.25,-2.5){\tiny{FC}};
\draw [-stealth,thick]   (dgn1-smax.east)--(dgn1-output.west);

\node [rotate=-90] (dgn1-gap) at (-2,-2.5){\tiny{GAP}};
\draw [-stealth,thick]   (dgn1-gap.north)--(dgn1-smax.west);

\node [rotate=-90] (dgn1-galu-4) at (-2.75,-2.5){\tiny{GaLU}};
\draw [-stealth,thick]   (dgn1-galu-4.north)--(dgn1-gap.south);

\node [] (dgn1-v-c4) at (-3.5,-2.5){\tiny{$C^{\text{v}}_4$}};
\draw [-stealth,thick]   (dgn1-v-c4.east) -- (dgn1-galu-4.south);

\node [rotate=-90] (dgn1-galu-3) at (-4.25,-2.5){\tiny{GaLU}};
\draw [-stealth,thick]   (dgn1-galu-3.north) -- (dgn1-v-c4.west);

\node [] (dgn1-v-c3) at (-5,-2.5){\tiny{$C^{\text{v}}_3$}};
\draw [-stealth,thick]   (dgn1-v-c3.east) -- (dgn1-galu-3.south);

\node [rotate=-90] (dgn1-galu-2) at (-5.75,-2.5){\tiny{GaLU}};
\draw [-stealth,thick]   (dgn1-galu-2.north) -- (dgn1-v-c3.west);

\node [] (dgn1-v-c2) at (-6.5,-2.5){\tiny{$C^{\text{v}}_2$}};
\draw [-stealth,thick]   (dgn1-v-c2.east) -- (dgn1-galu-2.south);

\node [rotate=-90] (dgn1-galu-1) at (-7.25,-2.5){\tiny{GaLU}};
\draw [-stealth,thick]   (dgn1-galu-1.north) -- (dgn1-v-c2.west);

\node [] (dgn1-v-c1) at (-8,-2.5){\tiny{$C^{\text{v}}_1$}};
\draw [-stealth,thick]   (dgn1-v-c1.east) -- (dgn1-galu-1.south);

\node [] (dgn1-v-input) at (-9,-2.5){$x^{\text{v}}$};

\draw [-stealth,thick]   (dgn1-v-input.east) -- (dgn1-v-c1.west);

\node[] (dgn1-gating-4-up) at (-2.75,0.5){\tiny{$G_{4}$}};
\draw [-stealth,thick]   (dgn1-f-c4.east) to[out=-90,in=90] (dgn1-gating-4-up.north);

\node[] (dgn1-gating-3-up) at (-4.25,0.5){\tiny{$G_{3}$}};
\draw [-stealth,thick]   (dgn1-f-c3.east) to[out=-90,in=90] (dgn1-gating-3-up.north);

\node[] (dgn1-gating-2-up) at (-5.75,0.5){\tiny{$G_{2}$}};
\draw [-stealth,thick]   (dgn1-f-c2.east) to[out=-90,in=90] (dgn1-gating-2-up.north);

\node[] (dgn1-gating-1-up) at (-7.25,0.5){\tiny{$G_{1}$}};
\draw [-stealth,thick]   (dgn1-f-c1.east) to[out=-90,in=90] (dgn1-gating-1-up.north);

\node[] (dgn1-gating-4) at (-2.75,-1.5){\tiny{$G_{i_4}$}};
\draw [-stealth,thick]   (dgn1-gating-4.south) -- (dgn1-galu-4.west);

\node[] (dgn1-gating-3) at (-4.25,-1.5){\tiny{$G_{i_3}$}};
\draw [-stealth,thick]   (dgn1-gating-3.south) -- (dgn1-galu-3.west);

\node[] (dgn1-gating-2) at (-5.75,-1.5){\tiny{$G_{i_2}$}};
\draw [-stealth,thick]   (dgn1-gating-2.south) -- (dgn1-galu-2.west);

\node[] (dgn1-gating-1) at (-7.25,-1.5){\tiny{$G_{i_1}$}};
\draw [-stealth,thick]   (dgn1-gating-1.south) -- (dgn1-galu-1.west);

\draw [-,thick,color=red]   (dgn1-gating-1-up.south) --(dgn1-gating-1.north);

\draw [-,thick,color=blue]   (dgn1-gating-2-up.south) --(dgn1-gating-4.north);

\draw [-,thick,color=green]   (dgn1-gating-3-up.south) --(dgn1-gating-3.north);

\draw [-,thick,color=brown]   (dgn1-gating-4-up.south) --(dgn1-gating-2.north);

	
\end{tikzpicture}
}
\end{minipage}
\begin{minipage}{0.48\columnwidth}
\centering
\resizebox{0.99\columnwidth}{!}{
\begin{tikzpicture}


\node [] (dgn1-f-c4) at (-3.5,1.5){\tiny{$C^{\text{f}}_4$}};
\node [] (dgn1-f-c3) at (-5,1.5){\tiny{$C^{\text{f}}_3$}};
\node [] (dgn1-f-c2) at (-6.5,1.5){\tiny{$C^{\text{f}}_2$}};
\node [] (dgn1-f-c1) at (-8,1.5){\tiny{$C^{\text{f}}_1$}};
\node [] (dgn1-input-f) at (-9,1.5){$x^{\text{f}}$};
\draw [-stealth,thick]   (dgn1-f-c3.east) -- (dgn1-f-c4.west);
\draw [-stealth,thick]   (dgn1-f-c2.east) -- (dgn1-f-c3.west);
\draw [-stealth,thick]   (dgn1-f-c1.east) -- (dgn1-f-c2.west);
\draw [-stealth,thick]   (dgn1-input-f.east) -- (dgn1-f-c1.west);

\node []  (dgn1-output) at (-0.25,-2.5) {$\hat{y}(x)$};

\node [] (dgn1-smax) at (-1.25,-2.5){\tiny{FC}};
\draw [-stealth,thick]   (dgn1-smax.east)--(dgn1-output.west);

\node [rotate=-90] (dgn1-gap) at (-2,-2.5){\tiny{GAP}};
\draw [-stealth,thick]   (dgn1-gap.north)--(dgn1-smax.west);

\node [rotate=-90] (dgn1-galu-4) at (-2.75,-2.5){\tiny{GaLU}};
\draw [-stealth,thick]   (dgn1-galu-4.north)--(dgn1-gap.south);

\node [] (dgn1-v-c4) at (-3.5,-2.5){\tiny{$C^{\text{v}}_4$}};
\draw [-stealth,thick]   (dgn1-v-c4.east) -- (dgn1-galu-4.south);

\node [rotate=-90] (dgn1-galu-3) at (-4.25,-2.5){\tiny{GaLU}};
\draw [-stealth,thick]   (dgn1-galu-3.north) -- (dgn1-v-c4.west);

\node [] (dgn1-v-c3) at (-5,-2.5){\tiny{$C^{\text{v}}_3$}};
\draw [-stealth,thick]   (dgn1-v-c3.east) -- (dgn1-galu-3.south);

\node [rotate=-90] (dgn1-galu-2) at (-5.75,-2.5){\tiny{GaLU}};
\draw [-stealth,thick]   (dgn1-galu-2.north) -- (dgn1-v-c3.west);

\node [] (dgn1-v-c2) at (-6.5,-2.5){\tiny{$C^{\text{v}}_2$}};
\draw [-stealth,thick]   (dgn1-v-c2.east) -- (dgn1-galu-2.south);

\node [rotate=-90] (dgn1-galu-1) at (-7.25,-2.5){\tiny{GaLU}};
\draw [-stealth,thick]   (dgn1-galu-1.north) -- (dgn1-v-c2.west);

\node [] (dgn1-v-c1) at (-8,-2.5){\tiny{$C^{\text{v}}_1$}};
\draw [-stealth,thick]   (dgn1-v-c1.east) -- (dgn1-galu-1.south);

\node [] (dgn1-v-input) at (-9,-2.5){$x^{\text{v}}$};

\draw [-stealth,thick]   (dgn1-v-input.east) -- (dgn1-v-c1.west);

\node[] (dgn1-gating-4-up) at (-2.75,0.5){\tiny{$G_{4}$}};
\draw [-stealth,thick]   (dgn1-f-c4.east) to[out=-90,in=90] (dgn1-gating-4-up.north);

\node[] (dgn1-gating-3-up) at (-4.25,0.5){\tiny{$G_{3}$}};
\draw [-stealth,thick]   (dgn1-f-c3.east) to[out=-90,in=90] (dgn1-gating-3-up.north);

\node[] (dgn1-gating-2-up) at (-5.75,0.5){\tiny{$G_{2}$}};
\draw [-stealth,thick]   (dgn1-f-c2.east) to[out=-90,in=90] (dgn1-gating-2-up.north);

\node[] (dgn1-gating-1-up) at (-7.25,0.5){\tiny{$G_{1}$}};
\draw [-stealth,thick]   (dgn1-f-c1.east) to[out=-90,in=90] (dgn1-gating-1-up.north);

\node[] (dgn1-gating-4) at (-2.75,-1.5){\tiny{$G_{i_4}$}};
\draw [-stealth,thick]   (dgn1-gating-4.south) -- (dgn1-galu-4.west);

\node[] (dgn1-gating-3) at (-4.25,-1.5){\tiny{$G_{i_3}$}};
\draw [-stealth,thick]   (dgn1-gating-3.south) -- (dgn1-galu-3.west);

\node[] (dgn1-gating-2) at (-5.75,-1.5){\tiny{$G_{i_2}$}};
\draw [-stealth,thick]   (dgn1-gating-2.south) -- (dgn1-galu-2.west);

\node[] (dgn1-gating-1) at (-7.25,-1.5){\tiny{$G_{i_1}$}};
\draw [-stealth,thick]   (dgn1-gating-1.south) -- (dgn1-galu-1.west);

\draw [-,thick,color=red]   (dgn1-gating-1-up.south) --(dgn1-gating-3.north);

\draw [-,thick,color=blue]   (dgn1-gating-2-up.south) --(dgn1-gating-4.north);

\draw [-,thick,color=green]   (dgn1-gating-3-up.south) --(dgn1-gating-1.north);

\draw [-,thick,color=brown]   (dgn1-gating-4-up.south) --(dgn1-gating-2.north);

	
\end{tikzpicture}
}
\end{minipage}

\begin{minipage}{0.48\columnwidth}
\centering
\resizebox{0.99\columnwidth}{!}{
\begin{tikzpicture}


\node [] (dgn1-f-c4) at (-3.5,1.5){\tiny{$C^{\text{f}}_4$}};
\node [] (dgn1-f-c3) at (-5,1.5){\tiny{$C^{\text{f}}_3$}};
\node [] (dgn1-f-c2) at (-6.5,1.5){\tiny{$C^{\text{f}}_2$}};
\node [] (dgn1-f-c1) at (-8,1.5){\tiny{$C^{\text{f}}_1$}};
\node [] (dgn1-input-f) at (-9,1.5){$x^{\text{f}}$};
\draw [-stealth,thick]   (dgn1-f-c3.east) -- (dgn1-f-c4.west);
\draw [-stealth,thick]   (dgn1-f-c2.east) -- (dgn1-f-c3.west);
\draw [-stealth,thick]   (dgn1-f-c1.east) -- (dgn1-f-c2.west);
\draw [-stealth,thick]   (dgn1-input-f.east) -- (dgn1-f-c1.west);

\node []  (dgn1-output) at (-0.25,-2.5) {$\hat{y}(x)$};

\node [] (dgn1-smax) at (-1.25,-2.5){\tiny{FC}};
\draw [-stealth,thick]   (dgn1-smax.east)--(dgn1-output.west);

\node [rotate=-90] (dgn1-gap) at (-2,-2.5){\tiny{GAP}};
\draw [-stealth,thick]   (dgn1-gap.north)--(dgn1-smax.west);

\node [rotate=-90] (dgn1-galu-4) at (-2.75,-2.5){\tiny{GaLU}};
\draw [-stealth,thick]   (dgn1-galu-4.north)--(dgn1-gap.south);

\node [] (dgn1-v-c4) at (-3.5,-2.5){\tiny{$C^{\text{v}}_4$}};
\draw [-stealth,thick]   (dgn1-v-c4.east) -- (dgn1-galu-4.south);

\node [rotate=-90] (dgn1-galu-3) at (-4.25,-2.5){\tiny{GaLU}};
\draw [-stealth,thick]   (dgn1-galu-3.north) -- (dgn1-v-c4.west);

\node [] (dgn1-v-c3) at (-5,-2.5){\tiny{$C^{\text{v}}_3$}};
\draw [-stealth,thick]   (dgn1-v-c3.east) -- (dgn1-galu-3.south);

\node [rotate=-90] (dgn1-galu-2) at (-5.75,-2.5){\tiny{GaLU}};
\draw [-stealth,thick]   (dgn1-galu-2.north) -- (dgn1-v-c3.west);

\node [] (dgn1-v-c2) at (-6.5,-2.5){\tiny{$C^{\text{v}}_2$}};
\draw [-stealth,thick]   (dgn1-v-c2.east) -- (dgn1-galu-2.south);

\node [rotate=-90] (dgn1-galu-1) at (-7.25,-2.5){\tiny{GaLU}};
\draw [-stealth,thick]   (dgn1-galu-1.north) -- (dgn1-v-c2.west);

\node [] (dgn1-v-c1) at (-8,-2.5){\tiny{$C^{\text{v}}_1$}};
\draw [-stealth,thick]   (dgn1-v-c1.east) -- (dgn1-galu-1.south);

\node [] (dgn1-v-input) at (-9,-2.5){$x^{\text{v}}$};

\draw [-stealth,thick]   (dgn1-v-input.east) -- (dgn1-v-c1.west);

\node[] (dgn1-gating-4-up) at (-2.75,0.5){\tiny{$G_{4}$}};
\draw [-stealth,thick]   (dgn1-f-c4.east) to[out=-90,in=90] (dgn1-gating-4-up.north);

\node[] (dgn1-gating-3-up) at (-4.25,0.5){\tiny{$G_{3}$}};
\draw [-stealth,thick]   (dgn1-f-c3.east) to[out=-90,in=90] (dgn1-gating-3-up.north);

\node[] (dgn1-gating-2-up) at (-5.75,0.5){\tiny{$G_{2}$}};
\draw [-stealth,thick]   (dgn1-f-c2.east) to[out=-90,in=90] (dgn1-gating-2-up.north);

\node[] (dgn1-gating-1-up) at (-7.25,0.5){\tiny{$G_{1}$}};
\draw [-stealth,thick]   (dgn1-f-c1.east) to[out=-90,in=90] (dgn1-gating-1-up.north);

\node[] (dgn1-gating-4) at (-2.75,-1.5){\tiny{$G_{i_4}$}};
\draw [-stealth,thick]   (dgn1-gating-4.south) -- (dgn1-galu-4.west);

\node[] (dgn1-gating-3) at (-4.25,-1.5){\tiny{$G_{i_3}$}};
\draw [-stealth,thick]   (dgn1-gating-3.south) -- (dgn1-galu-3.west);

\node[] (dgn1-gating-2) at (-5.75,-1.5){\tiny{$G_{i_2}$}};
\draw [-stealth,thick]   (dgn1-gating-2.south) -- (dgn1-galu-2.west);

\node[] (dgn1-gating-1) at (-7.25,-1.5){\tiny{$G_{i_1}$}};
\draw [-stealth,thick]   (dgn1-gating-1.south) -- (dgn1-galu-1.west);

\draw [-,thick,color=red]   (dgn1-gating-1-up.south) --(dgn1-gating-4.north);

\draw [-,thick,color=blue]   (dgn1-gating-2-up.south) --(dgn1-gating-3.north);

\draw [-,thick,color=green]   (dgn1-gating-3-up.south) --(dgn1-gating-1.north);

\draw [-,thick,color=brown]   (dgn1-gating-4-up.south) --(dgn1-gating-2.north);

	
\end{tikzpicture}
}
\end{minipage}
\begin{minipage}{0.48\columnwidth}
\centering
\resizebox{0.99\columnwidth}{!}{
\begin{tikzpicture}


\node [] (dgn1-f-c4) at (-3.5,1.5){\tiny{$C^{\text{f}}_4$}};
\node [] (dgn1-f-c3) at (-5,1.5){\tiny{$C^{\text{f}}_3$}};
\node [] (dgn1-f-c2) at (-6.5,1.5){\tiny{$C^{\text{f}}_2$}};
\node [] (dgn1-f-c1) at (-8,1.5){\tiny{$C^{\text{f}}_1$}};
\node [] (dgn1-input-f) at (-9,1.5){$x^{\text{f}}$};
\draw [-stealth,thick]   (dgn1-f-c3.east) -- (dgn1-f-c4.west);
\draw [-stealth,thick]   (dgn1-f-c2.east) -- (dgn1-f-c3.west);
\draw [-stealth,thick]   (dgn1-f-c1.east) -- (dgn1-f-c2.west);
\draw [-stealth,thick]   (dgn1-input-f.east) -- (dgn1-f-c1.west);

\node []  (dgn1-output) at (-0.25,-2.5) {$\hat{y}(x)$};

\node [] (dgn1-smax) at (-1.25,-2.5){\tiny{FC}};
\draw [-stealth,thick]   (dgn1-smax.east)--(dgn1-output.west);

\node [rotate=-90] (dgn1-gap) at (-2,-2.5){\tiny{GAP}};
\draw [-stealth,thick]   (dgn1-gap.north)--(dgn1-smax.west);

\node [rotate=-90] (dgn1-galu-4) at (-2.75,-2.5){\tiny{GaLU}};
\draw [-stealth,thick]   (dgn1-galu-4.north)--(dgn1-gap.south);

\node [] (dgn1-v-c4) at (-3.5,-2.5){\tiny{$C^{\text{v}}_4$}};
\draw [-stealth,thick]   (dgn1-v-c4.east) -- (dgn1-galu-4.south);

\node [rotate=-90] (dgn1-galu-3) at (-4.25,-2.5){\tiny{GaLU}};
\draw [-stealth,thick]   (dgn1-galu-3.north) -- (dgn1-v-c4.west);

\node [] (dgn1-v-c3) at (-5,-2.5){\tiny{$C^{\text{v}}_3$}};
\draw [-stealth,thick]   (dgn1-v-c3.east) -- (dgn1-galu-3.south);

\node [rotate=-90] (dgn1-galu-2) at (-5.75,-2.5){\tiny{GaLU}};
\draw [-stealth,thick]   (dgn1-galu-2.north) -- (dgn1-v-c3.west);

\node [] (dgn1-v-c2) at (-6.5,-2.5){\tiny{$C^{\text{v}}_2$}};
\draw [-stealth,thick]   (dgn1-v-c2.east) -- (dgn1-galu-2.south);

\node [rotate=-90] (dgn1-galu-1) at (-7.25,-2.5){\tiny{GaLU}};
\draw [-stealth,thick]   (dgn1-galu-1.north) -- (dgn1-v-c2.west);

\node [] (dgn1-v-c1) at (-8,-2.5){\tiny{$C^{\text{v}}_1$}};
\draw [-stealth,thick]   (dgn1-v-c1.east) -- (dgn1-galu-1.south);

\node [] (dgn1-v-input) at (-9,-2.5){$x^{\text{v}}$};

\draw [-stealth,thick]   (dgn1-v-input.east) -- (dgn1-v-c1.west);

\node[] (dgn1-gating-4-up) at (-2.75,0.5){\tiny{$G_{4}$}};
\draw [-stealth,thick]   (dgn1-f-c4.east) to[out=-90,in=90] (dgn1-gating-4-up.north);

\node[] (dgn1-gating-3-up) at (-4.25,0.5){\tiny{$G_{3}$}};
\draw [-stealth,thick]   (dgn1-f-c3.east) to[out=-90,in=90] (dgn1-gating-3-up.north);

\node[] (dgn1-gating-2-up) at (-5.75,0.5){\tiny{$G_{2}$}};
\draw [-stealth,thick]   (dgn1-f-c2.east) to[out=-90,in=90] (dgn1-gating-2-up.north);

\node[] (dgn1-gating-1-up) at (-7.25,0.5){\tiny{$G_{1}$}};
\draw [-stealth,thick]   (dgn1-f-c1.east) to[out=-90,in=90] (dgn1-gating-1-up.north);

\node[] (dgn1-gating-4) at (-2.75,-1.5){\tiny{$G_{i_4}$}};
\draw [-stealth,thick]   (dgn1-gating-4.south) -- (dgn1-galu-4.west);

\node[] (dgn1-gating-3) at (-4.25,-1.5){\tiny{$G_{i_3}$}};
\draw [-stealth,thick]   (dgn1-gating-3.south) -- (dgn1-galu-3.west);

\node[] (dgn1-gating-2) at (-5.75,-1.5){\tiny{$G_{i_2}$}};
\draw [-stealth,thick]   (dgn1-gating-2.south) -- (dgn1-galu-2.west);

\node[] (dgn1-gating-1) at (-7.25,-1.5){\tiny{$G_{i_1}$}};
\draw [-stealth,thick]   (dgn1-gating-1.south) -- (dgn1-galu-1.west);

\draw [-,thick,color=red]   (dgn1-gating-1-up.south) --(dgn1-gating-1.north);

\draw [-,thick,color=blue]   (dgn1-gating-2-up.south) --(dgn1-gating-3.north);

\draw [-,thick,color=green]   (dgn1-gating-3-up.south) --(dgn1-gating-4.north);

\draw [-,thick,color=brown]   (dgn1-gating-4-up.south) --(dgn1-gating-2.north);

	
\end{tikzpicture}
}
\end{minipage}

\begin{minipage}{0.48\columnwidth}
\centering
\resizebox{0.99\columnwidth}{!}{
\begin{tikzpicture}


\node [] (dgn1-f-c4) at (-3.5,1.5){\tiny{$C^{\text{f}}_4$}};
\node [] (dgn1-f-c3) at (-5,1.5){\tiny{$C^{\text{f}}_3$}};
\node [] (dgn1-f-c2) at (-6.5,1.5){\tiny{$C^{\text{f}}_2$}};
\node [] (dgn1-f-c1) at (-8,1.5){\tiny{$C^{\text{f}}_1$}};
\node [] (dgn1-input-f) at (-9,1.5){$x^{\text{f}}$};
\draw [-stealth,thick]   (dgn1-f-c3.east) -- (dgn1-f-c4.west);
\draw [-stealth,thick]   (dgn1-f-c2.east) -- (dgn1-f-c3.west);
\draw [-stealth,thick]   (dgn1-f-c1.east) -- (dgn1-f-c2.west);
\draw [-stealth,thick]   (dgn1-input-f.east) -- (dgn1-f-c1.west);

\node []  (dgn1-output) at (-0.25,-2.5) {$\hat{y}(x)$};

\node [] (dgn1-smax) at (-1.25,-2.5){\tiny{FC}};
\draw [-stealth,thick]   (dgn1-smax.east)--(dgn1-output.west);

\node [rotate=-90] (dgn1-gap) at (-2,-2.5){\tiny{GAP}};
\draw [-stealth,thick]   (dgn1-gap.north)--(dgn1-smax.west);

\node [rotate=-90] (dgn1-galu-4) at (-2.75,-2.5){\tiny{GaLU}};
\draw [-stealth,thick]   (dgn1-galu-4.north)--(dgn1-gap.south);

\node [] (dgn1-v-c4) at (-3.5,-2.5){\tiny{$C^{\text{v}}_4$}};
\draw [-stealth,thick]   (dgn1-v-c4.east) -- (dgn1-galu-4.south);

\node [rotate=-90] (dgn1-galu-3) at (-4.25,-2.5){\tiny{GaLU}};
\draw [-stealth,thick]   (dgn1-galu-3.north) -- (dgn1-v-c4.west);

\node [] (dgn1-v-c3) at (-5,-2.5){\tiny{$C^{\text{v}}_3$}};
\draw [-stealth,thick]   (dgn1-v-c3.east) -- (dgn1-galu-3.south);

\node [rotate=-90] (dgn1-galu-2) at (-5.75,-2.5){\tiny{GaLU}};
\draw [-stealth,thick]   (dgn1-galu-2.north) -- (dgn1-v-c3.west);

\node [] (dgn1-v-c2) at (-6.5,-2.5){\tiny{$C^{\text{v}}_2$}};
\draw [-stealth,thick]   (dgn1-v-c2.east) -- (dgn1-galu-2.south);

\node [rotate=-90] (dgn1-galu-1) at (-7.25,-2.5){\tiny{GaLU}};
\draw [-stealth,thick]   (dgn1-galu-1.north) -- (dgn1-v-c2.west);

\node [] (dgn1-v-c1) at (-8,-2.5){\tiny{$C^{\text{v}}_1$}};
\draw [-stealth,thick]   (dgn1-v-c1.east) -- (dgn1-galu-1.south);

\node [] (dgn1-v-input) at (-9,-2.5){$x^{\text{v}}$};

\draw [-stealth,thick]   (dgn1-v-input.east) -- (dgn1-v-c1.west);

\node[] (dgn1-gating-4-up) at (-2.75,0.5){\tiny{$G_{4}$}};
\draw [-stealth,thick]   (dgn1-f-c4.east) to[out=-90,in=90] (dgn1-gating-4-up.north);

\node[] (dgn1-gating-3-up) at (-4.25,0.5){\tiny{$G_{3}$}};
\draw [-stealth,thick]   (dgn1-f-c3.east) to[out=-90,in=90] (dgn1-gating-3-up.north);

\node[] (dgn1-gating-2-up) at (-5.75,0.5){\tiny{$G_{2}$}};
\draw [-stealth,thick]   (dgn1-f-c2.east) to[out=-90,in=90] (dgn1-gating-2-up.north);

\node[] (dgn1-gating-1-up) at (-7.25,0.5){\tiny{$G_{1}$}};
\draw [-stealth,thick]   (dgn1-f-c1.east) to[out=-90,in=90] (dgn1-gating-1-up.north);

\node[] (dgn1-gating-4) at (-2.75,-1.5){\tiny{$G_{i_4}$}};
\draw [-stealth,thick]   (dgn1-gating-4.south) -- (dgn1-galu-4.west);

\node[] (dgn1-gating-3) at (-4.25,-1.5){\tiny{$G_{i_3}$}};
\draw [-stealth,thick]   (dgn1-gating-3.south) -- (dgn1-galu-3.west);

\node[] (dgn1-gating-2) at (-5.75,-1.5){\tiny{$G_{i_2}$}};
\draw [-stealth,thick]   (dgn1-gating-2.south) -- (dgn1-galu-2.west);

\node[] (dgn1-gating-1) at (-7.25,-1.5){\tiny{$G_{i_1}$}};
\draw [-stealth,thick]   (dgn1-gating-1.south) -- (dgn1-galu-1.west);

\draw [-,thick,color=red]   (dgn1-gating-1-up.south) --(dgn1-gating-3.north);

\draw [-,thick,color=blue]   (dgn1-gating-2-up.south) --(dgn1-gating-1.north);

\draw [-,thick,color=green]   (dgn1-gating-3-up.south) --(dgn1-gating-4.north);

\draw [-,thick,color=brown]   (dgn1-gating-4-up.south) --(dgn1-gating-2.north);

	
\end{tikzpicture}
}
\end{minipage}
\begin{minipage}{0.48\columnwidth}
\centering
\resizebox{0.99\columnwidth}{!}{
\begin{tikzpicture}


\node [] (dgn1-f-c4) at (-3.5,1.5){\tiny{$C^{\text{f}}_4$}};
\node [] (dgn1-f-c3) at (-5,1.5){\tiny{$C^{\text{f}}_3$}};
\node [] (dgn1-f-c2) at (-6.5,1.5){\tiny{$C^{\text{f}}_2$}};
\node [] (dgn1-f-c1) at (-8,1.5){\tiny{$C^{\text{f}}_1$}};
\node [] (dgn1-input-f) at (-9,1.5){$x^{\text{f}}$};
\draw [-stealth,thick]   (dgn1-f-c3.east) -- (dgn1-f-c4.west);
\draw [-stealth,thick]   (dgn1-f-c2.east) -- (dgn1-f-c3.west);
\draw [-stealth,thick]   (dgn1-f-c1.east) -- (dgn1-f-c2.west);
\draw [-stealth,thick]   (dgn1-input-f.east) -- (dgn1-f-c1.west);

\node []  (dgn1-output) at (-0.25,-2.5) {$\hat{y}(x)$};

\node [] (dgn1-smax) at (-1.25,-2.5){\tiny{FC}};
\draw [-stealth,thick]   (dgn1-smax.east)--(dgn1-output.west);

\node [rotate=-90] (dgn1-gap) at (-2,-2.5){\tiny{GAP}};
\draw [-stealth,thick]   (dgn1-gap.north)--(dgn1-smax.west);

\node [rotate=-90] (dgn1-galu-4) at (-2.75,-2.5){\tiny{GaLU}};
\draw [-stealth,thick]   (dgn1-galu-4.north)--(dgn1-gap.south);

\node [] (dgn1-v-c4) at (-3.5,-2.5){\tiny{$C^{\text{v}}_4$}};
\draw [-stealth,thick]   (dgn1-v-c4.east) -- (dgn1-galu-4.south);

\node [rotate=-90] (dgn1-galu-3) at (-4.25,-2.5){\tiny{GaLU}};
\draw [-stealth,thick]   (dgn1-galu-3.north) -- (dgn1-v-c4.west);

\node [] (dgn1-v-c3) at (-5,-2.5){\tiny{$C^{\text{v}}_3$}};
\draw [-stealth,thick]   (dgn1-v-c3.east) -- (dgn1-galu-3.south);

\node [rotate=-90] (dgn1-galu-2) at (-5.75,-2.5){\tiny{GaLU}};
\draw [-stealth,thick]   (dgn1-galu-2.north) -- (dgn1-v-c3.west);

\node [] (dgn1-v-c2) at (-6.5,-2.5){\tiny{$C^{\text{v}}_2$}};
\draw [-stealth,thick]   (dgn1-v-c2.east) -- (dgn1-galu-2.south);

\node [rotate=-90] (dgn1-galu-1) at (-7.25,-2.5){\tiny{GaLU}};
\draw [-stealth,thick]   (dgn1-galu-1.north) -- (dgn1-v-c2.west);

\node [] (dgn1-v-c1) at (-8,-2.5){\tiny{$C^{\text{v}}_1$}};
\draw [-stealth,thick]   (dgn1-v-c1.east) -- (dgn1-galu-1.south);

\node [] (dgn1-v-input) at (-9,-2.5){$x^{\text{v}}$};

\draw [-stealth,thick]   (dgn1-v-input.east) -- (dgn1-v-c1.west);

\node[] (dgn1-gating-4-up) at (-2.75,0.5){\tiny{$G_{4}$}};
\draw [-stealth,thick]   (dgn1-f-c4.east) to[out=-90,in=90] (dgn1-gating-4-up.north);

\node[] (dgn1-gating-3-up) at (-4.25,0.5){\tiny{$G_{3}$}};
\draw [-stealth,thick]   (dgn1-f-c3.east) to[out=-90,in=90] (dgn1-gating-3-up.north);

\node[] (dgn1-gating-2-up) at (-5.75,0.5){\tiny{$G_{2}$}};
\draw [-stealth,thick]   (dgn1-f-c2.east) to[out=-90,in=90] (dgn1-gating-2-up.north);

\node[] (dgn1-gating-1-up) at (-7.25,0.5){\tiny{$G_{1}$}};
\draw [-stealth,thick]   (dgn1-f-c1.east) to[out=-90,in=90] (dgn1-gating-1-up.north);

\node[] (dgn1-gating-4) at (-2.75,-1.5){\tiny{$G_{i_4}$}};
\draw [-stealth,thick]   (dgn1-gating-4.south) -- (dgn1-galu-4.west);

\node[] (dgn1-gating-3) at (-4.25,-1.5){\tiny{$G_{i_3}$}};
\draw [-stealth,thick]   (dgn1-gating-3.south) -- (dgn1-galu-3.west);

\node[] (dgn1-gating-2) at (-5.75,-1.5){\tiny{$G_{i_2}$}};
\draw [-stealth,thick]   (dgn1-gating-2.south) -- (dgn1-galu-2.west);

\node[] (dgn1-gating-1) at (-7.25,-1.5){\tiny{$G_{i_1}$}};
\draw [-stealth,thick]   (dgn1-gating-1.south) -- (dgn1-galu-1.west);

\draw [-,thick,color=red]   (dgn1-gating-1-up.south) --(dgn1-gating-2.north);

\draw [-,thick,color=blue]   (dgn1-gating-2-up.south) --(dgn1-gating-1.north);

\draw [-,thick,color=green]   (dgn1-gating-3-up.south) --(dgn1-gating-4.north);

\draw [-,thick,color=brown]   (dgn1-gating-4-up.south) --(dgn1-gating-3.north);

	
\end{tikzpicture}
}
\end{minipage}

\begin{minipage}{0.48\columnwidth}
\centering
\resizebox{0.99\columnwidth}{!}{
\begin{tikzpicture}


\node [] (dgn1-f-c4) at (-3.5,1.5){\tiny{$C^{\text{f}}_4$}};
\node [] (dgn1-f-c3) at (-5,1.5){\tiny{$C^{\text{f}}_3$}};
\node [] (dgn1-f-c2) at (-6.5,1.5){\tiny{$C^{\text{f}}_2$}};
\node [] (dgn1-f-c1) at (-8,1.5){\tiny{$C^{\text{f}}_1$}};
\node [] (dgn1-input-f) at (-9,1.5){$x^{\text{f}}$};
\draw [-stealth,thick]   (dgn1-f-c3.east) -- (dgn1-f-c4.west);
\draw [-stealth,thick]   (dgn1-f-c2.east) -- (dgn1-f-c3.west);
\draw [-stealth,thick]   (dgn1-f-c1.east) -- (dgn1-f-c2.west);
\draw [-stealth,thick]   (dgn1-input-f.east) -- (dgn1-f-c1.west);

\node []  (dgn1-output) at (-0.25,-2.5) {$\hat{y}(x)$};

\node [] (dgn1-smax) at (-1.25,-2.5){\tiny{FC}};
\draw [-stealth,thick]   (dgn1-smax.east)--(dgn1-output.west);

\node [rotate=-90] (dgn1-gap) at (-2,-2.5){\tiny{GAP}};
\draw [-stealth,thick]   (dgn1-gap.north)--(dgn1-smax.west);

\node [rotate=-90] (dgn1-galu-4) at (-2.75,-2.5){\tiny{GaLU}};
\draw [-stealth,thick]   (dgn1-galu-4.north)--(dgn1-gap.south);

\node [] (dgn1-v-c4) at (-3.5,-2.5){\tiny{$C^{\text{v}}_4$}};
\draw [-stealth,thick]   (dgn1-v-c4.east) -- (dgn1-galu-4.south);

\node [rotate=-90] (dgn1-galu-3) at (-4.25,-2.5){\tiny{GaLU}};
\draw [-stealth,thick]   (dgn1-galu-3.north) -- (dgn1-v-c4.west);

\node [] (dgn1-v-c3) at (-5,-2.5){\tiny{$C^{\text{v}}_3$}};
\draw [-stealth,thick]   (dgn1-v-c3.east) -- (dgn1-galu-3.south);

\node [rotate=-90] (dgn1-galu-2) at (-5.75,-2.5){\tiny{GaLU}};
\draw [-stealth,thick]   (dgn1-galu-2.north) -- (dgn1-v-c3.west);

\node [] (dgn1-v-c2) at (-6.5,-2.5){\tiny{$C^{\text{v}}_2$}};
\draw [-stealth,thick]   (dgn1-v-c2.east) -- (dgn1-galu-2.south);

\node [rotate=-90] (dgn1-galu-1) at (-7.25,-2.5){\tiny{GaLU}};
\draw [-stealth,thick]   (dgn1-galu-1.north) -- (dgn1-v-c2.west);

\node [] (dgn1-v-c1) at (-8,-2.5){\tiny{$C^{\text{v}}_1$}};
\draw [-stealth,thick]   (dgn1-v-c1.east) -- (dgn1-galu-1.south);

\node [] (dgn1-v-input) at (-9,-2.5){$x^{\text{v}}$};

\draw [-stealth,thick]   (dgn1-v-input.east) -- (dgn1-v-c1.west);

\node[] (dgn1-gating-4-up) at (-2.75,0.5){\tiny{$G_{4}$}};
\draw [-stealth,thick]   (dgn1-f-c4.east) to[out=-90,in=90] (dgn1-gating-4-up.north);

\node[] (dgn1-gating-3-up) at (-4.25,0.5){\tiny{$G_{3}$}};
\draw [-stealth,thick]   (dgn1-f-c3.east) to[out=-90,in=90] (dgn1-gating-3-up.north);

\node[] (dgn1-gating-2-up) at (-5.75,0.5){\tiny{$G_{2}$}};
\draw [-stealth,thick]   (dgn1-f-c2.east) to[out=-90,in=90] (dgn1-gating-2-up.north);

\node[] (dgn1-gating-1-up) at (-7.25,0.5){\tiny{$G_{1}$}};
\draw [-stealth,thick]   (dgn1-f-c1.east) to[out=-90,in=90] (dgn1-gating-1-up.north);

\node[] (dgn1-gating-4) at (-2.75,-1.5){\tiny{$G_{i_4}$}};
\draw [-stealth,thick]   (dgn1-gating-4.south) -- (dgn1-galu-4.west);

\node[] (dgn1-gating-3) at (-4.25,-1.5){\tiny{$G_{i_3}$}};
\draw [-stealth,thick]   (dgn1-gating-3.south) -- (dgn1-galu-3.west);

\node[] (dgn1-gating-2) at (-5.75,-1.5){\tiny{$G_{i_2}$}};
\draw [-stealth,thick]   (dgn1-gating-2.south) -- (dgn1-galu-2.west);

\node[] (dgn1-gating-1) at (-7.25,-1.5){\tiny{$G_{i_1}$}};
\draw [-stealth,thick]   (dgn1-gating-1.south) -- (dgn1-galu-1.west);

\draw [-,thick,color=red]   (dgn1-gating-1-up.south) --(dgn1-gating-1.north);

\draw [-,thick,color=blue]   (dgn1-gating-2-up.south) --(dgn1-gating-2.north);

\draw [-,thick,color=green]   (dgn1-gating-3-up.south) --(dgn1-gating-4.north);

\draw [-,thick,color=brown]   (dgn1-gating-4-up.south) --(dgn1-gating-3.north);

	
\end{tikzpicture}
}
\end{minipage}
\begin{minipage}{0.48\columnwidth}
\centering
\resizebox{0.99\columnwidth}{!}{
\begin{tikzpicture}


\node [] (dgn1-f-c4) at (-3.5,1.5){\tiny{$C^{\text{f}}_4$}};
\node [] (dgn1-f-c3) at (-5,1.5){\tiny{$C^{\text{f}}_3$}};
\node [] (dgn1-f-c2) at (-6.5,1.5){\tiny{$C^{\text{f}}_2$}};
\node [] (dgn1-f-c1) at (-8,1.5){\tiny{$C^{\text{f}}_1$}};
\node [] (dgn1-input-f) at (-9,1.5){$x^{\text{f}}$};
\draw [-stealth,thick]   (dgn1-f-c3.east) -- (dgn1-f-c4.west);
\draw [-stealth,thick]   (dgn1-f-c2.east) -- (dgn1-f-c3.west);
\draw [-stealth,thick]   (dgn1-f-c1.east) -- (dgn1-f-c2.west);
\draw [-stealth,thick]   (dgn1-input-f.east) -- (dgn1-f-c1.west);

\node []  (dgn1-output) at (-0.25,-2.5) {$\hat{y}(x)$};

\node [] (dgn1-smax) at (-1.25,-2.5){\tiny{FC}};
\draw [-stealth,thick]   (dgn1-smax.east)--(dgn1-output.west);

\node [rotate=-90] (dgn1-gap) at (-2,-2.5){\tiny{GAP}};
\draw [-stealth,thick]   (dgn1-gap.north)--(dgn1-smax.west);

\node [rotate=-90] (dgn1-galu-4) at (-2.75,-2.5){\tiny{GaLU}};
\draw [-stealth,thick]   (dgn1-galu-4.north)--(dgn1-gap.south);

\node [] (dgn1-v-c4) at (-3.5,-2.5){\tiny{$C^{\text{v}}_4$}};
\draw [-stealth,thick]   (dgn1-v-c4.east) -- (dgn1-galu-4.south);

\node [rotate=-90] (dgn1-galu-3) at (-4.25,-2.5){\tiny{GaLU}};
\draw [-stealth,thick]   (dgn1-galu-3.north) -- (dgn1-v-c4.west);

\node [] (dgn1-v-c3) at (-5,-2.5){\tiny{$C^{\text{v}}_3$}};
\draw [-stealth,thick]   (dgn1-v-c3.east) -- (dgn1-galu-3.south);

\node [rotate=-90] (dgn1-galu-2) at (-5.75,-2.5){\tiny{GaLU}};
\draw [-stealth,thick]   (dgn1-galu-2.north) -- (dgn1-v-c3.west);

\node [] (dgn1-v-c2) at (-6.5,-2.5){\tiny{$C^{\text{v}}_2$}};
\draw [-stealth,thick]   (dgn1-v-c2.east) -- (dgn1-galu-2.south);

\node [rotate=-90] (dgn1-galu-1) at (-7.25,-2.5){\tiny{GaLU}};
\draw [-stealth,thick]   (dgn1-galu-1.north) -- (dgn1-v-c2.west);

\node [] (dgn1-v-c1) at (-8,-2.5){\tiny{$C^{\text{v}}_1$}};
\draw [-stealth,thick]   (dgn1-v-c1.east) -- (dgn1-galu-1.south);

\node [] (dgn1-v-input) at (-9,-2.5){$x^{\text{v}}$};

\draw [-stealth,thick]   (dgn1-v-input.east) -- (dgn1-v-c1.west);

\node[] (dgn1-gating-4-up) at (-2.75,0.5){\tiny{$G_{4}$}};
\draw [-stealth,thick]   (dgn1-f-c4.east) to[out=-90,in=90] (dgn1-gating-4-up.north);

\node[] (dgn1-gating-3-up) at (-4.25,0.5){\tiny{$G_{3}$}};
\draw [-stealth,thick]   (dgn1-f-c3.east) to[out=-90,in=90] (dgn1-gating-3-up.north);

\node[] (dgn1-gating-2-up) at (-5.75,0.5){\tiny{$G_{2}$}};
\draw [-stealth,thick]   (dgn1-f-c2.east) to[out=-90,in=90] (dgn1-gating-2-up.north);

\node[] (dgn1-gating-1-up) at (-7.25,0.5){\tiny{$G_{1}$}};
\draw [-stealth,thick]   (dgn1-f-c1.east) to[out=-90,in=90] (dgn1-gating-1-up.north);

\node[] (dgn1-gating-4) at (-2.75,-1.5){\tiny{$G_{i_4}$}};
\draw [-stealth,thick]   (dgn1-gating-4.south) -- (dgn1-galu-4.west);

\node[] (dgn1-gating-3) at (-4.25,-1.5){\tiny{$G_{i_3}$}};
\draw [-stealth,thick]   (dgn1-gating-3.south) -- (dgn1-galu-3.west);

\node[] (dgn1-gating-2) at (-5.75,-1.5){\tiny{$G_{i_2}$}};
\draw [-stealth,thick]   (dgn1-gating-2.south) -- (dgn1-galu-2.west);

\node[] (dgn1-gating-1) at (-7.25,-1.5){\tiny{$G_{i_1}$}};
\draw [-stealth,thick]   (dgn1-gating-1.south) -- (dgn1-galu-1.west);

\draw [-,thick,color=red]   (dgn1-gating-1-up.south) --(dgn1-gating-4.north);

\draw [-,thick,color=blue]   (dgn1-gating-2-up.south) --(dgn1-gating-2.north);

\draw [-,thick,color=green]   (dgn1-gating-3-up.south) --(dgn1-gating-1.north);

\draw [-,thick,color=brown]   (dgn1-gating-4-up.south) --(dgn1-gating-3.north);

	
\end{tikzpicture}
}
\end{minipage}

\begin{minipage}{0.48\columnwidth}
\centering
\resizebox{0.99\columnwidth}{!}{
\begin{tikzpicture}


\node [] (dgn1-f-c4) at (-3.5,1.5){\tiny{$C^{\text{f}}_4$}};
\node [] (dgn1-f-c3) at (-5,1.5){\tiny{$C^{\text{f}}_3$}};
\node [] (dgn1-f-c2) at (-6.5,1.5){\tiny{$C^{\text{f}}_2$}};
\node [] (dgn1-f-c1) at (-8,1.5){\tiny{$C^{\text{f}}_1$}};
\node [] (dgn1-input-f) at (-9,1.5){$x^{\text{f}}$};
\draw [-stealth,thick]   (dgn1-f-c3.east) -- (dgn1-f-c4.west);
\draw [-stealth,thick]   (dgn1-f-c2.east) -- (dgn1-f-c3.west);
\draw [-stealth,thick]   (dgn1-f-c1.east) -- (dgn1-f-c2.west);
\draw [-stealth,thick]   (dgn1-input-f.east) -- (dgn1-f-c1.west);

\node []  (dgn1-output) at (-0.25,-2.5) {$\hat{y}(x)$};

\node [] (dgn1-smax) at (-1.25,-2.5){\tiny{FC}};
\draw [-stealth,thick]   (dgn1-smax.east)--(dgn1-output.west);

\node [rotate=-90] (dgn1-gap) at (-2,-2.5){\tiny{GAP}};
\draw [-stealth,thick]   (dgn1-gap.north)--(dgn1-smax.west);

\node [rotate=-90] (dgn1-galu-4) at (-2.75,-2.5){\tiny{GaLU}};
\draw [-stealth,thick]   (dgn1-galu-4.north)--(dgn1-gap.south);

\node [] (dgn1-v-c4) at (-3.5,-2.5){\tiny{$C^{\text{v}}_4$}};
\draw [-stealth,thick]   (dgn1-v-c4.east) -- (dgn1-galu-4.south);

\node [rotate=-90] (dgn1-galu-3) at (-4.25,-2.5){\tiny{GaLU}};
\draw [-stealth,thick]   (dgn1-galu-3.north) -- (dgn1-v-c4.west);

\node [] (dgn1-v-c3) at (-5,-2.5){\tiny{$C^{\text{v}}_3$}};
\draw [-stealth,thick]   (dgn1-v-c3.east) -- (dgn1-galu-3.south);

\node [rotate=-90] (dgn1-galu-2) at (-5.75,-2.5){\tiny{GaLU}};
\draw [-stealth,thick]   (dgn1-galu-2.north) -- (dgn1-v-c3.west);

\node [] (dgn1-v-c2) at (-6.5,-2.5){\tiny{$C^{\text{v}}_2$}};
\draw [-stealth,thick]   (dgn1-v-c2.east) -- (dgn1-galu-2.south);

\node [rotate=-90] (dgn1-galu-1) at (-7.25,-2.5){\tiny{GaLU}};
\draw [-stealth,thick]   (dgn1-galu-1.north) -- (dgn1-v-c2.west);

\node [] (dgn1-v-c1) at (-8,-2.5){\tiny{$C^{\text{v}}_1$}};
\draw [-stealth,thick]   (dgn1-v-c1.east) -- (dgn1-galu-1.south);

\node [] (dgn1-v-input) at (-9,-2.5){$x^{\text{v}}$};

\draw [-stealth,thick]   (dgn1-v-input.east) -- (dgn1-v-c1.west);

\node[] (dgn1-gating-4-up) at (-2.75,0.5){\tiny{$G_{4}$}};
\draw [-stealth,thick]   (dgn1-f-c4.east) to[out=-90,in=90] (dgn1-gating-4-up.north);

\node[] (dgn1-gating-3-up) at (-4.25,0.5){\tiny{$G_{3}$}};
\draw [-stealth,thick]   (dgn1-f-c3.east) to[out=-90,in=90] (dgn1-gating-3-up.north);

\node[] (dgn1-gating-2-up) at (-5.75,0.5){\tiny{$G_{2}$}};
\draw [-stealth,thick]   (dgn1-f-c2.east) to[out=-90,in=90] (dgn1-gating-2-up.north);

\node[] (dgn1-gating-1-up) at (-7.25,0.5){\tiny{$G_{1}$}};
\draw [-stealth,thick]   (dgn1-f-c1.east) to[out=-90,in=90] (dgn1-gating-1-up.north);

\node[] (dgn1-gating-4) at (-2.75,-1.5){\tiny{$G_{i_4}$}};
\draw [-stealth,thick]   (dgn1-gating-4.south) -- (dgn1-galu-4.west);

\node[] (dgn1-gating-3) at (-4.25,-1.5){\tiny{$G_{i_3}$}};
\draw [-stealth,thick]   (dgn1-gating-3.south) -- (dgn1-galu-3.west);

\node[] (dgn1-gating-2) at (-5.75,-1.5){\tiny{$G_{i_2}$}};
\draw [-stealth,thick]   (dgn1-gating-2.south) -- (dgn1-galu-2.west);

\node[] (dgn1-gating-1) at (-7.25,-1.5){\tiny{$G_{i_1}$}};
\draw [-stealth,thick]   (dgn1-gating-1.south) -- (dgn1-galu-1.west);

\draw [-,thick,color=red]   (dgn1-gating-1-up.south) --(dgn1-gating-2.north);

\draw [-,thick,color=blue]   (dgn1-gating-2-up.south) --(dgn1-gating-4.north);

\draw [-,thick,color=green]   (dgn1-gating-3-up.south) --(dgn1-gating-1.north);

\draw [-,thick,color=brown]   (dgn1-gating-4-up.south) --(dgn1-gating-3.north);

	
\end{tikzpicture}
}
\end{minipage}
\begin{minipage}{0.48\columnwidth}
\centering
\resizebox{0.99\columnwidth}{!}{
\begin{tikzpicture}


\node [] (dgn1-f-c4) at (-3.5,1.5){\tiny{$C^{\text{f}}_4$}};
\node [] (dgn1-f-c3) at (-5,1.5){\tiny{$C^{\text{f}}_3$}};
\node [] (dgn1-f-c2) at (-6.5,1.5){\tiny{$C^{\text{f}}_2$}};
\node [] (dgn1-f-c1) at (-8,1.5){\tiny{$C^{\text{f}}_1$}};
\node [] (dgn1-input-f) at (-9,1.5){$x^{\text{f}}$};
\draw [-stealth,thick]   (dgn1-f-c3.east) -- (dgn1-f-c4.west);
\draw [-stealth,thick]   (dgn1-f-c2.east) -- (dgn1-f-c3.west);
\draw [-stealth,thick]   (dgn1-f-c1.east) -- (dgn1-f-c2.west);
\draw [-stealth,thick]   (dgn1-input-f.east) -- (dgn1-f-c1.west);

\node []  (dgn1-output) at (-0.25,-2.5) {$\hat{y}(x)$};

\node [] (dgn1-smax) at (-1.25,-2.5){\tiny{FC}};
\draw [-stealth,thick]   (dgn1-smax.east)--(dgn1-output.west);

\node [rotate=-90] (dgn1-gap) at (-2,-2.5){\tiny{GAP}};
\draw [-stealth,thick]   (dgn1-gap.north)--(dgn1-smax.west);

\node [rotate=-90] (dgn1-galu-4) at (-2.75,-2.5){\tiny{GaLU}};
\draw [-stealth,thick]   (dgn1-galu-4.north)--(dgn1-gap.south);

\node [] (dgn1-v-c4) at (-3.5,-2.5){\tiny{$C^{\text{v}}_4$}};
\draw [-stealth,thick]   (dgn1-v-c4.east) -- (dgn1-galu-4.south);

\node [rotate=-90] (dgn1-galu-3) at (-4.25,-2.5){\tiny{GaLU}};
\draw [-stealth,thick]   (dgn1-galu-3.north) -- (dgn1-v-c4.west);

\node [] (dgn1-v-c3) at (-5,-2.5){\tiny{$C^{\text{v}}_3$}};
\draw [-stealth,thick]   (dgn1-v-c3.east) -- (dgn1-galu-3.south);

\node [rotate=-90] (dgn1-galu-2) at (-5.75,-2.5){\tiny{GaLU}};
\draw [-stealth,thick]   (dgn1-galu-2.north) -- (dgn1-v-c3.west);

\node [] (dgn1-v-c2) at (-6.5,-2.5){\tiny{$C^{\text{v}}_2$}};
\draw [-stealth,thick]   (dgn1-v-c2.east) -- (dgn1-galu-2.south);

\node [rotate=-90] (dgn1-galu-1) at (-7.25,-2.5){\tiny{GaLU}};
\draw [-stealth,thick]   (dgn1-galu-1.north) -- (dgn1-v-c2.west);

\node [] (dgn1-v-c1) at (-8,-2.5){\tiny{$C^{\text{v}}_1$}};
\draw [-stealth,thick]   (dgn1-v-c1.east) -- (dgn1-galu-1.south);

\node [] (dgn1-v-input) at (-9,-2.5){$x^{\text{v}}$};

\draw [-stealth,thick]   (dgn1-v-input.east) -- (dgn1-v-c1.west);

\node[] (dgn1-gating-4-up) at (-2.75,0.5){\tiny{$G_{4}$}};
\draw [-stealth,thick]   (dgn1-f-c4.east) to[out=-90,in=90] (dgn1-gating-4-up.north);

\node[] (dgn1-gating-3-up) at (-4.25,0.5){\tiny{$G_{3}$}};
\draw [-stealth,thick]   (dgn1-f-c3.east) to[out=-90,in=90] (dgn1-gating-3-up.north);

\node[] (dgn1-gating-2-up) at (-5.75,0.5){\tiny{$G_{2}$}};
\draw [-stealth,thick]   (dgn1-f-c2.east) to[out=-90,in=90] (dgn1-gating-2-up.north);

\node[] (dgn1-gating-1-up) at (-7.25,0.5){\tiny{$G_{1}$}};
\draw [-stealth,thick]   (dgn1-f-c1.east) to[out=-90,in=90] (dgn1-gating-1-up.north);

\node[] (dgn1-gating-4) at (-2.75,-1.5){\tiny{$G_{i_4}$}};
\draw [-stealth,thick]   (dgn1-gating-4.south) -- (dgn1-galu-4.west);

\node[] (dgn1-gating-3) at (-4.25,-1.5){\tiny{$G_{i_3}$}};
\draw [-stealth,thick]   (dgn1-gating-3.south) -- (dgn1-galu-3.west);

\node[] (dgn1-gating-2) at (-5.75,-1.5){\tiny{$G_{i_2}$}};
\draw [-stealth,thick]   (dgn1-gating-2.south) -- (dgn1-galu-2.west);

\node[] (dgn1-gating-1) at (-7.25,-1.5){\tiny{$G_{i_1}$}};
\draw [-stealth,thick]   (dgn1-gating-1.south) -- (dgn1-galu-1.west);

\draw [-,thick,color=red]   (dgn1-gating-1-up.south) --(dgn1-gating-1.north);

\draw [-,thick,color=blue]   (dgn1-gating-2-up.south) --(dgn1-gating-4.north);

\draw [-,thick,color=green]   (dgn1-gating-3-up.south) --(dgn1-gating-2.north);

\draw [-,thick,color=brown]   (dgn1-gating-4-up.south) --(dgn1-gating-3.north);

	
\end{tikzpicture}
}
\end{minipage}

\begin{minipage}{0.48\columnwidth}
\centering
\resizebox{0.99\columnwidth}{!}{
\begin{tikzpicture}


\node [] (dgn1-f-c4) at (-3.5,1.5){\tiny{$C^{\text{f}}_4$}};
\node [] (dgn1-f-c3) at (-5,1.5){\tiny{$C^{\text{f}}_3$}};
\node [] (dgn1-f-c2) at (-6.5,1.5){\tiny{$C^{\text{f}}_2$}};
\node [] (dgn1-f-c1) at (-8,1.5){\tiny{$C^{\text{f}}_1$}};
\node [] (dgn1-input-f) at (-9,1.5){$x^{\text{f}}$};
\draw [-stealth,thick]   (dgn1-f-c3.east) -- (dgn1-f-c4.west);
\draw [-stealth,thick]   (dgn1-f-c2.east) -- (dgn1-f-c3.west);
\draw [-stealth,thick]   (dgn1-f-c1.east) -- (dgn1-f-c2.west);
\draw [-stealth,thick]   (dgn1-input-f.east) -- (dgn1-f-c1.west);

\node []  (dgn1-output) at (-0.25,-2.5) {$\hat{y}(x)$};

\node [] (dgn1-smax) at (-1.25,-2.5){\tiny{FC}};
\draw [-stealth,thick]   (dgn1-smax.east)--(dgn1-output.west);

\node [rotate=-90] (dgn1-gap) at (-2,-2.5){\tiny{GAP}};
\draw [-stealth,thick]   (dgn1-gap.north)--(dgn1-smax.west);

\node [rotate=-90] (dgn1-galu-4) at (-2.75,-2.5){\tiny{GaLU}};
\draw [-stealth,thick]   (dgn1-galu-4.north)--(dgn1-gap.south);

\node [] (dgn1-v-c4) at (-3.5,-2.5){\tiny{$C^{\text{v}}_4$}};
\draw [-stealth,thick]   (dgn1-v-c4.east) -- (dgn1-galu-4.south);

\node [rotate=-90] (dgn1-galu-3) at (-4.25,-2.5){\tiny{GaLU}};
\draw [-stealth,thick]   (dgn1-galu-3.north) -- (dgn1-v-c4.west);

\node [] (dgn1-v-c3) at (-5,-2.5){\tiny{$C^{\text{v}}_3$}};
\draw [-stealth,thick]   (dgn1-v-c3.east) -- (dgn1-galu-3.south);

\node [rotate=-90] (dgn1-galu-2) at (-5.75,-2.5){\tiny{GaLU}};
\draw [-stealth,thick]   (dgn1-galu-2.north) -- (dgn1-v-c3.west);

\node [] (dgn1-v-c2) at (-6.5,-2.5){\tiny{$C^{\text{v}}_2$}};
\draw [-stealth,thick]   (dgn1-v-c2.east) -- (dgn1-galu-2.south);

\node [rotate=-90] (dgn1-galu-1) at (-7.25,-2.5){\tiny{GaLU}};
\draw [-stealth,thick]   (dgn1-galu-1.north) -- (dgn1-v-c2.west);

\node [] (dgn1-v-c1) at (-8,-2.5){\tiny{$C^{\text{v}}_1$}};
\draw [-stealth,thick]   (dgn1-v-c1.east) -- (dgn1-galu-1.south);

\node [] (dgn1-v-input) at (-9,-2.5){$x^{\text{v}}$};

\draw [-stealth,thick]   (dgn1-v-input.east) -- (dgn1-v-c1.west);

\node[] (dgn1-gating-4-up) at (-2.75,0.5){\tiny{$G_{4}$}};
\draw [-stealth,thick]   (dgn1-f-c4.east) to[out=-90,in=90] (dgn1-gating-4-up.north);

\node[] (dgn1-gating-3-up) at (-4.25,0.5){\tiny{$G_{3}$}};
\draw [-stealth,thick]   (dgn1-f-c3.east) to[out=-90,in=90] (dgn1-gating-3-up.north);

\node[] (dgn1-gating-2-up) at (-5.75,0.5){\tiny{$G_{2}$}};
\draw [-stealth,thick]   (dgn1-f-c2.east) to[out=-90,in=90] (dgn1-gating-2-up.north);

\node[] (dgn1-gating-1-up) at (-7.25,0.5){\tiny{$G_{1}$}};
\draw [-stealth,thick]   (dgn1-f-c1.east) to[out=-90,in=90] (dgn1-gating-1-up.north);

\node[] (dgn1-gating-4) at (-2.75,-1.5){\tiny{$G_{i_4}$}};
\draw [-stealth,thick]   (dgn1-gating-4.south) -- (dgn1-galu-4.west);

\node[] (dgn1-gating-3) at (-4.25,-1.5){\tiny{$G_{i_3}$}};
\draw [-stealth,thick]   (dgn1-gating-3.south) -- (dgn1-galu-3.west);

\node[] (dgn1-gating-2) at (-5.75,-1.5){\tiny{$G_{i_2}$}};
\draw [-stealth,thick]   (dgn1-gating-2.south) -- (dgn1-galu-2.west);

\node[] (dgn1-gating-1) at (-7.25,-1.5){\tiny{$G_{i_1}$}};
\draw [-stealth,thick]   (dgn1-gating-1.south) -- (dgn1-galu-1.west);

\draw [-,thick,color=red]   (dgn1-gating-1-up.south) --(dgn1-gating-4.north);

\draw [-,thick,color=blue]   (dgn1-gating-2-up.south) --(dgn1-gating-1.north);

\draw [-,thick,color=green]   (dgn1-gating-3-up.south) --(dgn1-gating-2.north);

\draw [-,thick,color=brown]   (dgn1-gating-4-up.south) --(dgn1-gating-3.north);

	
\end{tikzpicture}
}
\end{minipage}
\begin{minipage}{0.48\columnwidth}
\centering
\resizebox{0.99\columnwidth}{!}{
\begin{tikzpicture}


\node [] (dgn1-f-c4) at (-3.5,1.5){\tiny{$C^{\text{f}}_4$}};
\node [] (dgn1-f-c3) at (-5,1.5){\tiny{$C^{\text{f}}_3$}};
\node [] (dgn1-f-c2) at (-6.5,1.5){\tiny{$C^{\text{f}}_2$}};
\node [] (dgn1-f-c1) at (-8,1.5){\tiny{$C^{\text{f}}_1$}};
\node [] (dgn1-input-f) at (-9,1.5){$x^{\text{f}}$};
\draw [-stealth,thick]   (dgn1-f-c3.east) -- (dgn1-f-c4.west);
\draw [-stealth,thick]   (dgn1-f-c2.east) -- (dgn1-f-c3.west);
\draw [-stealth,thick]   (dgn1-f-c1.east) -- (dgn1-f-c2.west);
\draw [-stealth,thick]   (dgn1-input-f.east) -- (dgn1-f-c1.west);

\node []  (dgn1-output) at (-0.25,-2.5) {$\hat{y}(x)$};

\node [] (dgn1-smax) at (-1.25,-2.5){\tiny{FC}};
\draw [-stealth,thick]   (dgn1-smax.east)--(dgn1-output.west);

\node [rotate=-90] (dgn1-gap) at (-2,-2.5){\tiny{GAP}};
\draw [-stealth,thick]   (dgn1-gap.north)--(dgn1-smax.west);

\node [rotate=-90] (dgn1-galu-4) at (-2.75,-2.5){\tiny{GaLU}};
\draw [-stealth,thick]   (dgn1-galu-4.north)--(dgn1-gap.south);

\node [] (dgn1-v-c4) at (-3.5,-2.5){\tiny{$C^{\text{v}}_4$}};
\draw [-stealth,thick]   (dgn1-v-c4.east) -- (dgn1-galu-4.south);

\node [rotate=-90] (dgn1-galu-3) at (-4.25,-2.5){\tiny{GaLU}};
\draw [-stealth,thick]   (dgn1-galu-3.north) -- (dgn1-v-c4.west);

\node [] (dgn1-v-c3) at (-5,-2.5){\tiny{$C^{\text{v}}_3$}};
\draw [-stealth,thick]   (dgn1-v-c3.east) -- (dgn1-galu-3.south);

\node [rotate=-90] (dgn1-galu-2) at (-5.75,-2.5){\tiny{GaLU}};
\draw [-stealth,thick]   (dgn1-galu-2.north) -- (dgn1-v-c3.west);

\node [] (dgn1-v-c2) at (-6.5,-2.5){\tiny{$C^{\text{v}}_2$}};
\draw [-stealth,thick]   (dgn1-v-c2.east) -- (dgn1-galu-2.south);

\node [rotate=-90] (dgn1-galu-1) at (-7.25,-2.5){\tiny{GaLU}};
\draw [-stealth,thick]   (dgn1-galu-1.north) -- (dgn1-v-c2.west);

\node [] (dgn1-v-c1) at (-8,-2.5){\tiny{$C^{\text{v}}_1$}};
\draw [-stealth,thick]   (dgn1-v-c1.east) -- (dgn1-galu-1.south);

\node [] (dgn1-v-input) at (-9,-2.5){$x^{\text{v}}$};

\draw [-stealth,thick]   (dgn1-v-input.east) -- (dgn1-v-c1.west);

\node[] (dgn1-gating-4-up) at (-2.75,0.5){\tiny{$G_{4}$}};
\draw [-stealth,thick]   (dgn1-f-c4.east) to[out=-90,in=90] (dgn1-gating-4-up.north);

\node[] (dgn1-gating-3-up) at (-4.25,0.5){\tiny{$G_{3}$}};
\draw [-stealth,thick]   (dgn1-f-c3.east) to[out=-90,in=90] (dgn1-gating-3-up.north);

\node[] (dgn1-gating-2-up) at (-5.75,0.5){\tiny{$G_{2}$}};
\draw [-stealth,thick]   (dgn1-f-c2.east) to[out=-90,in=90] (dgn1-gating-2-up.north);

\node[] (dgn1-gating-1-up) at (-7.25,0.5){\tiny{$G_{1}$}};
\draw [-stealth,thick]   (dgn1-f-c1.east) to[out=-90,in=90] (dgn1-gating-1-up.north);

\node[] (dgn1-gating-4) at (-2.75,-1.5){\tiny{$G_{i_4}$}};
\draw [-stealth,thick]   (dgn1-gating-4.south) -- (dgn1-galu-4.west);

\node[] (dgn1-gating-3) at (-4.25,-1.5){\tiny{$G_{i_3}$}};
\draw [-stealth,thick]   (dgn1-gating-3.south) -- (dgn1-galu-3.west);

\node[] (dgn1-gating-2) at (-5.75,-1.5){\tiny{$G_{i_2}$}};
\draw [-stealth,thick]   (dgn1-gating-2.south) -- (dgn1-galu-2.west);

\node[] (dgn1-gating-1) at (-7.25,-1.5){\tiny{$G_{i_1}$}};
\draw [-stealth,thick]   (dgn1-gating-1.south) -- (dgn1-galu-1.west);

\draw [-,thick,color=red]   (dgn1-gating-1-up.south) --(dgn1-gating-4.north);

\draw [-,thick,color=blue]   (dgn1-gating-2-up.south) --(dgn1-gating-2.north);

\draw [-,thick,color=green]   (dgn1-gating-3-up.south) --(dgn1-gating-3.north);

\draw [-,thick,color=brown]   (dgn1-gating-4-up.south) --(dgn1-gating-1.north);

	
\end{tikzpicture}
}
\end{minipage}
\caption{Shows the permutations $13-24$ C4GAP-DLGN in Table I of \Cref{fig:c4gap}.}
\label{fig:perm1}
\end{figure}
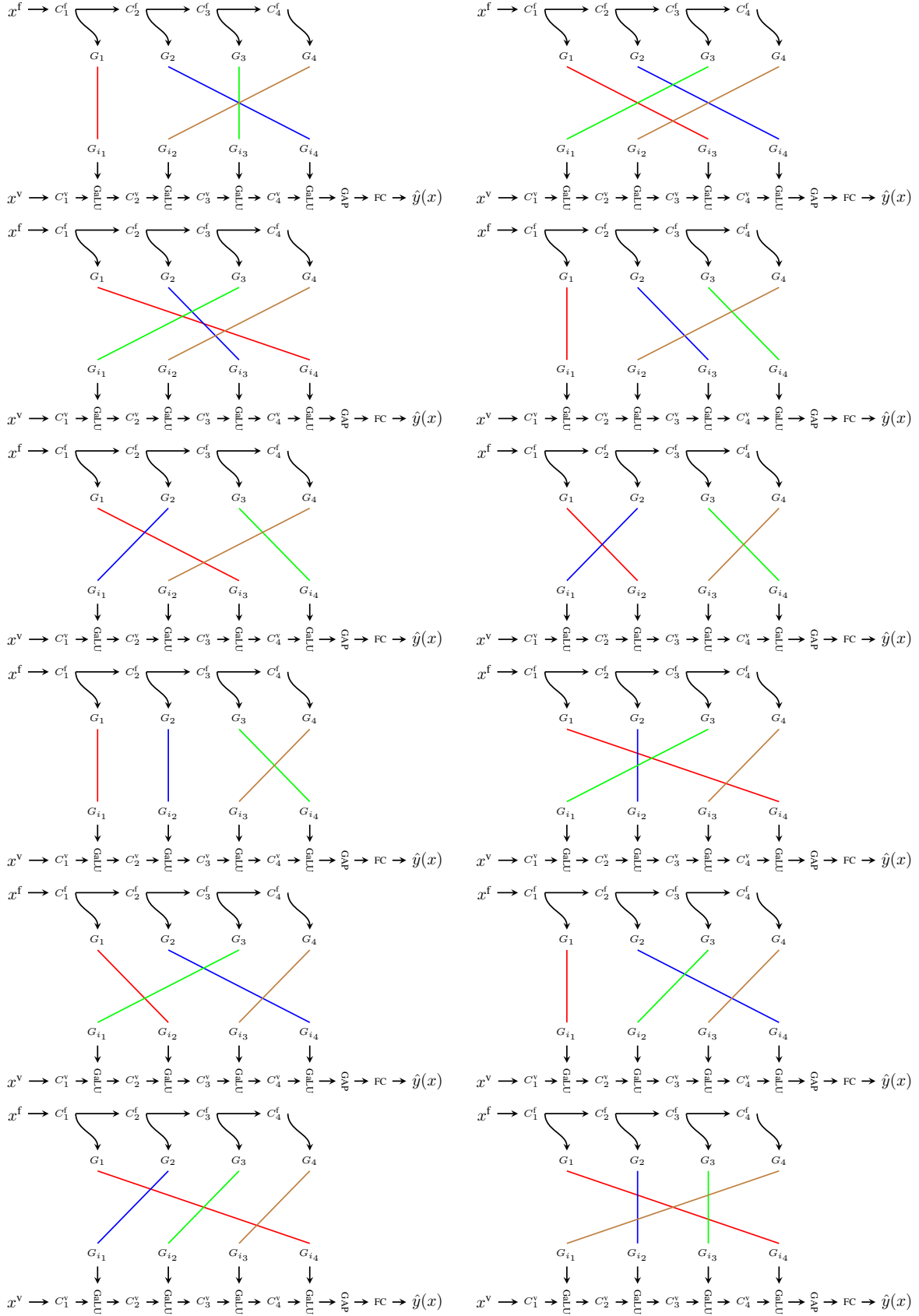

\begin{figure}
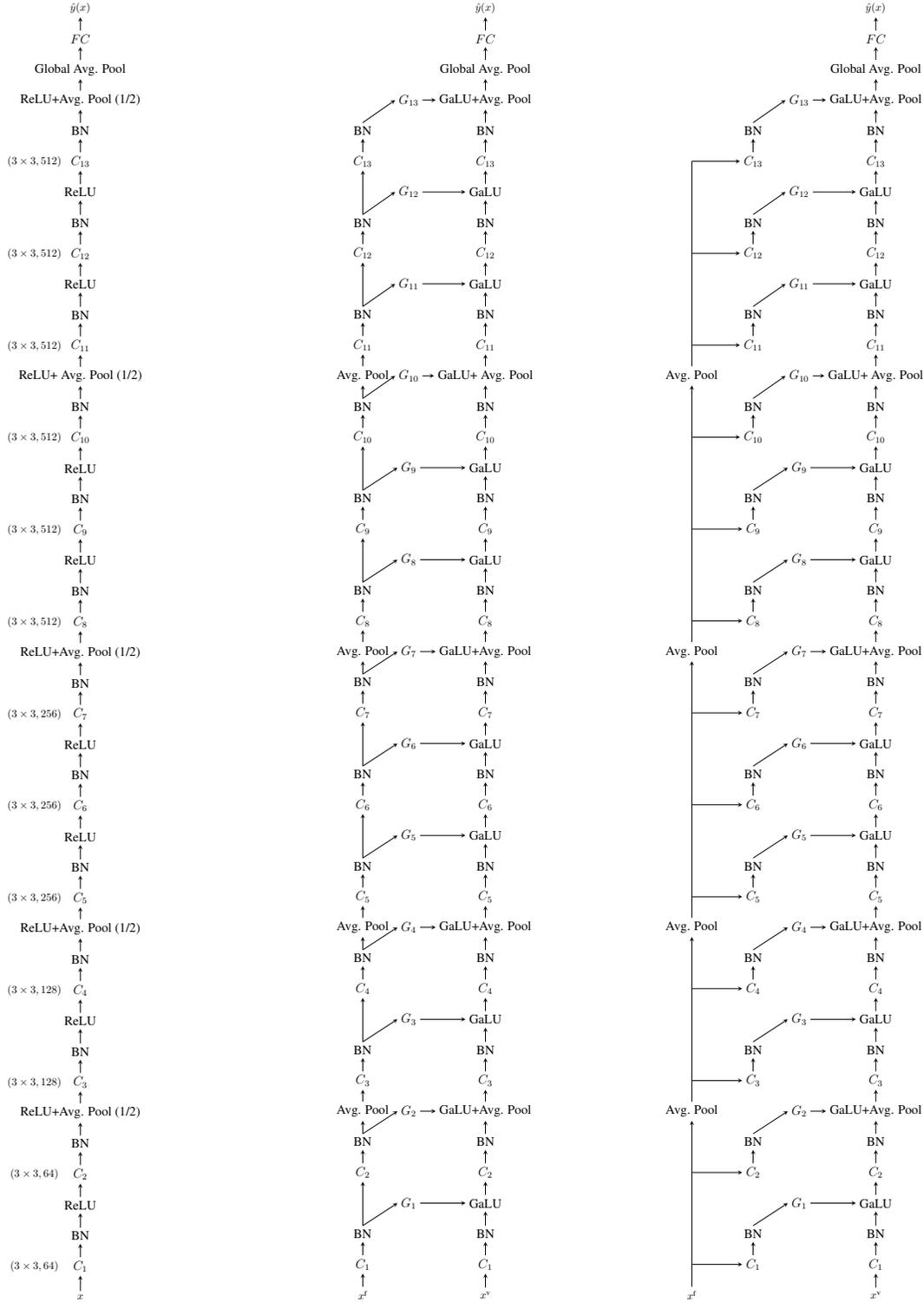

\centering
\begin{minipage}{0.3\columnwidth}
\resizebox{!}{20cm}{
\begin{tikzpicture}

\node [] (dgn-ReLU-14) at (4,42){$\hat{y}(x)$};

\node [] (dgn-bn-14) at (4,41){\large{$FC$}};
\draw [-stealth,thick]   (dgn-bn-14.north) -- (dgn-ReLU-14.south);

\node [] (dgn-c14) at (4,40){\large{Global Avg. Pool}};
\draw [-stealth,thick]   (dgn-c14.north) -- (dgn-bn-14.south);

\node [] (dgn-ReLU-13) at (4,39){\large{ReLU+Avg. Pool (1/2)}};
\draw [-stealth,thick]   (dgn-ReLU-13.north) -- (dgn-c14.south);

\node [] (dgn-bn-13) at (4,38){\large{BN}};
\draw [-stealth,thick]   (dgn-bn-13.north) -- (dgn-ReLU-13.south);

\node [] (dgn-c13-text) at (2.5,37){$(3\times 3, 512)$};
\node [] (dgn-c13) at (4,37){\large{$C_{13}$}};
\draw [-stealth,thick]   (dgn-c13.north) -- (dgn-bn-13.south);

\node [] (dgn-ReLU-12) at (4,36){\large{ReLU}};
\draw [-stealth,thick]   (dgn-ReLU-12.north) -- (dgn-c13.south);

\node [] (dgn-bn-12) at (4,35){\large{BN}};
\draw [-stealth,thick]   (dgn-bn-12.north) -- (dgn-ReLU-12.south);

\node [] (dgn-c12-text) at (2.5,34){$(3\times 3, 512)$};
\node [] (dgn-c12) at (4,34){\large{$C_{12}$}};
\draw [-stealth,thick]   (dgn-c12.north) -- (dgn-bn-12.south);

\node [] (dgn-ReLU-11) at (4,33){\large{ReLU}};
\draw [-stealth,thick]   (dgn-ReLU-11.north) -- (dgn-c12.south);

\node [] (dgn-bn-11) at (4,32){\large{BN}};
\draw [-stealth,thick]   (dgn-bn-11.north) -- (dgn-ReLU-11.south);

\node [] (dgn-c11-text) at (2.5,31){$(3\times 3, 512)$};
\node [] (dgn-c11) at (4,31){\large{$C_{11}$}};
\draw [-stealth,thick]   (dgn-c11.north) -- (dgn-bn-11.south);

\node [] (dgn-ReLU-10) at (4,30){\large{ReLU+ Avg. Pool (1/2)}};
\draw [-stealth,thick]   (dgn-ReLU-10.north) -- (dgn-c11.south);

\node [] (dgn-bn-10) at (4,29){\large{BN}};
\draw [-stealth,thick]   (dgn-bn-10.north) -- (dgn-ReLU-10.south);

\node [] (dgn-c10-text) at (2.5,28){$(3\times 3, 512)$};
\node [] (dgn-c10) at (4,28){\large{$C_{10}$}};
\draw [-stealth,thick]   (dgn-c10.north) -- (dgn-bn-10.south);

\node [] (dgn-ReLU-9) at (4,27){\large{ReLU}};
\draw [-stealth,thick]   (dgn-ReLU-9.north) -- (dgn-c10.south);

\node [] (dgn-bn-9) at (4,26){\large{BN}};
\draw [-stealth,thick]   (dgn-bn-9.north) -- (dgn-ReLU-9.south);

\node [] (dgn-c9-text) at (2.5,25){$(3\times 3, 512)$};
\node [] (dgn-c9) at (4,25){\large{$C_9$}};
\draw [-stealth,thick]   (dgn-c9.north) -- (dgn-bn-9.south);

\node [] (dgn-ReLU-8) at (4,24){\large{ReLU}};
\draw [-stealth,thick]   (dgn-ReLU-8.north) -- (dgn-c9.south);

\node [] (dgn-bn-8) at (4,23){\large{BN}};
\draw [-stealth,thick]   (dgn-bn-8.north) -- (dgn-ReLU-8.south);

\node [] (dgn-c8-text) at (2.5,22){$(3\times 3, 512)$};
\node [] (dgn-c8) at (4,22){\large{$C_8$}};
\draw [-stealth,thick]   (dgn-c8.north) -- (dgn-bn-8.south);

\node [] (dgn-ReLU-7) at (4,21){\large{ReLU+Avg. Pool (1/2)}};
\draw [-stealth,thick]   (dgn-ReLU-7.north) -- (dgn-c8.south);

\node [] (dgn-bn-7) at (4,20){\large{BN}};
\draw [-stealth,thick]   (dgn-bn-7.north) -- (dgn-ReLU-7.south);

\node [] (dgn-c7-text) at (2.5,19){$(3\times 3,256)$};
\node [] (dgn-c7) at (4,19){\large{$C_7$}};
\draw [-stealth,thick]   (dgn-c7.north) -- (dgn-bn-7.south);

\node [] (dgn-ReLU-6) at (4,18){\large{ReLU}};
\draw [-stealth,thick]   (dgn-ReLU-6.north) -- (dgn-c7.south);

\node [] (dgn-bn-6) at (4,17){\large{BN}};
\draw [-stealth,thick]   (dgn-bn-6.north) -- (dgn-ReLU-6.south);

\node [] (dgn-c6-text) at (2.5,16){$(3\times 3,256)$};
\node [] (dgn-c6) at (4,16){\large{$C_6$}};
\draw [-stealth,thick]   (dgn-c6.north) -- (dgn-bn-6.south);

\node [] (dgn-ReLU-5) at (4,15){\large{ReLU}};
\draw [-stealth,thick]   (dgn-ReLU-5.north) -- (dgn-c6.south);

\node [] (dgn-bn-5) at (4,14){\large{BN}};
\draw [-stealth,thick]   (dgn-bn-5.north) -- (dgn-ReLU-5.south);

\node [] (dgn-c5-text) at (2.5,13){$(3\times 3,256)$};
\node [] (dgn-c5) at (4,13){\large{$C_5$}};
\draw [-stealth,thick]   (dgn-c5.north) -- (dgn-bn-5.south);

\node [] (dgn-ReLU-4) at (4,12){\large{ReLU+Avg. Pool (1/2)}};
\draw [-stealth,thick]   (dgn-ReLU-4.north) -- (dgn-c5.south);

\node [] (dgn-bn-4) at (4,11){\large{BN}};
\draw [-stealth,thick]   (dgn-bn-4.north) -- (dgn-ReLU-4.south);

\node [] (dgn-c4-text) at (2.5,10){$(3\times 3,128)$};
\node [] (dgn-c4) at (4,10){\large{$C_4$}};
\draw [-stealth,thick]   (dgn-c4.north) -- (dgn-bn-4.south);

\node [] (dgn-ReLU-3) at (4,9){\large{ReLU}};
\draw [-stealth,thick]   (dgn-ReLU-3.north) -- (dgn-c4.south);

\node [] (dgn-bn-3) at (4,8){\large{BN}};
\draw [-stealth,thick]   (dgn-bn-3.north) -- (dgn-ReLU-3.south);

\node [] (dgn-c3-text) at (2.5,7){$(3\times 3,128)$};
\node [] (dgn-c3) at (4,7){\large{$C_3$}};
\draw [-stealth,thick]   (dgn-c3.north) -- (dgn-bn-3.south);

\node [] (dgn-ReLU-2) at (4,6){\large{ReLU+Avg. Pool (1/2)}};
\draw [-stealth,thick]   (dgn-ReLU-2.north) -- (dgn-c3.south);

\node [] (dgn-bn-2) at (4,5){\large{BN}};
\draw [-stealth,thick]   (dgn-bn-2.north) -- (dgn-ReLU-2.south);

\node [] (dgn-c2-text) at (2.5,4){$(3\times 3,64)$};
\node [] (dgn-c2) at (4,4){\large{$C_2$}};
\draw [-stealth,thick]   (dgn-c2.north) -- (dgn-bn-2.south);

\node [] (dgn-ReLU-1) at (4,3){\large{ReLU}};
\draw [-stealth,thick]   (dgn-ReLU-1.north) -- (dgn-c2.south);

\node [] (dgn-bn-1) at (4,2){\large{BN}};
\draw [-stealth,thick]   (dgn-bn-1.north) -- (dgn-ReLU-1.south);

\node [] (dnn-c1-text) at (2.5,1){$(3\times 3,64)$};
\node [] (dgn-c1) at (4,1){\large{$C_1$}};
\draw [-stealth,thick]   (dgn-c1.north) -- (dgn-bn-1.south);

\node [] (dgn-input) at (4,0){$x$};
\draw [-stealth,thick]   (dgn-input.north) -- (dgn-c1.south);


\end{tikzpicture}
}
\end{minipage}
\begin{minipage}{0.3\columnwidth}
\resizebox{!}{20cm}{
\input{fig-vgg-dlgn}
}
\end{minipage}
\begin{minipage}{0.3\columnwidth}
\resizebox{!}{20cm}{
\input{fig-vgg-dlgn-sl}
}
\end{minipage}
\caption{Shows VGG-16 (left), VGG-16-DLGN (middle), VGG-16-DLGN-SF(right).}
\label{fig:vggnets}
\end{figure}

\end{document}